\documentclass[twoside,11pt, letter]{article}


\usepackage{fixltx2e} 

\usepackage{cmap}

\usepackage[T1]{fontenc}
\usepackage{lmodern}
\usepackage[utf8]{inputenc}

\usepackage{verbatim}

\usepackage{booktabs}       

\usepackage{nicefrac}       

\usepackage{microtype}      

\usepackage{graphicx} 

\usepackage[colorlinks]{hyperref}
\usepackage{url}


\usepackage{geometry} 
\geometry{letterpaper} 
\geometry{left=1in,right=1in,top=1in,bottom=1in} 


\usepackage[toc,page]{appendix}
\usepackage{mathrsfs}
\usepackage{natbib}

\bibliographystyle{abbrvnat}

\bibpunct{(}{)}{;}{a}{,}{,}


\usepackage[linesnumbered, ruled]{algorithm2e}
\usepackage{tikz}
\usepackage{float}
\usepackage{bm}
\usepackage{afterpage}

\usepackage[font=small,labelfont=bf,labelsep=period]{caption}

\usepackage{amsmath, amsbsy, amsgen, amscd, amsthm, amsfonts, amssymb} 

\providecommand{\mathbold}[1]{\bm{#1}}


\newtheoremstyle{myThm}   
     {\topsep}                          
     {\topsep}                          
     {\itshape}                         
     {}                                      
     {\sffamily\bfseries}           
     {.}                                     
     {.5em}                              
     {}                                      

\newtheoremstyle{myRem}   
     {\topsep}                          
     {\topsep}                          
     {}                         
     {}                                      
     {\sffamily\bfseries}           
     {.}                               
     {.5em}                              
     {}                                      
     
\newtheoremstyle{myDef}   
     {\topsep}                          
     {\topsep}                          
     {\itshape}                         
     {}                                      
     {\sffamily\bfseries}           
     {.}                                 
     {.5em}                              
     {}                                      

\theoremstyle{myThm}

\newtheorem{lemma}{Lemma}
\newtheorem{proposition}{Proposition}
\newtheorem{corollary}{Corollary}

\theoremstyle{myRem}
\newtheorem{remark}{Remark}

\theoremstyle{myDef}
\newtheorem{definition}{Definition}

\usepackage[toc,page]{appendix}
\usepackage{makeidx}
\makeindex
\usepackage{float}
\usepackage{mathrsfs}
\usepackage{framed}
\usepackage{float}
\usepackage[font=small,labelfont=bf,labelsep=period]{caption}

\usepackage{graphicx} 
\usepackage{epstopdf}

\newcommand{\couplings}{\Gamma}

\newcommand{\diag}{\text{diag}}
\newcommand{\dist}{P}
\newcommand{\distset}{\mathcal{P}}

\newcommand{\emprad}{\hat{\mathfrak{R}}}

\renewcommand{\epsilon}{\varepsilon}
\DeclareMathOperator*{\essup}{\text{ess sup}}
\newcommand{\functions}{\mathcal{F}}
\newcommand{\lhinge}{\ell_{\text{h}}}
\newcommand{\lhzo}{\ell_{\text{h}, 01}}

\newcommand{\linear}{\text{lin}}
\newcommand{\lmargin}{\ell_{\rho}}
\newcommand{\lr}{\ell_{r}}
\newcommand{\lrminus}{\ell_{r, -}}
\newcommand{\lrplus}{\ell_{r, +}}
\newcommand{\lrtrunc}{\ell_{r, B}}
\newcommand{\lrtruncpn}{\ell_{r, B}^{\pm}}
\newcommand{\lxe}{\ell_{\text{xe}}}
\newcommand{\lzo}{\ell_{01}}

\newcommand{\metric}{d}
\newcommand{\nets}{\text{nn}}
\newcommand{\ntrans}{\Psi_{-}}

\newcommand{\ptrans}{\Psi_{+}}

\newcommand{\rademacher}{\mathfrak{R}}
\newcommand{\reals}{\mathbb{R}}
\newcommand{\risk}{R}
\newcommand{\robust}{\text{rob}}
\newcommand{\sfield}{\mathcal{G}}
\newcommand{\tree}{T}
\newcommand{\treepos}{\tree_{+}}
\newcommand{\treeneg}{\tree_{-}}
\newcommand{\trans}{\intercal}

\newcommand{\wdist}{W}
\renewcommand{\xspace}{\mathcal{X}}
\newcommand{\yspace}{\mathcal{Y}}
\newcommand{\zspace}{\mathcal{Z}}
\newcommand{\ellbar}{\ensuremath{\bar{\ell}}}
\newcommand{\scriptD}{\ensuremath{\mathcal{D}}}
\newcommand{\sdp}{\text{SDP}}
\newcommand{\real}{\ensuremath{\mathbb{R}}}

\newcommand{\expect}{\mathbb{E}}
\newcommand{\prob}{\mathbb{P}}

\newcommand{\sgn}{\text{sgn}}
\newcommand{\ind}{\mathbold{1}}

\title{Adversarial risk bounds via function transformation}

\author{Justin Khim\thanks{Department of Statistics, University of Pennsylvania, Philadelphia, PA 19104.}
\and 
Po-Ling Loh\thanks{Department of Electrical \& Computer Engineering and Statistics,
       University of Wisconsin,
       Madison, WI 53706.}}

\date{\today}

\begin{document}

\maketitle


\begin{abstract}
We derive bounds for a notion of adversarial risk, designed to characterize the robustness of linear and neural network classifiers to adversarial perturbations. Specifically, we introduce a new class of function transformations with the property that the risk of the transformed functions upper-bounds the adversarial risk of the original functions. This reduces the problem of deriving bounds on the adversarial risk to the problem of deriving risk bounds using standard learning-theoretic techniques. We then derive bounds on the Rademacher complexities of the transformed function classes, obtaining error rates on the same order as the generalization error of the original function classes. We also discuss extensions of our theory to multiclass classification and regression. Finally, we provide two algorithms for optimizing the adversarial risk bounds in the linear case, and discuss connections to regularization and distributional robustness.
\end{abstract}

\section{Introduction}
\label{secIntro}
Deep learning systems are becoming ubiquitous in everyday life. From virtual assistants on phones to image search and translation, neural networks have vastly improved the performance of many computerized systems in a short amount of time \citep{goodfellow2016}. However, neural networks have a variety of shortcomings: A peculiarity that has gained much attention over the past few years has been the apparent lack of robustness of neural network classifiers to adversarial perturbations. \cite{szegedy2013} noticed that small perturbations to images could cause neural network classifiers to predict the wrong class. Furthermore, these perturbations could be carefully chosen so as to be imperceptible to the human eye.

Such observations have instigated a deluge of research in finding adversarial attacks \citep{athalye2018, goodfellow2014, papernot2016, szegedy2013}, defenses against adversaries for neural networks \citep{madry2018, raghunathan2018, sinha2018, wong2018}, evidence that adversarial examples are inevitable \citep{shafahi2018}, arguments that robust learning requires more data \citep{schmidt2018}, and theory suggesting that constructing robust classifiers is computationally infeasible \citep{bubeck2018}. Attacks are usually constructed assuming a white-box framework, in which the adversary has access to the network, and adversarial examples are generated using a perturbation roughly in the direction of the gradient of the loss function with respect to a training data point.  This idea generally produces adversarial examples that can break ad-hoc defenses in image classification, and some work exists on extending attacks even to the black-box setting \citep{ilyas2018}.

Currently, strategies for creating robust classification algorithms are much more limited. One approach \citep{madry2018, suggala2018} is to formalize the problem of robustifying a network as a novel optimization problem, where the objective function is the expected loss of a supremum over possible perturbations. However, \cite{madry2018} note that the objective function is often not concave with respect to the perturbation. Other authors \citep{raghunathan2018, wong2018} have leveraged convex relaxations to provide optimization-based certificates on the adversarial loss of the training data. However, the generalization performance of the training error to unseen examples is not considered. 

The optimization community has long been interested in constructing robust solutions for various problems, such as portfolio management \citep{ben2009}, and deriving theoretical guarantees. Robust optimization has been studied in the context of regression and classification \citep{trafalis2007, xu2009lasso, xu2009svm}. More recently, a notion of robustness that attempts to minimize the risk with respect to the worst-case distribution close to the empirical distribution has been the subject of extensive work \citep{ben2013, namkoong2016, namkoong2017}. Researchers have also considered a formulation known as distributionally robust optimization, using the Wasserstein distance as a metric between distributions \citep{esfahani2015, BlaKan17, gao2017, sinha2018}. With the exception of \cite{sinha2018}, however, generalization bounds of a learning-theoretic nature are nonexistent, with most papers focusing on studying properties of a regularized reformulation of the problem. \cite{sinha2018} provide bounds for Wasserstein distributionally robust generalization error based on covering numbers for sufficiently small perturbations. This is sufficient for ensuring a small amount of adversarial robustness and is quite general. To instantiate these bounds, one could then use a covering number \citep{bartlett2017} or Rademacher complexity bound \citep{golowich2018}.

Although neural networks are rightly the subject of attention due to their ubiquity and utility, the theory that has been developed to explain the phenomena arising from adversarial examples is still far from complete. For example, \cite{goodfellow2014} argue that non-robustness may be due to the linear nature of neural networks. However, attempts at understanding linear classifiers \citep{fawzi2018} argue \emph{against} linearity, i.e., the function classes should be more expressive than linear classification.

In this paper, we provide upper bounds for a notion of adversarial risk in the case of linear predictors and neural networks. These bounds may be viewed as a sample-based guarantee on the risk of a trained predictor, even in the presence of adversarial perturbations on the inputs. The key step is to transform a predictor \(f\) into an ``adversarially perturbed'' predictor \(\Phi f\) by modifying the loss function. The risk of the function \(\Phi f\) can then be analyzed in place of the adversarial risk of \(f\); in particular, we can more easily provide bounds  on the Rademacher complexities necessary for bounding the robust risk. Finally, our transformations suggest algorithms for minimizing the adversarially robust empirical risk. Thus, as a consequence of the theory developed in this paper, we can show that adversarial perturbations have somewhat limited effects from the point of view of generalization error.

In concurrent work, \cite{yin2018} study the generalization error for binary and multiclass classification by utilizing the method of \cite{raghunathan2018}, providing the first bounds on generalization error in the multiclass case.
These upper bounds are not comparable in general, which we discuss in more detail in Appendix~\ref{appIncomparability}.
Additionally, whereas the scope of \cite{yin2018} and \cite{raghunathan2018} is limited to neural networks with one hidden layer and ReLU activation functions, our approach is applicable to a broader class of neural networks. On a technical side, we also upper-bound the resulting adversarial loss in different ways: \cite{yin2018} use covering number bounds developed in \cite{bartlett2017}, whereas we have used Rademacher complexity techniques from \cite{golowich2018}. It is possible that a similar covering number analysis could be given, which is an avenue for future work.

This paper is organized as follows: We introduce the precise mathematical framework in Section~\ref{secSetup}. In Section~\ref{secMain}, we discuss our main results for binary classification. 
In Sections~\ref{secMulticlass} and \ref{secRegression}, we extend our results to multiclass classification and regression, respectively.
In Section~\ref{subsecAlgorithms}, we provide results on optimizing the adversarial risk bounds in the linear case and discuss computational considerations for neural networks. In Section~\ref{secProofs}, we prove our key theoretical contributions in the case of binary classification. 
Finally, we conclude with a discussion of future avenues of research in Section~\ref{secDiscussion}. Additional proof details are contained in the appendices.

\textbf{Notation:} For a matrix $A \in \real^{n \times n}$, we write $\|A\|_\infty$ to denote the $\ell_\infty$-operator norm. We write $\|A\|_F$ to denote the Frobenius norm. We also define the matrix \((p, q)\)-norm by
\[
\|A\|_{p, q}
=
\left(\sum_{k = 1}^{K} \|a_i\|_{p}^{q}\right)^{\frac{1}{q}},
\]
where $A = (a_1, \dots, a_n)^\trans$. We write $\|v\|_p$ to denote the $\ell_p$-norm of a vector $v \in \real^n$. For two vectors $v,w \in \real^n$, we use $v \circ w$ to denote the vector with $i^\text{th}$ component equal to $v_i w_i$. We write $\real^n_+$ to denote the set $\left\{v \in \real^n: v_i \ge 0 \quad \forall 1 \le i \le n \right\}$.

\section{Problem setup}
\label{secSetup}

We consider a standard statistical learning setup.
For simplicity, we first establish the notation for binary classification and later comment on appropriate adjustments to the labels, losses, and function classes for multiclass classification and regression.
Let \(\xspace \subseteq \reals^{m}\) be a space of covariates, and define the space of labels to be \(\yspace = \{+1, -1\}\). Let \(\zspace = \xspace \times \yspace\). Suppose we have \(n\) observations \(z_{1} = (x_{1}, y_{1}), \ldots, z_{n} = (x_{n}, y_{n})\), drawn i.i.d.\ according to some unknown distribution \(\dist\). We write \(S = \{z_1, \dots, z_n\}\).

A classifier corresponds to a function \(f: \xspace \to \mathcal{D}\), where \(\yspace \subseteq \mathcal{D}\). Thus, the function \(f\) may express uncertainty in its decision; e.g., prediction in \(\mathcal{D} = [-1, +1]\) allows the classifier to select an expected outcome.

\subsection{Risk and losses}
\label{subsecRiskLoss}

Given a loss function \(\ell: \mathcal{D} \times \zspace \to \reals^{+}\),
our goal is to minimize the \emph{adversarially robust risk}, defined by
\begin{equation*}
\risk_{\robust}(\ell, f) = \expect_{z \sim \dist}\left[\sup_{w \in B(\epsilon)}\ell(f, z + w)\right],
\label{eqnRobustRisk}
\end{equation*}
where \(w\) is an adversarially chosen perturbation in the \(\ell_{p}\)-ball \(B(\epsilon) \subseteq \reals^{m}\) of radius \(\epsilon\). For simplicity, we write \(z + w = (x + w, y)\), so the input is perturbed by a vector in the $\ell_p$-ball of radius $\epsilon$, but is still classified according to $f(x)$. A popular choice of \(p\) in the literature is \(1\), \(2\), or \(\infty\); the case \(p = \infty\) has received particular interest. Also note that if $\epsilon = 0$, the adversarial risk reduces to the usual statistical risk, for which upper bounds based on the empirical risk are known as generalization error bounds. For some discussion of the relationship between the adversarial risk to the distributionally robust risk, see Appendix~\ref{appAdvDist}.

We now define a few specific loss functions. The indicator loss (also known as the 01-loss) is defined by
\[
\lzo(f, z) = \ind\left\{\sgn f(x) = y\right\},
\]
and is of primary interest in classification.
However, it is often difficult to analyze the indicator loss directly, so we instead analyze convex, Lipschitz surrogate functions that upper-bound the indicator loss~\citep{BarEtal06, mohri2012}. Accordingly, we define the hinge loss
\[
\lhinge(f, z) = \max\left\{0, 1 - y f(x)\right\},
\]
which is a convex surrogate for the indicator loss, and will appear in some of our bounds.
We also introduce
the indicator of whether the hinge loss is positive, defined by
\[
\lhzo(f, z)
=
\ind\left\{\lhinge(f, z) > 0\right\}.
\]
For analyzing neural networks, we will also employ the cross-entropy loss, defined by
\begin{align*}
&\begin{aligned}
\lxe(f, z)
&=
-\left(\frac{1 + y}{2}\right) \log_{2} \left(\frac{1 + \delta(f(x))}{2}\right)
-\left(\frac{1 - y}{2}\right) \log_{2} \left(\frac{1 - \delta(f(x))}{2}\right),
\end{aligned}
\end{align*}
where \(\delta\) is the softmax function:
\[
\delta(w) = \frac{\exp(w) - 1}{\exp(w) + 1}.
\]
Note that in all of the cases above, we can also write the loss $\ell(f,z) = \ellbar(f(x), y)$, for an appropriately defined loss $\ellbar: \scriptD \rightarrow \real^+$. Furthermore, $\ellbar_h$ and $\ellbar_{\text{xe}}$ are 1-Lipschitz.

\subsection{Function classes and Rademacher complexity}
\label{subsecFunctionsRademacher}

We are particularly interested in two function classes: 
linear classifiers and neural networks.
We denote the first class by \(\functions_{\linear}\), 
and we write an element \(f\) of \(\functions_{\linear}\), parametrized by $\theta \in \reals^m$ and $b \in \reals$, as
\[
f(x)
=
\theta^{\trans} x + b.
\]
We similarly denote the class of neural networks as \(\functions_{\nets}\), and we write a neural network \(f\), parametrized by $\{A^{(k)}\}$ and $\{s_k\}$, as
\[
f(x) = A^{(d + 1)} s_{d}(A^{(d)} s_{d - 1}(\ldots s_{1}(A^{(1)} x))),
\]
where each \(A^{(k)}\) is a matrix and each \(s_{k}\) is a monotonically increasing \(1\)-Lipschitz activation function applied elementwise to vectors, such that \(s_{k}(0) = 0\). For example, we might have $s_k(u) = \max\{0, u\}$, which is the ReLU function. The matrix \(A^{(k)}\) is of dimension \(J_{k} \times J_{k - 1}\), where \(J_{0} = m\) and $J_{d+1} = 1$. We use \((a^{(k)}_{j})^T\) to denote the \(j^{\text{th}}\) row of \(A^{(k)}\), with \(r^{\text{th}}\) entry \(a^{(k)}_{j, r}\). Also, when discussing indices, we write \(j_{2: d + 1}\) as shorthand for \(j_{2}, \ldots, j_{d + 1}\). 

A standard measure of the complexity of a class of functions is the Rademacher complexity.
The empirical Rademacher complexity of a function class \(\functions\) and a sample \(S\) is
\begin{equation}
\emprad_{n}(\functions)
=
\frac{1}{n}
\expect_{\sigma} 
\left[\sup_{f \in \functions} \sum_{i = 1}^{n} \sigma_{i} f(x_{i})\right],
\label{eqnEmpRad}
\end{equation}
where the \(\sigma_{i}\)'s are i.i.d.\ Rademacher random variables;
i.e., the \(\sigma_{i}\)'s are random variables taking the values \(+1\) and \(-1\), each with probability \(1/2\).
Note that \(\expect_{\sigma}\) denotes the expectation with respect to the \(\sigma_{i}\)'s.
Finally, we note that the standard Rademacher complexity is obtained by taking an expectation over the data: 
\(\rademacher_{n}(\functions) = \expect \left[ \emprad_{n}(\functions)\right]\).

\section{Main results}
\label{secMain}

We introduce our main results for binary classification in this section. The trick is to push the supremum through the loss and incorporate it into the function \(f\), yielding a transformed function \(\Phi f\).
We require this transformation to satisfy
\[
\sup_{w \in B(\epsilon)} \ell(f, z + w) \leq \ell(\Phi f, z),
\]
so that an upper bound on the transformed risk will lead to an upper bound on the adversarial risk. We call the proposed functions $\Phi$ the \emph{supremum transformation} and \emph{tree transformation} in the cases of linear classifiers and neural networks, respectively.

In both cases, we have to make a minor assumption about the monotonicity of the loss. We state this as a definition:

\begin{definition}
\label{DefMonotone}
We say that $\ell(f,z) = \ellbar(f(x), y)$ is a \emph{monotonic loss} if \(\ell(f, z)\) is monotonically decreasing in \(yf(x)\): Specifically, \(\ellbar(f(x), +1)\) is decreasing in \(f(x)\) and \(\ellbar(f(x), -1)\) is increasing in \(f(x)\).
\end{definition}
It is easy to verify that all the losses mentioned earlier satisfy the monotonicity property. One technicality is that the transformed function \(\Phi f\) needs to be a function of both \(x\) and \(y\); i.e., we have \(\Phi f: \xspace \times \yspace \to \mathcal{D}\). Thus, the loss of a transformed function is \(\ell(\Phi f, z) = \ell(\Phi f(x, y), y)\).
We now define the essential transformations studied in our paper.

\begin{definition}
The \emph{supremum (sup) transform} \(\Psi\) is defined by 
\[
\Psi f(x, y) := -y \sup_{w \in B(\epsilon)} (-y) f(x+w). 
\]
Additionally, we define \(\Psi \functions\) to be the transformed function class
\[
\Psi \functions
:=
\{\Psi f: f \in \functions\}.
\]
\label{defSupTransform}
\end{definition}

We now have the following result, proved in Section~\ref{SecSupTree}:
\begin{proposition}
Let \(\ell(f, z)\) be a loss function that is monotonically decreasing in \(yf(x)\). Then
\[
\sup_{w \in B(\epsilon)} \ell(f, z+w)
=
\ell(\Psi f, z).
\]
\label{propSupTransformEquality}
\end{proposition}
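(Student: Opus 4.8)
The plan is to unfold both sides using the representation $\ell(f,z) = \ellbar(f(x),y)$ and then reduce the claim to the elementary fact that a supremum can be pushed through a monotone function. Writing $z + w = (x+w, y)$, the left-hand side is $\sup_{w \in B(\epsilon)} \ellbar(f(x+w), y)$, while by the definition of the transform and the convention $\ell(\Psi f, z) = \ellbar(\Psi f(x,y), y)$, the right-hand side is $\ellbar(\Psi f(x,y), y)$. So it suffices to show $\ellbar(\Psi f(x,y), y) = \sup_{w} \ellbar(f(x+w), y)$, and I would establish this by a case analysis on the sign of $y$.

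First I would record what the transform computes in each case. When $y = +1$, the definition gives $\Psi f(x, +1) = -\sup_w\bigl(-f(x+w)\bigr) = \inf_w f(x+w)$, and when $y = -1$ it gives $\Psi f(x,-1) = \sup_w f(x+w)$; in words, $\Psi f(x,y)$ selects the infimum of $f$ over the perturbation ball when $y = +1$ and the supremum when $y = -1$. I would then combine this with the monotonicity hypothesis. For $y = +1$, the map $\ellbar(\cdot, +1)$ is decreasing, so $\sup_w \ellbar(f(x+w), +1) = \ellbar\bigl(\inf_w f(x+w), +1\bigr) = \ellbar(\Psi f(x, +1), +1)$; for $y = -1$, the map $\ellbar(\cdot, -1)$ is increasing, so $\sup_w \ellbar(f(x+w), -1) = \ellbar\bigl(\sup_w f(x+w), -1\bigr) = \ellbar(\Psi f(x,-1), -1)$. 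Both cases reproduce the claimed identity.

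The one step requiring care, and the main (though minor) obstacle, is justifying the interchange $\sup_w g(h(w)) = g\bigl(\inf_w h(w)\bigr)$ for a decreasing $g$ (and the analogous statement with $\sup$ and an increasing $g$), where here $h(w) = f(x+w)$ and $g = \ellbar(\cdot, y)$. The inequality $\sup_w g(h(w)) \le g\bigl(\inf_w h(w)\bigr)$ is immediate from monotonicity, since $h(w) \ge \inf_{w'} h(w')$ forces $g(h(w)) \le g\bigl(\inf_{w'} h(w')\bigr)$ for every $w$. The reverse inequality needs the infimum of $f(x+w)$ over $B(\epsilon)$ to be attained: if it is attained at some $w^{\star}$, then $g\bigl(\inf_w h(w)\bigr) = g(h(w^{\star})) \le \sup_w g(h(w))$. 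I would supply attainment by noting that $B(\epsilon)$ is a closed $\ell_p$-ball in $\reals^m$, hence compact, and that $f$ is continuous for both linear predictors and neural networks with continuous activations, so $w \mapsto f(x+w)$ attains its infimum and supremum on $B(\epsilon)$. With attainment in hand, the reverse inequality follows, and the two cases together complete the proof.
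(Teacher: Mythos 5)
Your proof is correct and follows essentially the same route as the paper's: a case analysis on $y = \pm 1$, using monotonicity of $\ellbar(\cdot, y)$ to push the supremum through the loss and identifying the resulting $\inf_w f(x+w)$ (for $y=+1$) or $\sup_w f(x+w)$ (for $y=-1$) with $\Psi f(x,y)$. The only difference is that you explicitly justify the interchange $\sup_{w \in B(\epsilon)} \ellbar(f(x+w), y) = \ellbar(\Psi f(x,y), y)$ via attainment of the extremum (compactness of $B(\epsilon)$ plus continuity of $f$), a step the paper's proof takes for granted; this added care is warranted, since losses such as $\lzo$ and $\lhzo$ are discontinuous, so it is precisely the attainment of the infimum or supremum of $f$ over the ball that makes the equality (rather than just an inequality) hold.
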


\begin{remark}
The consequence of the supremum transformation can be seen by taking the expectation:
\[
\expect_{\dist} \left[ \sup_{w \in B(\epsilon)} \ell(f, z)\right]
=
\expect_{\dist} \left[\ell(\Psi f, z)\right].
\]
Thus, we can bound the adversarial risk of a function \(f\) with a bound on the usual risk of \(\Psi f\) via Rademacher complexities.
For linear classifiers, we shall see momentarily that the supremum transformation can be calculated exactly.
\end{remark}

\subsection{The supremum transformation and linear classification}
\label{secLinear}

We start with an explicit formula for the supremum transform.

\begin{proposition}
Let \(f(x) = \theta^{\trans} x + b\).
Then the supremum transformation takes the explicit form 
\[
\Psi f(x, y)
=
\theta^{\trans} x + b - y \epsilon \|\theta \|_{q},
\]
where $q$ satisfies $\frac{1}{p} + \frac{1}{q} = 1$.
\label{propSupLinear}
\end{proposition}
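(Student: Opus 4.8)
The plan is to substitute the linear form $f(x) = \theta^{\trans} x + b$ directly into the definition of the sup transform from Definition~\ref{defSupTransform} and reduce the inner supremum to a dual-norm computation. First I would expand $f(x+w) = \theta^{\trans} x + b + \theta^{\trans} w$, so that the expression inside the transform becomes $(-y)f(x+w) = (-y)(\theta^{\trans} x + b) + (-y)\theta^{\trans} w$. Since the first summand does not depend on $w$, it can be pulled out of the supremum, leaving
\[
\sup_{w \in B(\epsilon)} (-y) f(x+w) = (-y)(\theta^{\trans} x + b) + \sup_{w \in B(\epsilon)} (-y)\theta^{\trans} w.
\]

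The central step is to evaluate $\sup_{\|w\|_p \le \epsilon} (-y)\theta^{\trans} w$. Because $y \in \{+1, -1\}$ and the ball $B(\epsilon)$ is symmetric about the origin, the map $w \mapsto (-y)w$ is a bijection of $B(\epsilon)$ onto itself, so this supremum equals $\sup_{\|w\|_p \le \epsilon} \theta^{\trans} w$. That last quantity is exactly the defining expression for the dual norm: by H\"older's inequality $\theta^{\trans} w \le \|\theta\|_{q} \|w\|_{p} \le \epsilon \|\theta\|_{q}$, and equality is attained for an appropriate $w$ aligned with the sign pattern of $\theta$, giving the value $\epsilon\|\theta\|_{q}$. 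Here I use precisely the conjugacy relation $\frac{1}{p} + \frac{1}{q} = 1$.

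Finally I would assemble the pieces and simplify. Substituting back into the definition,
\[
\Psi f(x,y) = -y\left[(-y)(\theta^{\trans} x + b) + \epsilon\|\theta\|_{q}\right] = y^{2}(\theta^{\trans} x + b) - y\epsilon\|\theta\|_{q},
\]
and using $y^{2} = 1$ yields the claimed formula $\theta^{\trans} x + b - y\epsilon\|\theta\|_{q}$. I do not anticipate a serious obstacle: the only point requiring care is the handling of the $-y$ factor inside the supremum, which is dispatched cleanly by the symmetry of the $\ell_p$-ball, together with the standard fact that $\ell_p$ and $\ell_q$ are dual norms for conjugate exponents.
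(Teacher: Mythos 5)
Your proposal is correct and follows essentially the same route as the paper's proof: expand $f(x+w) = \theta^{\trans}x + b + \theta^{\trans}w$, pull the $w$-independent part out of the supremum, and evaluate $\sup_{w \in B(\epsilon)} (-y)\theta^{\trans} w = \epsilon\|\theta\|_q$ via the dual-norm (variational) characterization of the $\ell_q$-norm. The only cosmetic difference is that you dispatch the $-y$ factor by appealing to the symmetry of the $\ell_p$-ball, whereas the paper absorbs it directly into the dual-norm computation since $|{-y}| = 1$; these are interchangeable one-line observations.
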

The proof is contained in Section~\ref{SecSupTree}.

Next, the key ingredient to a generalization bound is an upper bound on the Rademacher complexity of $\Psi \functions$.

\begin{lemma}
Let \(\functions_{\linear}\) be a compact linear function class such that
\(\|\theta\|_{2} \leq M_{2}\) and
\(\|\theta\|_{q} \leq M_{q}\) 
for all \(f \in \functions_{\linear}\), where \(f(x) = \theta^{\trans} x + b\). Suppose $\|x_i\|_2 \le R$ for all $i$.
Then we have
\[
\emprad_{n}(\Psi \functions_{\linear})
\leq 
\frac{M_{2} R}{\sqrt{n}} 
+
\frac{\epsilon M_{q}}{2 \sqrt{n}}.
\]
\label{lemmaLinearERC}
\end{lemma}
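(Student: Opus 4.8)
The plan is to substitute the closed form from Proposition~\ref{propSupLinear} into the definition~\eqref{eqnEmpRad} and peel off the perturbation term. Since $\Psi f(x_i,y_i)=\theta^\trans x_i + b - y_i\epsilon\|\theta\|_q$, the empirical Rademacher complexity becomes
\[
\emprad_n(\Psi\functions_{\linear})
=
\frac1n\expect_\sigma\sup_{f\in\functions_{\linear}}\left[\sum_{i=1}^n\sigma_i\theta^\trans x_i + b\sum_{i=1}^n\sigma_i - \epsilon\|\theta\|_q\sum_{i=1}^n\sigma_i y_i\right].
\]
The bias contributes $b\sum_i\sigma_i$, which is mean-zero and does not affect the bound (equivalently, one may take the bias to be absent from the complexity term). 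I would then apply subadditivity of the supremum to separate the two $\theta$-dependent pieces, obtaining
\[
\emprad_n(\Psi\functions_{\linear})
\le
\frac1n\expect_\sigma\sup_{\|\theta\|_2\le M_2}\sum_{i=1}^n\sigma_i\theta^\trans x_i
+
\frac1n\expect_\sigma\sup_{\|\theta\|_q\le M_q}\left(-\epsilon\|\theta\|_q\sum_{i=1}^n\sigma_i y_i\right),
\]
where each inner supremum has been relaxed to the single relevant norm constraint, which only enlarges the feasible sets and hence preserves the upper bound.

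For the first term I would use the standard linear-class argument: by $\ell_2$ duality, $\sup_{\|\theta\|_2\le M_2}\sum_i\sigma_i\theta^\trans x_i = M_2\|\sum_i\sigma_i x_i\|_2$, and Jensen's inequality together with $\expect_\sigma\sigma_i\sigma_j=\ind\{i=j\}$ gives $\expect_\sigma\|\sum_i\sigma_i x_i\|_2\le(\sum_i\|x_i\|_2^2)^{1/2}\le R\sqrt n$. Dividing by $n$ yields the leading term $M_2 R/\sqrt n$.

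The heart of the argument, and the step I expect to be the main obstacle, is the perturbation term. Because $\|\theta\|_q\ge 0$, the objective $-\epsilon\|\theta\|_q\sum_i\sigma_i y_i$ does not split cleanly; its supremum over $\|\theta\|_q\le M_q$ is one-sided, equal to $\epsilon M_q\max\{0,-\sum_i\sigma_i y_i\}$ (take $\|\theta\|_q=M_q$ when $\sum_i\sigma_i y_i<0$, and $\theta=0$ otherwise). Writing $T=\sum_i\sigma_i y_i$, the symmetry of the Rademacher variables gives $T\stackrel{d}{=}-T$, so $\expect_\sigma\max\{0,-T\}=\tfrac12\expect_\sigma|T|$; this symmetrization is precisely what produces the factor of $1/2$ in the stated bound. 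Finally, Jensen's inequality and $y_i^2=1$ give $\expect_\sigma|T|\le(\expect_\sigma T^2)^{1/2}=(\sum_i y_i^2)^{1/2}=\sqrt n$, so the perturbation term is at most $\frac1n\cdot\epsilon M_q\cdot\tfrac12\sqrt n=\epsilon M_q/(2\sqrt n)$. Adding the two contributions yields the claim, and the only genuine care is needed in the one-sided supremum and the ensuing symmetry step.
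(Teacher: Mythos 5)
Your proof is correct and follows essentially the same route as the paper's: substitute the explicit form of $\Psi f$ from Proposition~\ref{propSupLinear}, split off the perturbation term by subadditivity of the supremum, bound the linear part by the standard $M_2 R/\sqrt{n}$ argument, and handle the one-sided supremum of the perturbation term via the positive-part/indicator trick, Rademacher symmetry (which yields the factor $1/2$), and Jensen's inequality. The only cosmetic difference is that you work with $T=\sum_i \sigma_i y_i$ directly, whereas the paper first uses $-\sigma_i y_i \stackrel{d}{=} \sigma_i$ to reduce to $\sum_i \sigma_i$; these are interchangeable since $y_i^2=1$.
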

The proof is contained in Section~\ref{subsecRademacher}.

This leads to the following upper bound on adversarial risk, proved in Section~\ref{subsecCorProofs}:

\begin{corollary}
Let \(\functions_{\linear}\) be a collection of linear classifiers such that, for any classifier \(f(x) = \theta^{\trans}x + b\) in \(\functions_{\linear}\),
we have \(\|\theta\|_{2} \leq M_{2}\) and \(\|\theta\|_{q} \leq M_{q}\).
Let \(R\) be a constant such that \(\|x_{i}\|_{2} \leq R\) for all \(i\).
Let \(\ell\) be a 1-Lipschitz loss bounded by \(1\).
Then for any \(f \in \functions_{\linear}\), we have
\begin{align}
& \begin{aligned}
\label{EqnLin1}
\risk_{\robust}(\ell, f) = \expect_{\dist} \left[\ell(\Psi f, z)\right]
&\leq 
\frac{1}{n} \sum_{i = 1}^{n}  \ell(\Psi f, z_{i})
+
2 \frac{M_{2} R}{\sqrt{n}}
+
\frac{\epsilon M_{q}}{\sqrt{n}}
+
3\sqrt{\frac{\log \frac{2}{\delta}}{2n}}
\end{aligned}
\end{align}
and
\begin{align}
& \begin{aligned}
\label{EqnLin2}
\risk_{\robust}(\lhinge, f)
&\leq 
\frac{1}{n} \sum_{i = 1}^{n}  \lhinge(f, z_{i})
+
\epsilon \|\theta\|_{q}
\frac{1}{n} \sum_{i = 1}^{n} 
\lhzo(\Psi f, z_{i}) 
+
2 \frac{M_{2} R}{\sqrt{n}}
+
\frac{\epsilon M_{q}}{\sqrt{n}}
+
3\sqrt{\frac{\log \frac{2}{\delta}}{2n}},
\end{aligned}
\end{align}
with probability at least $1-\delta$.
\label{corGeneralizationLinear}
\end{corollary}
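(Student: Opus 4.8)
The plan is to derive both inequalities from the standard uniform-convergence bound, using Proposition~\ref{propSupTransformEquality} to pass from the adversarial risk of $f$ to the ordinary risk of $\Psi f$, Talagrand's contraction inequality to strip off the $1$-Lipschitz loss, and Lemma~\ref{lemmaLinearERC} to supply the explicit Rademacher estimate. Recall the standard bound: for a family of functions $z \mapsto g(z)$ taking values in $[0,1]$, with probability at least $1-\delta$ one has $\expect_{\dist}[g] \leq \frac{1}{n}\sum_{i=1}^n g(z_i) + 2\emprad_n(\functions) + 3\sqrt{\frac{\log(2/\delta)}{2n}}$, where the factor $3$ and the $\log(2/\delta)$ arise from two applications of McDiarmid's inequality (one for the uniform deviation, one to replace $\rademacher_n$ by $\emprad_n$) together with symmetrization.

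For inequality~\eqref{EqnLin1}, I would first invoke Proposition~\ref{propSupTransformEquality} to write $\risk_{\robust}(\ell, f) = \expect_{\dist}[\ell(\Psi f, z)]$. Applying the bound above to the class $\{z \mapsto \ell(\Psi f, z) : f \in \functions_{\linear}\}$, which takes values in $[0,1]$ since $\ell$ is bounded by $1$, yields the empirical term $\frac{1}{n}\sum_i \ell(\Psi f, z_i)$ plus $2\emprad_n(\ell \circ \Psi\functions_{\linear})$ plus the concentration term. Since $\ellbar(\cdot, y_i)$ is $1$-Lipschitz, the contraction lemma gives $\emprad_n(\ell \circ \Psi\functions_{\linear}) \leq \emprad_n(\Psi\functions_{\linear})$; here one uses the index-dependent form of contraction, as the Lipschitz maps $\ellbar(\cdot, y_i)$ depend on $i$ through $y_i$. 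Lemma~\ref{lemmaLinearERC} then bounds this by $\frac{M_2 R}{\sqrt n} + \frac{\epsilon M_q}{2\sqrt n}$, and doubling gives exactly $2\frac{M_2 R}{\sqrt n} + \frac{\epsilon M_q}{\sqrt n}$, completing~\eqref{EqnLin1}.

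For inequality~\eqref{EqnLin2}, I would run the identical argument with $\ell = \lhinge$, producing $\risk_{\robust}(\lhinge, f) \leq \frac{1}{n}\sum_i \lhinge(\Psi f, z_i) + 2\frac{M_2 R}{\sqrt n} + \frac{\epsilon M_q}{\sqrt n} + 3\sqrt{\frac{\log(2/\delta)}{2n}}$, and then replace the empirical hinge loss of $\Psi f$ by that of $f$ via a pointwise inequality. Using the explicit form $\Psi f(x,y) = \theta^\trans x + b - y\epsilon\|\theta\|_q$ from Proposition~\ref{propSupLinear} together with $y^2 = 1$, I would compute $y\,\Psi f(x,y) = yf(x) - \epsilon\|\theta\|_q$, so that $\lhinge(\Psi f, z) = \max\{0,\, 1 - yf(x) + \epsilon\|\theta\|_q\}$. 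Writing $u = 1 - yf(x)$ and $c = \epsilon\|\theta\|_q \geq 0$, a short case analysis on the signs of $u$ and $u+c$ establishes $\max\{0, u+c\} \leq \max\{0, u\} + c\,\ind\{u + c > 0\}$, i.e.\ $\lhinge(\Psi f, z) \leq \lhinge(f, z) + \epsilon\|\theta\|_q\,\lhzo(\Psi f, z)$. Averaging over the sample and substituting gives~\eqref{EqnLin2}.

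The main points requiring care are twofold. First, because $\Psi f$ depends on both $x$ and $y$, every Rademacher complexity and the contraction step must be read with the transformed function and loss evaluated at the full data points $z_i = (x_i, y_i)$; the $y_i$-dependence of the contracting maps $\ellbar(\cdot, y_i)$ is precisely what forces the index-dependent form of the contraction lemma. Second, the concentration term with constant $3$ and boundedness by $1$ is justified directly for~\eqref{EqnLin1}, but for the hinge loss in~\eqref{EqnLin2} one must either restrict to a truncated loss or use that the hinge loss is bounded on the compact class $\functions_{\linear}$ with bounded data, so that McDiarmid's inequality applies and the stated concentration term is valid. The genuinely new computation is the pointwise hinge inequality, which is where the correction term $\epsilon\|\theta\|_q\frac{1}{n}\sum_i \lhzo(\Psi f, z_i)$ originates.
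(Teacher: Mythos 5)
Your proof is correct and follows essentially the same route as the paper: inequality~\eqref{EqnLin1} is the standard $[0,1]$-valued, $1$-Lipschitz uniform-convergence bound (the paper's Lemma~\ref{thmMainGeneralization}) combined with the Rademacher estimate of Lemma~\ref{lemmaLinearERC}, and your pointwise inequality $\lhinge(\Psi f, z) \leq \lhinge(f, z) + \epsilon\|\theta\|_{q}\,\lhzo(\Psi f, z)$ is precisely the upper bound of the paper's Lemma~\ref{lemmaLinearEmpiricalBound}, proved by the same case analysis. One caution on your handling of the unbounded hinge loss in~\eqref{EqnLin2}: of the two fixes you float, only truncation is viable, and it is exactly what the paper does --- apply~\eqref{EqnLin1} to $\lhinge'(f,z) = \min\{1, \lhinge(f,z)\}$ and then convert the empirical term via Lemma~\ref{lemmaLinearEmpiricalBound}. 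The alternative of invoking boundedness of $\lhinge$ over $\functions_{\linear}$ does not work: the offset $b$ is not constrained in $\functions_{\linear}$, and even granting a bound $C > 1$ on the loss, the bounded-differences steps would produce a deviation term scaling as $C\sqrt{\log(2/\delta)/(2n)}$ rather than the stated one, so your main-line derivation ``run the identical argument with $\ell = \lhinge$'' must be read as running it on the truncated loss.
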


Note that if \(\ell\) upper-bounds the indicator loss, then the generalization bound of Corollary~\ref{corGeneralizationLinear} also bounds the risk of the indicator loss. In particular, the upper bound~\eqref{EqnLin2} is an upper bound on the adversarial risk for the 01-loss.

\begin{remark}
An immediate question is how our adversarial risk bounds compare with the case when perturbations are absent. Plugging \(\epsilon = 0\) into the equations above yields the usual generalization bounds of the form
\begin{equation*}
\expect_P \left[\ell(f,z)\right]
\le
\frac{1}{n} \sum_{i=1}^n \ell(f,z_i) + \frac{C_1}{\sqrt{n}},
\end{equation*}
so the effect of an adversarial perturbation is essentially to introduce an additional $\mathcal{O}\left(n^{-1/2}\right)$ term, as well as an additional contribution to the empirical risk that depends linearly on $\epsilon$. The additional empirical risk term vanishes if \(f\) classifies adversarially perturbed points \(z + w\) correctly, since $\ell_{h, 01}(\Psi f, z) = 0$ in that case.
\end{remark}

\begin{remark}
\label{remarkLinearEmpBound}
Clearly, we could further upper-bound the regularization term in equation~\eqref{EqnLin2} by $\epsilon \|\theta\|_q$. This is essentially the bound obtained for the empirical risk for Wasserstein distributionally robust linear classification \citep{gao2017}. However, this bound is loose when a good robust linear classifier exists, i.e., when $\sum_{i = 1}^{n}  \lhzo(\Psi f, z_{i})$ is small relative to \(n\). Thus, when good robust classifiers exist, distributional robustness is relatively conservative for solving the adversarially robust problem (cf.\ Appendix~\ref{appAdvDist}).
\end{remark}

\subsection{The tree transformation and neural networks}
\label{subsecNeuralNetworks}

In this section, we consider adversarial risk bounds for neural networks. We begin by introducing the tree transformation, which unravels the neural network into a tree in some sense.

\begin{definition}
Let \(f\) be a neural network given by
\[
f(x) 
=
A^{(d + 1)} s_{d}\left( A^{(d)} s_{d - 1}\left( A^{(d - 1)}\ldots s_{1}\left(A^{(1)} x\right)\right)\right).
\]
Define the terms \(w^{(j_{2: d + 1})}_{f}\) and \(\sgn(f, j_{2 : d + 1})\) by
\begin{equation}
\label{EqnThreeStar}
w^{(j_{1 : d + 1})}_{f} 
:= 
-y \sgn(f, j_{2 : d + 1})
\epsilon \left\|a^{(1)}_{j_{1}}\right\|_{q} 
\end{equation}
and
\[
\sgn(f, j_{1 : d + 1}) 
:=
\sgn\left(\prod_{k = 1}^{d + 1} a^{(k + 1)}_{j_{k + 1}, j_{k}}
\right).
\]
The \emph{tree transform} \(Tf\) is defined by 
\begin{equation}
\label{EqnTwoStar}
Tf(x, y) 
:=
\sum_{j_{d} = 1}^{J_{d}}
a^{(d + 1)}_{1, j_{d}} 
s_{d}\left( \sum_{j_{d - 1} = 1}^{J_{d - 1}} a^{(d)}_{j_{d}, j_{d - 1}} 
s_{d - 1}\left(  \ldots
\sum_{j_1 = 1}^{J_1} a_{j_2, j_1}^{(2)} s_{1}\left(\left(a^{(1)}_{j_{1}}\right)^{\trans} x 
+
w^{(j_{1: d + 1})}_{f}
\right)
\right)
\right).
\end{equation}
\label{defTreeTransform}
\end{definition}

Intuitively, the tree transform~\eqref{EqnTwoStar} can be thought of as a new neural network classifier where the adversary can select a different worst-case perturbation \(w_f\) for each path through the neural network from the input to the output indexed by $(j_2, \dots, j_{d+1})$.
This leads to \(\prod_{k = 2}^{d + 1} J_{k}\) distinct paths through the network for given inputs \(x\) and \(w_{f}^{(j_{2 : d + 1})}\), and if these paths were laid out, they would form a tree (see Section~\ref{AppTree}).

Next, we show that the risk of the tree transform upper-bounds the adversarial risk of the original neural network. The proof is contained in Section~\ref{SecSupTree}.

\begin{proposition}
Let \(\ell(f, z)\) be monotonically decreasing in \(y f(x)\).
Then we have the inequality
\[
\sup _{w \in B(\epsilon)} \ell(f, z + w)
=
\ell(\Psi f, z)
\leq 
\ell(Tf, z).
\]
\label{propTreeLoss}
\end{proposition}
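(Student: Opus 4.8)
The plan is to handle the equality and the inequality separately. The equality $\sup_{w \in B(\epsilon)} \ell(f, z+w) = \ell(\Psi f, z)$ is exactly Proposition~\ref{propSupTransformEquality} applied to the neural network $f$, since $\ell$ is assumed monotonic, so all the work lies in the inequality $\ell(\Psi f, z) \le \ell(Tf, z)$. Because $\ell(f,z) = \ellbar(f(x), y)$ is decreasing in $y f(x)$, this inequality is equivalent to $y\, Tf(x,y) \le y\, \Psi f(x, y)$. Unwinding Definition~\ref{defSupTransform} shows $y\, \Psi f(x,y) = \inf_{w \in B(\epsilon)} y f(x + w)$, so it suffices to prove the pointwise bound
\[
y\, Tf(x, y) \le y f(x + w) \qquad \text{for every } w \in B(\epsilon),
\]
and then take the infimum over $w$.

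To prove this, I would fix $w \in B(\epsilon)$ and compare $Tf$ against the ordinary forward pass of $f(x+w)$ layer by layer. Write $v^{(k)}_{j}(w)$ for the $j$th activation in layer $k$ of the standard evaluation of $f(x+w)$, and let $V^{(k)}_{(j_k, \dots, j_d)}$ be the corresponding quantity in the nested sum~\eqref{EqnTwoStar} defining $Tf$, which is indexed by the entire downstream path suffix rather than by a single neuron. For a suffix define the downstream sign $\sigma^{(k)}_{(j_k, \dots, j_d)} = \sgn\left(\prod_{l = k+1}^{d+1} a^{(l)}_{j_l, j_{l-1}}\right)$, recording whether raising the activation of neuron $j_k$ raises or lowers the output along that suffix; note that $\sigma^{(1)}_{(j_1,\dots,j_d)}$ is precisely the quantity $\sgn(f, j_{2:d+1})$ appearing in~\eqref{EqnThreeStar}. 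The engine of the proof is the sign-dependent invariant
\[
y\, \sigma^{(k)}_{(j_k, \dots, j_d)}\, V^{(k)}_{(j_k, \dots, j_d)} \le y\, \sigma^{(k)}_{(j_k, \dots, j_d)}\, v^{(k)}_{j_k}(w),
\]
which I would establish by induction on $k$.

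For the base case $k = 1$, H\"older's inequality gives $(a^{(1)}_{j_1})^\trans w \in [-\epsilon \|a^{(1)}_{j_1}\|_q,\, \epsilon\|a^{(1)}_{j_1}\|_q]$, while the perturbation baked into $Tf$ is the endpoint $w^{(j_{1:d+1})}_f = -y\,\sigma^{(1)}_{(j_1,\dots,j_d)}\,\epsilon\|a^{(1)}_{j_1}\|_q$; monotonicity of $s_1$ then makes this endpoint extremal in exactly the direction dictated by $y\sigma^{(1)}$, which is the invariant. For the inductive step, I would use the telescoping identity $\sigma^{(k-1)}_{(j_{k-1}, \dots)} = \sgn(a^{(k)}_{j_k, j_{k-1}})\, \sigma^{(k)}_{(j_k, \dots)}$ to rewrite the tree pre-activation $y\sigma^{(k)}\sum_{j_{k-1}} a^{(k)}_{j_k, j_{k-1}} V^{(k-1)}_{(j_{k-1}, \dots)}$ as the nonnegative combination $\sum_{j_{k-1}} |a^{(k)}_{j_k, j_{k-1}}|\, y\,\sigma^{(k-1)}_{(j_{k-1},\dots)} V^{(k-1)}_{(j_{k-1},\dots)}$; applying the inductive hypothesis termwise and reversing the rewrite bounds it by the standard pre-activation in the $y\sigma^{(k)}$ direction, after which monotonicity of $s_k$ promotes the bound to the activations. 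Evaluating the invariant at the output layer, where $\sigma^{(d)}_{(j_d)} = \sgn(a^{(d+1)}_{1,j_d})$, multiplying by $|a^{(d+1)}_{1,j_d}| \ge 0$, and summing over $j_d$ yields $y\,Tf(x,y) \le y f(x+w)$; the general $y \in \{\pm 1\}$ is handled uniformly by carrying the global factor $y$ through every step.

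The main obstacle is the deep, nonlinear case. For a single hidden layer the result reduces to the elementary fact that a sum of independent per-path minima is at most the minimum, over a single shared perturbation, of the same sum, because the shared perturbation is more constrained. Once intermediate nonlinearities are present, however, the influence of a first-layer neuron on the output is a sum over downstream paths whose signs may disagree, so no single perturbation is simultaneously worst-case; this is exactly why the definition of $Tf$ decouples the perturbation across complete paths. The sign-dependent invariant, together with the nonnegativity of each $|a^{(k)}_{j_k,j_{k-1}}|$ and the monotonicity of the $s_k$, is what lets this decoupling propagate cleanly through the nonlinearities, and getting the bookkeeping of the path signs $\sigma^{(k)}$ and their telescoping relation correct is the delicate part. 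A minor point to dispatch is the case $a^{(k)}_{j_k,j_{k-1}} = 0$, where $\sgn$ vanishes; such terms contribute nothing and can be treated separately or absorbed via the convention $\sgn(0)=0$.
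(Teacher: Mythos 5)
Your proof is correct, and it establishes the key inequality by a genuinely different route from the paper. Both arguments reduce the claim to the same functional inequality $y\,Tf(x,y) \le y\,\Psi f(x,y)$ (with the equality part coming from Proposition~\ref{propSupTransformEquality} in both cases), but the paper proves it top-down by manipulating suprema: starting from $-y\,\Psi f(x,y) = \sup_{w \in B(\epsilon)} (-y) f(x+w)$, it iteratively applies the sup-decoupling Lemma~\ref{lemmaTreeIterate} to push the supremum through one layer at a time, from the output toward the input, accumulating path signs as in Remark~\ref{RemIterate}, until the innermost supremum is over a linear function of the perturbation and evaluates by H\"older to $\epsilon \|a^{(1)}_{j_1}\|_q$, at which point the resulting expression is recognized as $-y\,Tf(x,y)$. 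You instead argue bottom-up and pointwise: you fix a single $w \in B(\epsilon)$, induct up the layers on a sign-dependent invariant comparing the tree activations to the ordinary forward pass of $f(x+w)$, obtain the stronger pointwise domination $y\,Tf(x,y) \le y\,f(x+w)$ for every $w$, and only then take the infimum using $y\,\Psi f(x,y) = \inf_{w \in B(\epsilon)} y\,f(x+w)$. The mathematical core coincides---monotone activations, telescoping path signs (your identity $\sigma^{(k-1)} = \sgn\bigl(a^{(k)}_{j_k, j_{k-1}}\bigr)\,\sigma^{(k)}$ is exactly the sign accumulation in Remark~\ref{RemIterate}), and H\"older at the first layer---but your version never manipulates nested suprema, which makes it more elementary and makes transparent why the result holds: each per-path perturbation baked into $Tf$ is at least as adversarial as any shared perturbation. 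What the paper's route buys is that it is constructive: the successive relaxations are precisely how $Tf$ arises (the intermediate objects are the partially decoupled networks pictured in Section~\ref{AppTree}), and the decoupling lemma is stated in reusable form. Two very minor points: ``equivalent'' should be ``implied by,'' since monotonicity of $\ell$ (which need not be strict) yields only the implication from $y\,Tf(x,y) \le y\,\Psi f(x,y)$ to $\ell(\Psi f, z) \le \ell(Tf, z)$, which is the direction you actually use; and your $\sgn(0)=0$ caveat is handled exactly as you say, since such terms vanish on both sides of the inductive step.
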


As an immediate corollary, we obtain
\[
\expect \left[\sup _{w \in B(\epsilon)} \ell(f, z + w)\right]
\leq 
\expect \ell(Tf, z),
\]
so it suffices to bound this latter expectation. We have the following bound on the Rademacher complexity of \(T \functions_{\nets}\), proved in Section~\ref{subsecRademacher}:

\begin{lemma}
Let \(\functions_{\nets}\) be a class of neural networks of depth \(d\) satisfying \(\|A_{j}\|_{\infty} \leq \alpha_{j}\) and \(\|A_{j}\|_{F} \leq \alpha_{1, F}\), for each \(j = 1, \ldots, d+1\), and let \(\alpha = \prod_{j = 1}^{d + 1} \alpha_{j}\). 
Additionally, suppose \(\max_{j = 1, \ldots, J_{1}}\|a_{j}^{(1)}\|_{q} \leq \alpha_{1, q}\) and \(\|x_{i}\|_2 \le R\) for all $i$.
Then we have the bound
\begin{align*}
& \begin{aligned}
\emprad_{n}(T \functions_{\nets})
&\leq 
\alpha\left( \frac{\alpha_{1, F}}{\alpha_{1}}R + \frac{\alpha_{1, q}}{\alpha_{1}} \epsilon\right) 
\cdot \frac{\sqrt{2d \log 2} + 1}{\sqrt{n}}.
   \end{aligned}
\end{align*}
\label{lemmaNNRad2}
\end{lemma}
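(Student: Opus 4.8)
The plan is to follow the exponential-moment / layer-peeling strategy of \cite{golowich2018}, adapting it in two ways: the upper layers are controlled through their $\ell_\infty$-operator norms $\alpha_k$ (so that peeling produces maxima over individual neurons, matching the ``path'' picture of the tree transform), and the innermost pre-activation carries the extra perturbation term $w^{(j_{1:d+1})}_f$, which will generate the $\epsilon$-dependent part of the bound. The starting point is Jensen's inequality in the form
\[
\emprad_n(T\functions_{\nets}) = \frac{1}{n}\expect_\sigma\sup_{f}\sum_{i=1}^n\sigma_i Tf(x_i,y_i) \le \frac{1}{\lambda n}\log\expect_\sigma\sup_{f}\exp\left(\lambda\sum_{i=1}^n\sigma_i Tf(x_i,y_i)\right),
\]
valid for every $\lambda>0$; I would bound the log-moment-generating function by peeling one layer at a time and optimize over $\lambda$ at the end.

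Next I would peel the matrices $A^{(d+1)},\dots,A^{(2)}$ from the output inward. At each step the relevant row has $\ell_1$-norm at most $\|A^{(k)}\|_\infty\le\alpha_k$, so the linear combination over the neurons below is bounded by $\alpha_k\max_j|\cdot|$. Two things then happen inside the exponential: (i) the maximum over neurons is absorbed into the supremum over the function class — after relaxing the weight-sharing between sibling sub-networks, $\sup_f\max_j$ of the depth-reduced sub-network rooted at neuron $j$ is at most the supremum over the class of such sub-networks satisfying the same norm constraints — so no factor of $\log J_k$ appears, which is the source of the width-independence; and (ii) the $1$-Lipschitz activation $s_k$ (with $s_k(0)=0$) is removed by an exponential-moment contraction inequality, at the cost of a single factor of $2$. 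Iterating over the $d$ activations produces the factor $2^d$ and the product $\prod_{k=2}^{d+1}\alpha_k=\alpha/\alpha_1$, reducing the bound to the innermost pre-activations $\max_{j_1}\bigl|\sum_i\sigma_i\bigl((a^{(1)}_{j_1})^\trans x_i + w^{(j_{1:d+1})}_f\bigr)\bigr|$.

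For the base case I would split the innermost pre-activation by the triangle inequality into a data part and a perturbation part, both uniform in $j_1$ so that the remaining maximum disappears. The data part is controlled by Cauchy--Schwarz: since $\|a^{(1)}_{j_1}\|_2\le\|A^{(1)}\|_F\le\alpha_{1,F}$ and $\|x_i\|_2\le R$, it is at most $\alpha_{1,F}\|\sum_i\sigma_i x_i\|_2$, with $\expect_\sigma\|\sum_i\sigma_i x_i\|_2\le R\sqrt n$. In the perturbation part the path sign $\sgn(f,j_{2:d+1})$ factors out as $\pm1$ and vanishes under the absolute value, and $\sigma_i y_i$ is again Rademacher, so it is at most $\epsilon\|a^{(1)}_{j_1}\|_q|\sum_i\sigma_i y_i|\le\epsilon\alpha_{1,q}|\sum_i\sigma_i y_i|$ with $\expect_\sigma|\sum_i\sigma_i y_i|\le\sqrt n$. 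Viewing the combined base quantity as a bounded-differences function of $\sigma$ with expectation at most $(\alpha/\alpha_1)(\alpha_{1,F}R+\alpha_{1,q}\epsilon)\sqrt n$ and matching sub-Gaussian variance proxy, the log-moment bound takes the form $\frac{d\log 2}{\lambda n}+\frac{\kappa}{\sqrt n}+\frac{\lambda\kappa^2}{2}$ with $\kappa=(\alpha/\alpha_1)(\alpha_{1,F}R+\alpha_{1,q}\epsilon)$; optimizing $\lambda=\kappa^{-1}\sqrt{2d\log 2/n}$ yields $\kappa(\sqrt{2d\log 2}+1)/\sqrt n$, which is exactly the claimed bound.

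The main obstacle is executing the peeling recursion inside the exponential so that it is simultaneously free of the widths $J_k$ and consistent with the path-dependent sign in $w^{(j_{1:d+1})}_f$. Width-independence is precisely why one works with the log-moment-generating function and absorbs the neuron-maxima into the supremum over depth-reduced sub-networks instead of bounding a maximum by a sum. The sign coupling turns out to be benign for an upper bound: because the perturbation enters only through its sign and is bounded in absolute value at the base, one may replace $\sgn(f,j_{2:d+1})$ by an arbitrary worst-case sign from the outset, which can only enlarge the supremum and fully decouples the perturbation from the weights being peeled. The genuinely delicate ingredient is the exponential-moment contraction that strips each activation while keeping the growth at $2^d$ rather than width-dependent; granting that, the remaining steps are the routine Khintchine and bounded-differences estimates indicated above.
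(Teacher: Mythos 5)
Your proposal is correct and follows essentially the same route as the paper's proof: the exponential-moment (log-MGF) method with layer peeling at a cost of $2$ per activation in the style of \cite{golowich2018}, followed by Cauchy--Schwarz on the data term, the sign/absolute-value bound on the path-dependent perturbation term, a bounded-differences sub-Gaussian estimate for the resulting functional $Z(\sigma)$, and optimization over $\lambda$, yielding exactly $\alpha\left(\frac{\alpha_{1,F}}{\alpha_1}R+\frac{\alpha_{1,q}}{\alpha_1}\epsilon\right)\frac{\sqrt{2d\log 2}+1}{\sqrt{n}}$. The only cosmetic difference is that you absorb the neuron-maxima into an enlarged supremum over depth-reduced sub-networks, whereas the paper carries the explicit maxima over path indices $j_2,\ldots,j_{d+1}$ and eliminates them at the base via the uniform bounds $\|a^{(1)}_{j}\|_2\le\alpha_{1,F}$ and $\|a^{(1)}_{j}\|_q\le\alpha_{1,q}$; the two bookkeeping devices are equivalent here.
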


Finally, we have our adversarial risk bounds for neural networks. The proof is contained in Section~\ref{subsecCorProofs}.

\begin{corollary}
Let \(\functions_{\nets}\) be a class of neural networks of depth \(d\). 
Let \(g_{i}(a) = \ellbar_{\text{xe}}(\delta(a), y_{i})\).
Let \(\ell\) be a 1-Lipschitz loss bounded by 1.
Under the same assumptions as Lemma~\ref{lemmaNNRad2}, for any $f \in \functions_{\nets}$, we have
the upper bounds
\begin{align*}
&\begin{aligned}
\risk_{\robust}(\ell, f) = \expect_{P} \left[\ell (Tf, z)\right]
&\leq 
\frac{1}{n} \sum_{i = 1}^{n} \ell(Tf, z_{i})  +
3\sqrt{\frac{\log \frac{2}{\delta}}{2n}} 
+ 
2 \alpha\left( \frac{\alpha_{1, F}}{\alpha_{1}}R + \frac{\alpha_{1, q}}{\alpha_{1}} \epsilon\right) 
\frac{\sqrt{2d \log 2} + 1}{\sqrt{n}}
\end{aligned}
\end{align*}
and
\begin{align}
\label{EqnGenNN}
&\begin{aligned}
\risk_{\robust}(\lxe, f)
&\leq 
\frac{1}{n} \sum_{i = 1}^{n} \lxe(f, z_{i}) 
+
3\sqrt{\frac{\log \frac{2}{\delta}}{2n}}
+
\epsilon 
\max_{j = 1, \ldots, J_{1}} \left\|a^{(1)}_{j}\right\|_{q} 
\prod_{j = 2}^{d + 1} \|A_{j}\|_{\infty}
\frac{1}{n} \sum_{i = 1}^{n} |g'_{i}(Tf(x_{i}, y_i))|
\\ &\qquad 
+ 
2 \alpha\left( \frac{\alpha_{1, F}}{\alpha_{1}}R + \frac{\alpha_{1, q}}{\alpha_{1}} \epsilon\right) 
\frac{\sqrt{2d \log 2} + 1}{\sqrt{n}},
\end{aligned}
\end{align}
with probability at least $1-\delta$.
\label{corNNGeneralization}
\end{corollary}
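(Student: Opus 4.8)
The plan is to derive Corollary~\ref{corNNGeneralization} by combining the three ingredients already assembled: the pointwise domination of adversarial loss by the tree-transform loss (Proposition~\ref{propTreeLoss}), the Rademacher bound on $T\functions_{\nets}$ (Lemma~\ref{lemmaNNRad2}), and a standard uniform-convergence inequality. First I would write down the starting identity from Proposition~\ref{propTreeLoss}: for any $f$,
\[
\risk_{\robust}(\ell, f) = \expect_P\left[\sup_{w \in B(\epsilon)} \ell(f, z+w)\right] \leq \expect_P\left[\ell(Tf, z)\right].
\]
To control $\expect_P[\ell(Tf,z)]$ uniformly over $f \in \functions_{\nets}$, I would invoke the usual Rademacher-based generalization bound for a loss bounded by $1$: with probability at least $1-\delta$, $\expect_P[\ell(Tf,z)] \leq \frac{1}{n}\sum_{i=1}^n \ell(Tf, z_i) + 2\rademacher_n(\ell \circ T\functions_{\nets}) + 3\sqrt{\frac{\log(2/\delta)}{2n}}$. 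Since $\ell$ is $1$-Lipschitz, the Ledoux–Talagrand contraction principle gives $\rademacher_n(\ell \circ T\functions_{\nets}) \leq \rademacher_n(T\functions_{\nets})$, and substituting the bound of Lemma~\ref{lemmaNNRad2} yields the first displayed inequality directly.

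For the cross-entropy bound~\eqref{EqnGenNN}, the extra work is to pass from the tree-transform empirical loss $\frac{1}{n}\sum_i \lxe(Tf, z_i)$ back to the \emph{untransformed} empirical loss $\frac{1}{n}\sum_i \lxe(f, z_i)$, picking up a first-order correction term. The idea is that $Tf(x_i,y_i)$ differs from $f(x_i)$ only through the injected perturbation terms $w_f^{(j_{1:d+1})}$, each of magnitude $\epsilon \|a^{(1)}_{j_1}\|_q$. Writing $\lxe(Tf,z_i) = g_i(Tf(x_i,y_i))$ where $g_i(a) = \ellbar_{\text{xe}}(\delta(a), y_i)$, I would apply a mean-value / first-order Taylor argument: $g_i(Tf(x_i,y_i))$ is close to $g_i(f(x_i)) = \lxe(f,z_i)$, with the difference bounded by $|g_i'|$ evaluated at an appropriate point times the net displacement of the output. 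Propagating the per-path perturbation through the $1$-Lipschitz activations and bounding the intermediate matrix actions by their $\ell_\infty$-operator norms produces the factor $\prod_{j=2}^{d+1}\|A_j\|_\infty$ multiplying $\max_j \|a^{(1)}_j\|_q$, giving exactly the middle term $\epsilon \max_j\|a^{(1)}_j\|_q \prod_{j=2}^{d+1}\|A_j\|_\infty \cdot \frac{1}{n}\sum_i |g_i'(Tf(x_i,y_i))|$.

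I expect the main obstacle to be the displacement-propagation estimate in the second bound: one must argue carefully that the cumulative effect on the output of perturbing each input term $(a^{(1)}_{j_1})^\trans x + w_f^{(j_{1:d+1})}$ telescopes cleanly through the nested sums and activations, so that the total output shift is controlled by $\epsilon \max_j\|a^{(1)}_j\|_q \prod_{j=2}^{d+1}\|A_j\|_\infty$. The subtlety is that the perturbations $w_f^{(j_{1:d+1})}$ carry signs $\sgn(f, j_{2:d+1})$ chosen adversarially per path, so the bound must hold in the worst case over signs while still collapsing to a single clean product of operator norms; the monotonicity of the loss and the sign definitions in~\eqref{EqnThreeStar} are what make the linearization's derivative term appear with the correct sign and evaluation point. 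The remaining steps — the contraction principle, the operator-norm propagation, and the final substitution of Lemma~\ref{lemmaNNRad2} — are routine once this linearization is in hand.
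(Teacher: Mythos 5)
Your proposal is correct and follows essentially the same route as the paper: the first bound is Proposition~\ref{propTreeLoss} combined with the standard bounded, Lipschitz-loss generalization bound (Lemma~\ref{thmMainGeneralization}) and Lemma~\ref{lemmaNNRad2}, and your linearization-plus-layer-peeling argument for the cross-entropy term is exactly the paper's Lemma~\ref{lemmaNNPER2}, which rests on the derivative inequality of Lemma~\ref{lemmaCrossEntropyLipschitz} (with the derivative evaluated at $Tf(x_i, y_i)$, valid by the monotonicity structure you identified) and the $\ell_\infty$-operator-norm peeling of Lemma~\ref{lemmaDiffPeeling}. The one detail you gloss over is that $\lxe$ is unbounded, so the uniform-convergence step must be applied to the truncated loss $\lxe'(f,z) = \min\{1, \lxe(f,z)\}$, exactly as the paper does, before adding the empirical-risk comparison term.
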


\begin{remark}
As in the linear case, we can essentially recover preexisting non-adversarial risk bounds \citep{bartlett2017, golowich2018} by setting \(\epsilon = 0\).
Again, the effect of adversarial perturbations on the adversarial risk is the addition of $\mathcal{O}\left(n^{-1/2}\right)$ on top of the empirical risk bounds for the unperturbed loss. 
Finally, the bound~\eqref{EqnGenNN} includes an extra perturbation term that is linear in $\epsilon$, with coefficient reflecting the Lipschitz coefficient of the neural network, as well as a term $\frac{1}{n} \sum_{i=1}^n |g_i'(Tf(x_i, y_i))|$, which decreases as \(Tf\) improves as a classifier since \(|g'_{i}(Tf(x_{i}, y_i))|\) is small when \(\lxe(Tf, z_{i})\) is small. A similar term appears in the bound~\eqref{EqnLin2}.
\end{remark}

\subsection{A visualization of the tree transform}
\label{AppTree}

In this section, we provide a few pictures to illustrate the tree transform.
Consider the following two-layer network with two hidden units per layer:
\[
f(x) = A^{(3)} s_{2}(A^{(2)} s_{1}(A^{(1)} x)).
\]
We begin by visualizing 
\(\sup_{w \in B(\epsilon)} f(x + w)\) in Figure~\ref{figNetwork}.

\begin{figure}[!h]
\centering 
\captionsetup{width=0.8\linewidth}
\includegraphics[scale = 0.14]{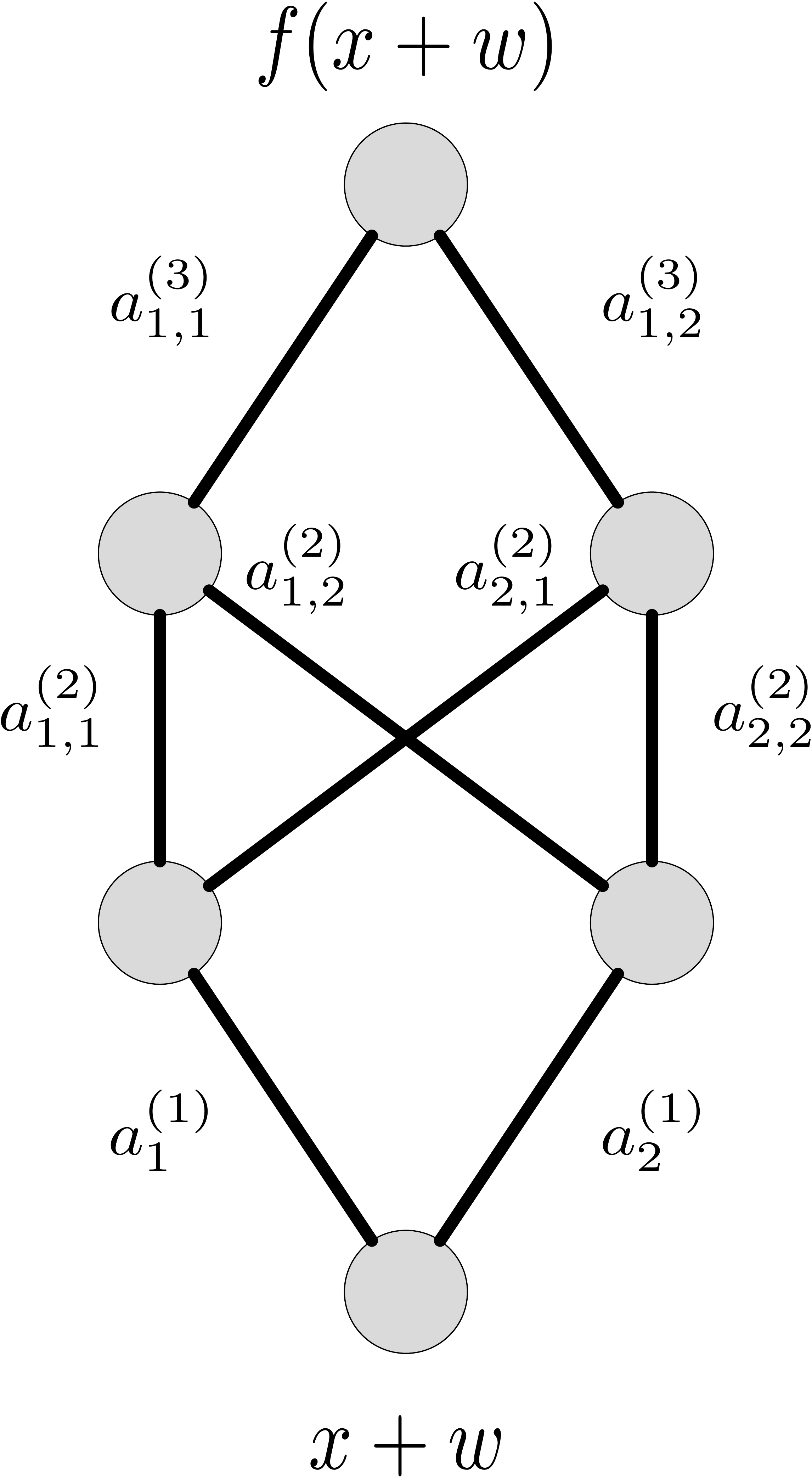}
\caption{A visualization of \(f(x + w)\). The input \(x + w\) is fed up through the network.}
\label{figNetwork}
\end{figure}

Next, we examine what happens when the supremum is taken inside the first layer. The resulting transformed function (cf.\ Lemma~\ref{lemmaTreeIterate} in Section~\ref{secProofs}) becomes
\begin{align}
&\begin{aligned}
g(x, y)
&=
\sum_{j_{3} = 1}^{2} a^{(3)}_{1, j_{3}}
s_{2}\left( 
\sgn\left(-ya^{(3)}_{1, j_{3}}\right)
\sup_{w^{(j_{3})} \in B(\epsilon)}
\sgn\left(-ya^{(3)}_{1, j_{3}}\right)
A^{(2)} s_{1}\left(A^{(1)}\left(x + w^{(j_{3})}\right) \right)
\right).
\label{eqnTreeStep1}
\end{aligned}
\end{align}
The corresponding network is shown in Figure~\ref{figTreeStep1}.

\begin{figure}[!h]
\centering 
\captionsetup{width=0.8\linewidth}
\includegraphics[scale = 0.14]{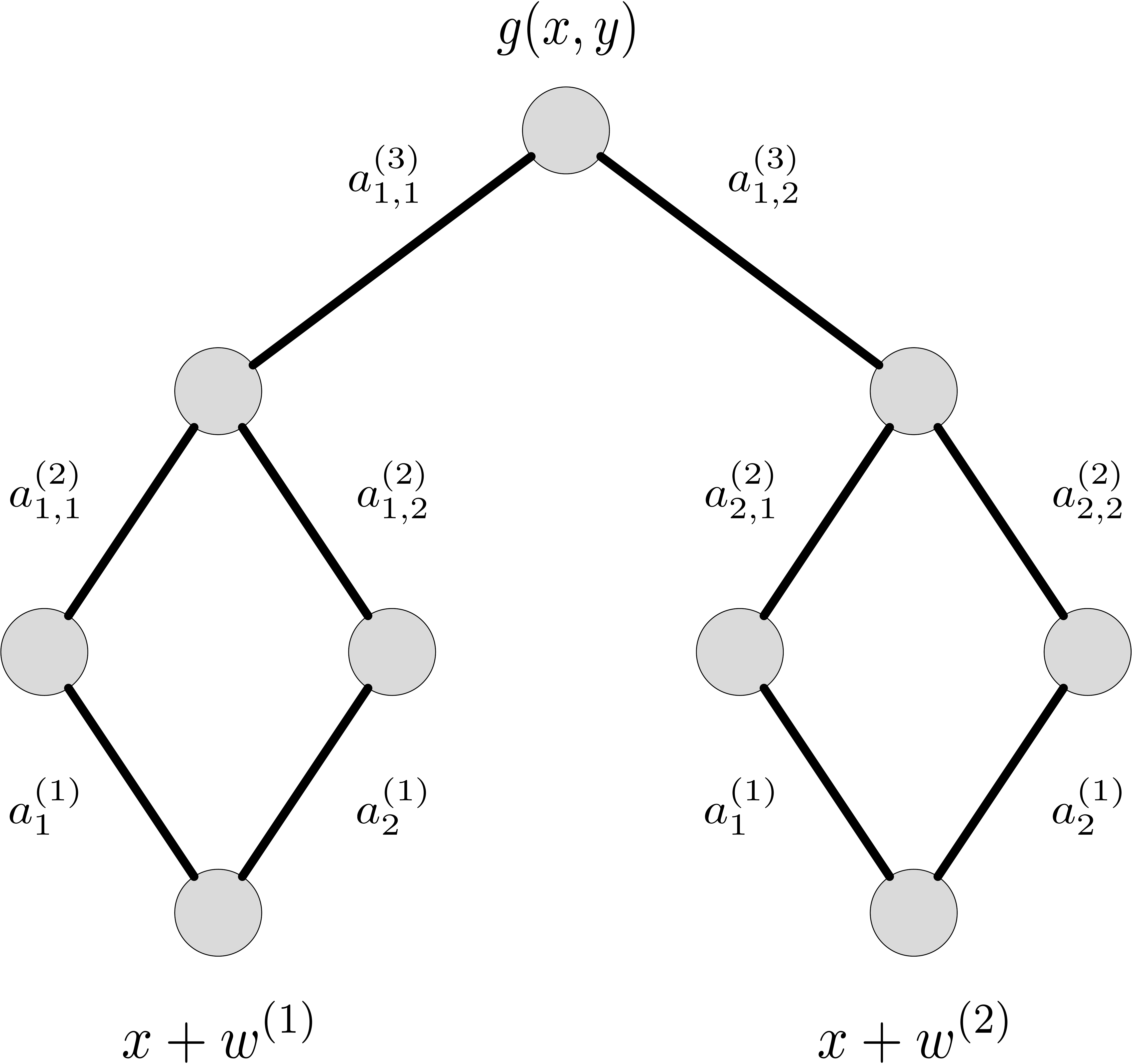}
\caption{A visualization of the function \(g(x, y)\) of equation~\eqref{eqnTreeStep1}. Note that  two different perturbations, \(w^{(1)}\) and \(w^{(2)}\), are fed upward through different paths in the network.}
\label{figTreeStep1}
\end{figure}

Finally, we examine the entire tree transform.
This is
\begin{align}
& \begin{aligned}
Tf(x, y) 
&=
\sum_{j_{3} = 1}^{2} a^{(3)}_{1, j_{3}}
s_{2}\left( 
\sum_{j_{2} = 1}^{J_{2}} a^{(2)}_{j_{3}, j_{2}}
 s_{1}\left( 
\sgn\left(-y a^{(3)}_{1, j_{3}} a^{(2)}_{j_{3}, j_{2}}\right)
\sup_{w^{(j_{2}, j_{3})}}
\left(a^{(1)}_{j_{2}}\right)^{\trans}
\left(x + w^{(j_{2}, j_{3})}\right) \right)
\right).
\label{eqnTreeStep2}
   \end{aligned}
\end{align}
The result is shown in Figure~\ref{figTreeStep2}. In particular, the visualization of the network resembles a tree, which is the reason we call \(T\) the tree transform. 

\begin{figure}[!h]
\centering 
\captionsetup{width=0.8\linewidth}
\includegraphics[scale = 0.14]{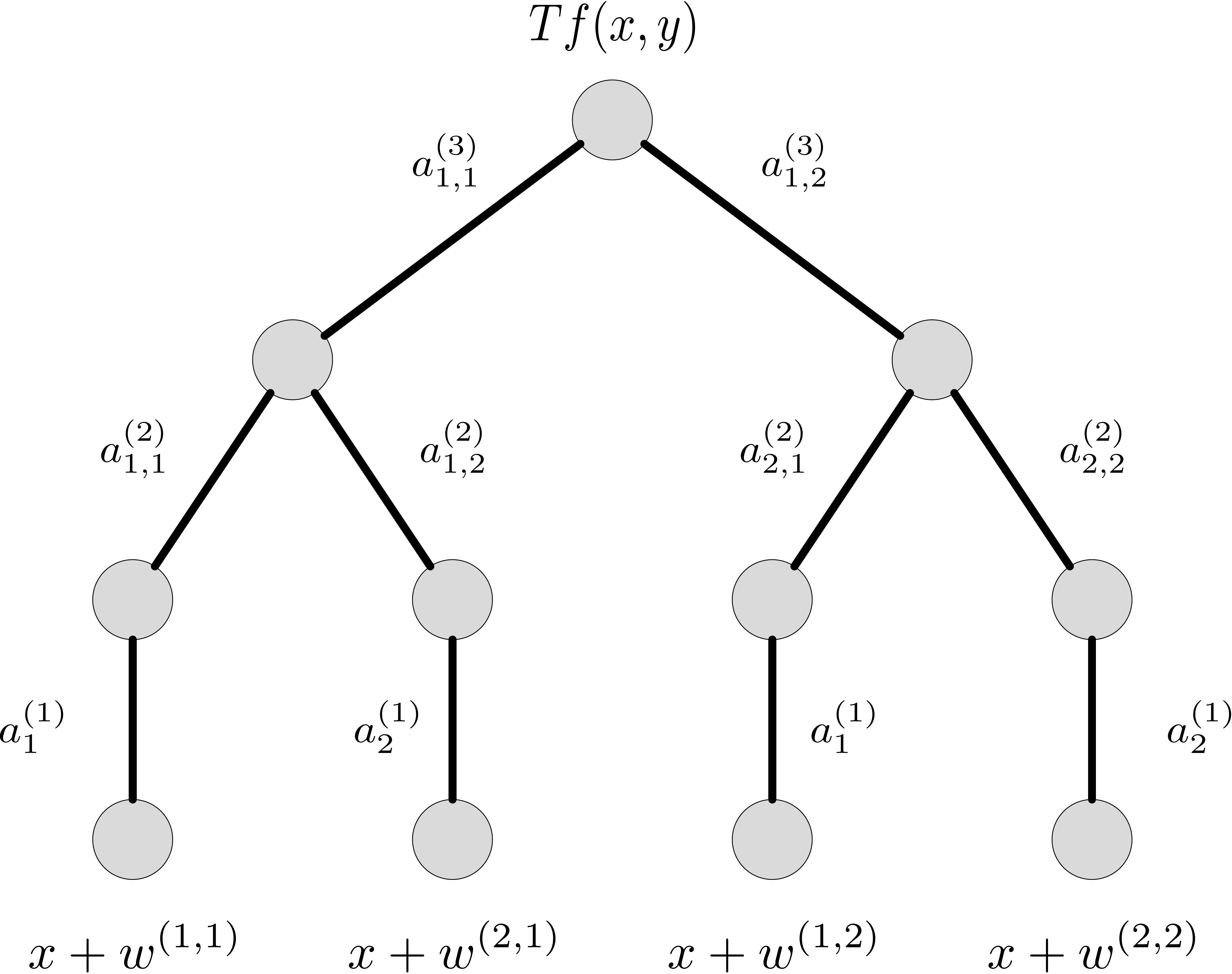}
\caption{A visualization of the function \(Tf(x, y)\) in equation~\eqref{eqnTreeStep2}. Note that  four distinct perturbed inputs are fed through the network via different paths. The resulting tree-structured graph leads to the name ``tree transform.''}
\label{figTreeStep2}
\end{figure}

\section{Extension to multiclass classification}
\label{secMulticlass}

In this section, we discuss how our results for binary classification may be extended to multiclass classification.
Before stating the results, we first discuss changes to our problem setup. Proofs of all results are contained in Appendix~\ref{AppMulticlass}.

\subsection{Setup}
\label{subsecMulticlassSetup}

In the case of multiclass classification, we need to make small adjustments to the label space \(\yspace\), the loss function \(\ell\), and the function classes \(\functions_{\linear}\) and \(\functions_{\nets}\).
First, our new space of labels is
\[
\yspace 
= 
\left\{y \in \{+1, -1\}^{K}: \text{exactly one index } y^{(i)} = +1 \right\},
\]
where the number of labels is denoted by \(K\). We write the components of \(y \in \yspace\) as 
\(y = \left(y^{(1)}, \ldots, y^{(K)}\right)\), reserving subscripts for distinct data points.
We also define \(\tilde{y} \in \{1, \ldots, K\}\) to be the index of the entry \(+1\). 

Next, we discuss the loss. We proceed via the method described in \cite{mohri2012}, which derives generalization error bounds in terms of the Rademacher complexity of a modified function class. Our multiclass results are specific to the multiclass margin loss, which is essentially a truncated hinge loss. The multiclass margin is defined by
\[
m_f(z) := f_{\tilde{y}}(x)
-
\max_{i \neq \tilde{y}} f_{i}(x).
\]
Next, define the function 
\[
\phi_{\rho}(t) 
\begin{cases}
0 & \rho \leq t \\ 
1 - \frac{t}{\rho} & 0 \leq t \leq \rho \\ 
1 & t \leq 0,
\end{cases}
\]
where \(\rho > 0\) is some constant. The margin loss \(\lmargin\) is defined as \(\lmargin(f(x), y) = \phi_{\rho}(m_f(z))\). The key observation is that $\lmargin$ is indeed a valid surrogate loss for the indicator loss, since
\begin{equation*}
\ind\left\{f_{\tilde{y}}(x) \neq \max_{i \neq \tilde{y}} f_i(x)\right\} \le \lmargin(f(x), y).
\end{equation*}
Note that by rescaling the hinge loss in binary classification, we could have introduced the margin parameter \(\rho\) previously.
However, we omitted this extra notation to ease readability.

Finally, we modify the function classes \(\functions_{\linear}\) and \(\functions_{\nets}\).
For the former, we consider linear functions \(f\) of the form
\[
f(x) = \Theta x + b,
\]
where \(\Theta\) is a \(K \times m\) matrix and \(b\) is in \(\reals^{K}\). We index the rows of $\Theta$ as follows: \(\Theta = (\theta_{1}, \ldots, \theta_{K})^{\trans}\).

A neural network \(f\) again has the form 
\begin{equation}
\label{EqnNNDef}
f(x) = A^{(d + 1)} s_{d}(A^{(d)} s_{d - 1}(\ldots s_{1}(A^{(1)} x))),
\end{equation}
but in this case, \(A^{(d + 1)}\) is a \(K \times J_{d}\) matrix instead of a \(1 \times J_{d}\) matrix.

\subsection{Multiclass results}
\label{subsecMulticlassResults}

As in the case of binary classification, we need some notion of monotonicity of the loss function, where we now have a function $\ell: \real^K \times \real^K \rightarrow \real$. We begin with the following definition, which is analogous to the monotonicity property in Definition~\ref{DefMonotone}:

\begin{definition}
We say that a loss function \(\ell\) is \emph{coordinate-wise decreasing} in \(y \circ f(x)\) if \(\ell(f(x), y)\) is increasing in \(f_{k}(x)\) for \(y^{(k)} = -1\) and decreasing in \(f_{k}(x)\) for \(y^{(k)} = +1\).
\label{defCWD}
\end{definition}

It is easy to check that the margin loss is coordinate-wise decreasing. Furthermore, we now show that coordinate-wise decreasing functions satisfy a certain monotonicity property. For $y \in \yspace$ and vectors $a, b \in \real^K$, we write \(a \preceq_{y} b\) if \(a_{k} \geq b_{k}\) for \(y^{(k)} = +1\) and \(a_{k} \leq b_{k}\) for \(y^{(k)} = -1\). We then have the following lemma:

\begin{lemma}
Let \(\ell\) be a loss function that is coordinate-wise decreasing in \(y \circ f(x)\), and suppose \(a \preceq_{y} b\). Then \(\ell(a, y) \leq \ell(b, y)\).
\label{lemmaMonotonic}
\end{lemma}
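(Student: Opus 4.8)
The plan is to prove the inequality by \emph{changing one coordinate at a time}, moving from $a$ to $b$ and checking that each single-coordinate change can only increase the loss. Since $\ell$ is coordinate-wise decreasing, and the partial order $\preceq_y$ was chosen precisely to match the direction of monotonicity in each coordinate, each step should be handled by a direct appeal to Definition~\ref{defCWD}.

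Concretely, I would introduce a sequence of interpolating vectors $c^{(0)}, c^{(1)}, \dots, c^{(K)}$ by setting $c^{(0)} = a$ and letting $c^{(k)}$ agree with $c^{(k-1)}$ in every coordinate except the $k$-th, where $a_k$ is replaced by $b_k$; thus $c^{(K)} = b$. The key step is to establish $\ell(c^{(k-1)}, y) \le \ell(c^{(k)}, y)$ for each $k$, which splits into two cases according to the sign of $y^{(k)}$. If $y^{(k)} = +1$, then $a \preceq_y b$ gives $a_k \ge b_k$, so passing from $c^{(k-1)}$ to $c^{(k)}$ \emph{decreases} the $k$-th coordinate; since $\ell$ is decreasing in $f_k(x)$ when $y^{(k)} = +1$, this cannot decrease the loss. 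If $y^{(k)} = -1$, then $a_k \le b_k$, so the $k$-th coordinate \emph{increases}; since $\ell$ is increasing in $f_k(x)$ when $y^{(k)} = -1$, the loss again cannot decrease. In both cases $\ell(c^{(k-1)}, y) \le \ell(c^{(k)}, y)$.

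Chaining these $K$ inequalities then telescopes to give
\[
\ell(a, y) = \ell(c^{(0)}, y) \le \ell(c^{(1)}, y) \le \cdots \le \ell(c^{(K)}, y) = \ell(b, y),
\]
which is the desired conclusion.

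There is no serious obstacle here: the definition of $\preceq_y$ is engineered so that each coordinate moves in the direction along which $\ell$ is monotone, and the result follows by a one-coordinate-at-a-time comparison. The only point requiring care is the bookkeeping of sign conventions — making sure that ``$a_k \ge b_k$ for $y^{(k)} = +1$'' in the partial order lines up with ``$\ell$ decreasing in $f_k(x)$ for $y^{(k)} = +1$'' in Definition~\ref{defCWD}, and symmetrically for the negative coordinates.
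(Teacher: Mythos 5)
Your proof is correct and is essentially identical to the paper's own argument: the paper also interpolates from $a$ to $b$ one coordinate at a time (its vectors $a_0, \dots, a_K$ are your $c^{(0)}, \dots, c^{(K)}$) and applies coordinate-wise monotonicity at each step before summing the resulting inequalities. The only cosmetic difference is that the paper merges your two sign cases into the single condition $a(k)\, y^{(k)} \geq b(k)\, y^{(k)}$, while you spell them out separately.
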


Next, we define the required transform.

\begin{definition}
The \emph{multiclass supremum transform} \(\Psi\) is defined componentwise by
\[
(\Psi f)_{k}(x, y) = \Psi (f_{k})\left(x, y^{(k)}\right).
\]
\label{defMulticlassSup}
\end{definition}

In other words, the $k^{\text{th}}$ component of the multiclass sup transform is the binary sup transform on the componentwise classifier $f_k$ of the $k^{\text{th}}$ component.

We now derive the following critical inequality:

\begin{proposition}
For any \(w\) in \(B(\epsilon)\), we have
\begin{align*}
& \begin{aligned}
f(x + w) 
&\preceq_{y}
\Psi f(x, y).
   \end{aligned}
\end{align*}
Thus,
\[
\sup_{w \in B(\epsilon)} \ell\left(f(x + w), z\right)
\leq 
\ell\left(\Psi f(x, y), z\right).
\]
\label{propPsiMonotone}
\end{proposition}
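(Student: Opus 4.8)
The plan is to establish the partial-order inequality $f(x+w) \preceq_y \Psi f(x,y)$ directly from the componentwise definition of the multiclass transform, and then read off the loss inequality from the monotonicity result of Lemma~\ref{lemmaMonotonic}. Since $\preceq_y$ is defined coordinate by coordinate, it suffices to analyze each output component $k$ separately, splitting into the cases $y^{(k)} = +1$ and $y^{(k)} = -1$.

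First I would unpack the binary sup transform inside Definition~\ref{defMulticlassSup}. Writing out $(\Psi f)_k(x,y) = -y^{(k)} \sup_{w' \in B(\epsilon)} (-y^{(k)}) f_k(x + w')$ and using $\sup(-g) = -\inf(g)$, the transform collapses to an extremum over perturbations whose type is dictated by the sign of $y^{(k)}$: for $y^{(k)} = +1$ it becomes $\inf_{w' \in B(\epsilon)} f_k(x+w')$, and for $y^{(k)} = -1$ it becomes $\sup_{w' \in B(\epsilon)} f_k(x+w')$. Now fix any $w \in B(\epsilon)$. In the first case $f_k(x+w) \geq \inf_{w'} f_k(x+w') = (\Psi f)_k(x,y)$, and in the second case $f_k(x+w) \leq \sup_{w'} f_k(x+w') = (\Psi f)_k(x,y)$. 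By the definition of $\preceq_y$, whose direction flips with the sign of $y^{(k)}$, these two facts are exactly the statement $f(x+w) \preceq_y \Psi f(x,y)$.

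Having established the ordering for each fixed $w$, the loss bound follows in two lines. Since $\ell$ is coordinate-wise decreasing in $y \circ f(x)$, Lemma~\ref{lemmaMonotonic} applied to $a = f(x+w)$ and $b = \Psi f(x,y)$ yields $\ell(f(x+w), y) \leq \ell(\Psi f(x,y), y)$ for every $w \in B(\epsilon)$. The right-hand side does not depend on $w$, so taking the supremum over $w \in B(\epsilon)$ on the left gives the claimed inequality $\sup_{w} \ell(f(x+w), z) \leq \ell(\Psi f(x,y), z)$.

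The proof is mostly bookkeeping, and the only point requiring care is the sign convention that flips the extremum (inf versus sup) and the direction of the coordinate inequality in tandem, so that the two always match up to give $\preceq_y$. Conceptually, the more interesting feature — and the reason we obtain an inequality here rather than the equality of Proposition~\ref{propSupTransformEquality} in the binary case — is that the componentwise transform lets the adversary choose a \emph{separate} perturbation $w'$ for each output coordinate, whereas the left-hand side $\sup_w \ell(f(x+w), z)$ commits to a single shared perturbation. Thus the multiclass transform hands the adversary strictly more power, which is precisely why it upper-bounds the true adversarial loss; I would keep an eye on this decoupling to confirm that no hidden coupling across coordinates is needed, which it is not, exactly because both $\preceq_y$ and the monotonicity of $\ell$ act coordinate-wise.
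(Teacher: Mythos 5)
Your proof is correct and follows essentially the same route as the paper's: a coordinatewise case split on the sign of $y^{(k)}$ to establish $f(x+w) \preceq_y \Psi f(x,y)$ (the paper keeps the form $-y^{(k)}\sup(-y^{(k)}f_k)$ while you rewrite it as an $\inf$/$\sup$, which is equivalent), followed by an application of Lemma~\ref{lemmaMonotonic} and a supremum over $w$. Your closing remark about the adversary choosing a separate perturbation per coordinate is a nice observation about why only an inequality holds here, but it is commentary rather than a different argument.
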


Note that unlike in the binary classification case (Proposition~\ref{propSupTransformEquality}), we do not have exact equality in the multiclass setting.
\subsubsection{Linear classification}
\label{subsecLinearMulticlass}
 
As in the case of binary classification, we can write the multiclass supremum transform of a linear classifier explicitly.

\begin{proposition}
Let \(f(x) = \Theta x + b\) be a linear classifier.
Then 
\[
\Psi f_{k}(x, y) 
= 
\theta_{k}^{\trans} x + b_{k} - y^{(k)} \epsilon \|\theta_{k}\|_{q},
\]
for all $1 \le k \le K$.
\label{propMCLinearSupEqn}
\end{proposition}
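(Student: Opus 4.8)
The plan is to reduce directly to the binary case via the componentwise definition of the multiclass supremum transform. By Definition~\ref{defMulticlassSup}, the $k^{\text{th}}$ component of $\Psi f$ is given by $(\Psi f)_k(x, y) = \Psi(f_k)(x, y^{(k)})$, where the right-hand side is the \emph{binary} sup transform of Definition~\ref{defSupTransform} applied to the scalar-valued map $f_k$ with the scalar label $y^{(k)} \in \{+1, -1\}$. Thus it suffices to evaluate the binary sup transform on each coordinate function, and no genuinely multiclass reasoning is needed.

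First I would observe that each coordinate function $f_k(x) = \theta_k^{\trans} x + b_k$ is itself a linear classifier in the sense of Section~\ref{secLinear}, with weight vector $\theta_k$ and bias $b_k$. Applying Proposition~\ref{propSupLinear} to $f_k$ with the label $y^{(k)}$ then yields immediately
\[
\Psi(f_k)(x, y^{(k)}) = \theta_k^{\trans} x + b_k - y^{(k)} \epsilon \|\theta_k\|_{q},
\]
and combining this with the componentwise identity $(\Psi f)_k(x,y) = \Psi(f_k)(x, y^{(k)})$ gives the claimed formula for every $1 \le k \le K$.

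There is no substantive obstacle here: the statement is a direct corollary of the binary computation, and the only point worth checking is that the multiclass transform genuinely decouples across coordinates, so that the dual-norm term $\|\theta_k\|_{q}$—which arises from the Hölder-duality step in the proof of Proposition~\ref{propSupLinear}—appears with the correct weight vector $\theta_k$ for each fixed $k$. If one preferred a self-contained argument instead of invoking Proposition~\ref{propSupLinear}, one could expand $\sup_{w \in B(\epsilon)} (-y^{(k)})\bigl(\theta_k^{\trans}(x+w) + b_k\bigr)$ directly, pull out the terms not depending on $w$, and use the identification $\sup_{\|w\|_p \le \epsilon} \theta_k^{\trans} w = \epsilon \|\theta_k\|_{q}$ to recover the same expression; but reusing the binary proposition coordinatewise is cleaner and avoids repeating the duality calculation.
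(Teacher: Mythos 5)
Your proof is correct and follows exactly the route the paper intends: the paper omits the proof of Proposition~\ref{propMCLinearSupEqn} precisely because, as you argue, the componentwise definition of the multiclass sup transform (Definition~\ref{defMulticlassSup}) reduces it to applying Proposition~\ref{propSupLinear} to each coordinate function $f_k(x) = \theta_k^{\trans}x + b_k$ with label $y^{(k)}$. Nothing is missing, and your remark that the decoupling across coordinates is the only point needing verification is exactly the right observation.
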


Since the definition of the multiclass supremum transform is componentwise, the proof is exactly the same as in the case of binary classification (Proposition~\ref{propSupLinear}), so we omit the proof.

Next, we obtain a generalization bound.
We use Lemma~\ref{lemmaMulticlassGeneralization} from Appendix~\ref{appAuxLemmas}.
To instantiate this bound, we need to derive a bound on the Rademacher complexity of the function class
\[
\Pi_{1}(\functions) 
:= 
\left\{
(x, y) \mapsto f_{\tilde{y}}(x): y \in \yspace, f \in \functions
\right\}.
\]
Fortunately, the proof from the case of binary classification translates almost exactly. We have the following result:

\begin{lemma}
Let \(\functions_{\linear}\) be a compact linear function class such that
\(\|\Theta\|_{2, \infty} \leq M_{2}\) and
\(\|\Theta\|_{q, \infty} \leq M_{q}\) 
for all \(f \in \functions_{\linear}\), where \(f(x) = \Theta x + b\). Suppose $\|x_i\|_2 \le R$ for all $i$.
Then we have
\[
\emprad_{n}( \Pi_{1}(\Psi \functions_{\linear}))
\leq 
\frac{M_{2} R}{\sqrt{n}} 
+
\frac{\epsilon M_{q}}{2 \sqrt{n}}.
\]
\label{lemmaMulticlassLinearERC}
\end{lemma}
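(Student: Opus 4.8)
The plan is to follow the proof of Lemma~\ref{lemmaLinearERC} as closely as possible, exploiting the fact that the multiclass sup transform acts componentwise (Definition~\ref{defMulticlassSup}) and that the projection $\Pi_{1}$ always evaluates the coordinate of the \emph{true} label, where $y^{(\tilde{y})} = +1$. First I would expand the empirical Rademacher complexity. By Proposition~\ref{propMCLinearSupEqn}, together with $y^{(\tilde{y}_{i})} = +1$ for every data point, the map $(x_{i}, y_{i}) \mapsto (\Psi f)_{\tilde{y}_{i}}(x_{i}, y_{i})$ equals $\theta_{\tilde{y}_{i}}^{\trans} x_{i} + b_{\tilde{y}_{i}} - \epsilon \|\theta_{\tilde{y}_{i}}\|_{q}$, so that
\[
\emprad_{n}\left(\Pi_{1}(\Psi \functions_{\linear})\right)
=
\frac{1}{n}\expect_{\sigma}\sup_{f \in \functions_{\linear}}\sum_{i=1}^{n}\sigma_{i}\left(\theta_{\tilde{y}_{i}}^{\trans} x_{i} + b_{\tilde{y}_{i}} - \epsilon\left\|\theta_{\tilde{y}_{i}}\right\|_{q}\right).
\]
This is structurally identical to the binary expression, the only difference being that the row of $\Theta$ entering each term is dictated by the label $\tilde{y}_{i}$.

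Second, I would use subadditivity of the supremum to separate the linear contribution $\sum_{i}\sigma_{i}\theta_{\tilde{y}_{i}}^{\trans} x_{i}$ from the perturbation contribution $-\epsilon\sum_{i}\sigma_{i}\|\theta_{\tilde{y}_{i}}\|_{q}$, handling the bias exactly as in the binary proof. The natural device is to group the sum by class: setting $I_{k} = \{i : \tilde{y}_{i} = k\}$, the constraints $\|\theta_{k}\|_{2} \le M_{2}$ and $\|\theta_{k}\|_{q} \le M_{q}$ (read off from $\|\Theta\|_{2,\infty}$ and $\|\Theta\|_{q,\infty}$) decouple across $k$, so the supremum over $\Theta$ factors into independent suprema over the rows. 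On each block the computation is precisely the binary one: for the linear part, duality of the $\ell_{2}$-norm gives $\sup_{\|\theta_{k}\|_{2}\le M_{2}} \theta_{k}^{\trans}\!\left(\sum_{i \in I_{k}}\sigma_{i}x_{i}\right) = M_{2}\|\sum_{i\in I_{k}}\sigma_{i}x_{i}\|_{2}$, and Jensen's inequality bounds its expectation by $M_{2}\sqrt{\sum_{i \in I_{k}}\|x_{i}\|_{2}^{2}}$; for the perturbation part, the supremum over $\|\theta_{k}\|_{q}\le M_{q}$ equals $\epsilon M_{q}\max\{0, -\sum_{i\in I_{k}}\sigma_{i}\}$, and the symmetry of the Rademacher sum yields $\expect_{\sigma}\max\{0,-\sum_{i\in I_{k}}\sigma_{i}\} = \tfrac{1}{2}\expect_{\sigma}|\sum_{i\in I_{k}}\sigma_{i}|$, producing the $\tfrac{1}{2}\epsilon M_{q}$ factor.

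The step I expect to be the main obstacle is recombining the per-class blocks to recover the stated rate without picking up spurious dependence on the number of classes: the blockwise estimates scale with $\sqrt{|I_{k}|}$, and one must argue that, after summing, the aggregate is controlled by the full-sample quantity $\sum_{i}\|x_{i}\|_{2}^{2} \le nR^{2}$ and by $\expect_{\sigma}|\sum_{i}\sigma_{i}| \le \sqrt{n}$, so that the bound collapses to $\frac{M_{2}R}{\sqrt{n}} + \frac{\epsilon M_{q}}{2\sqrt{n}}$ exactly as in the binary case. This recombination — verifying that projecting onto the true-label coordinate does not inflate the complexity beyond the binary bound — is the crux of the argument, and the remaining manipulations are a routine transcription of the proof of Lemma~\ref{lemmaLinearERC}.
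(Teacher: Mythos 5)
Your proof has a genuine gap, and it originates in your reading of $\Pi_{1}$. In the paper (and in the margin bound it relies on, Lemma~\ref{lemmaMulticlassGeneralization}, following Kuznetsov et al.\ and Mohri et al.), $\Pi_{1}(\functions)$ is indexed by \emph{pairs} $(f, y)$ with $y \in \yspace$ a fixed class index: an element of $\Pi_{1}(\Psi\functions_{\linear})$ evaluates one and the same coordinate $k$ on every data point, and the Rademacher supremum ranges jointly over $f$ and $k$. This is exactly how the paper's proof expands the complexity,
\[
n\,\emprad_{n}(\Pi_{1}(\Psi \functions_{\linear}))
=
\expect_{\sigma}\left[\sup_{f \in \functions_{\linear},\, k \in \{1,\dots,K\}}
\sum_{i=1}^{n}\sigma_{i}\left(\theta_{k}^{\trans}x_{i} + b_{k} - y_{i}^{(k)}\epsilon\left\|\theta_{k}\right\|_{q}\right)\right],
\]
and it is why the binary argument of Lemma~\ref{lemmaLinearERC} transfers essentially verbatim: only a single row $\theta_{k}$ is active in the entire sum, so the linear part reduces to Lemma~\ref{lemmaMulticlassLinearRademacher} and the perturbation part to the same $\tfrac{1}{2}\expect_{\sigma}\bigl|\sum_{i}\sigma_{i}\bigr|$ computation as in the binary case, with no aggregation over classes ever occurring. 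You instead evaluate the coordinate of the \emph{true} label $\tilde{y}_{i}$, which changes from data point to data point; that is a different function class, and a bound on its complexity is not what Lemma~\ref{lemmaMulticlassGeneralization} consumes, so it could not be used to deduce Corollary~\ref{theoremMulticlassLinearGeneralization} anyway.

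More importantly, the step you yourself flag as the crux --- recombining the per-class blocks --- is not merely an obstacle; it fails. After the rows decouple, your linear term is $\frac{M_{2}}{n}\expect_{\sigma}\sum_{k}\bigl\|\sum_{i \in I_{k}}\sigma_{i}x_{i}\bigr\|_{2}$, which is of order $\frac{M_{2}R}{n}\sum_{k}\sqrt{|I_{k}|}$, and sums of square roots do not collapse to the square root of the total: by Cauchy--Schwarz, $\sum_{k}\sqrt{|I_{k}|}$ can be as large as $\sqrt{Kn}$, attained by balanced classes. This is not an artifact of loose bounding. Take all $x_{i} = R e_{1}$ and $|I_{k}| = n/K$; then $\expect_{\sigma}\sum_{k}\bigl\|\sum_{i\in I_{k}}\sigma_{i}x_{i}\bigr\|_{2} = R\sum_{k}\expect_{\sigma}\bigl|\sum_{i\in I_{k}}\sigma_{i}\bigr| \asymp R\sqrt{Kn}$, so for the class as you defined it the claimed bound $\frac{M_{2}R}{\sqrt{n}} + \frac{\epsilon M_{q}}{2\sqrt{n}}$ is simply false in general --- an extra factor of $\sqrt{K}$ is unavoidable, and the same defect hits your perturbation term, since $\sum_{k}\expect_{\sigma}\bigl|\sum_{i\in I_{k}}\sigma_{i}\bigr|$ also scales like $\sqrt{Kn}$. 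The repair is not a cleverer recombination but the paper's indexing of $\Pi_{1}$: with a single supremum over $k$ (and identical per-row constraint sets, so the supremum over $k$ in the linear part is vacuous), no per-class decomposition arises and no $K$-dependence enters the Rademacher bound; the $K$ factor in the final multiclass corollary comes only from the $8K/\rho$ constant in Lemma~\ref{lemmaMulticlassGeneralization}.
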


Comparing Lemmas~\ref{lemmaMulticlassLinearERC} and~\ref{lemmaLinearERC}, the main difference between the multiclass and binary classification results is that the bound in Lemma~\ref{lemmaMulticlassLinearERC} is in terms of the maximum \(2\)-norms and \(q\)-norms of the rows of the matrix \(\Theta\).
Indeed, due to the special structure of linear classifiers, the proof of the Rademacher complexity bound in the multiclass case is very similar to the case of binary classification.

The generalization bound follows then directly from Lemma~\ref{lemmaMulticlassGeneralization}:
\begin{corollary}
Let \(\functions_{\linear}\) be a collection of linear classifiers such that, for any classifier \(f(x) = \Theta x + b\) in \(\functions_{\linear}\),
we have \(\|\Theta\|_{2, \infty} \leq M_{2}\) and \(\|\Theta\|_{q, \infty} \leq M_{q}\).
Let \(R\) be a constant such that \(\|x_{i}\|_{2} \leq R\) for all \(i\).
Then for any \(f \in \functions_{\linear}\), we have
\begin{align*}
& \begin{aligned}
\label{EqnLin1}
\risk_{\robust}(\lmargin, f) 
= 
\expect_{\dist} \left[\lmargin(\Psi f, z)\right]
&\leq 
\frac{1}{n} \sum_{i = 1}^{n}  \lmargin(\Psi f, z_{i})
+
\frac{8K}{\rho} \cdot \frac{M_{2} R}{\sqrt{n}}
+
\frac{4K}{\rho} \cdot  \frac{\epsilon M_{q}}{\sqrt{n}}
+
3\sqrt{\frac{\log \frac{2}{\delta}}{2n}},
\end{aligned}
\end{align*}
with probability at least $1-\delta$.
\label{theoremMulticlassLinearGeneralization}
\end{corollary}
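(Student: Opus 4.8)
The plan is to combine three results already in hand: the transform inequality of Proposition~\ref{propPsiMonotone}, the abstract multiclass margin generalization bound of Lemma~\ref{lemmaMulticlassGeneralization}, and the Rademacher estimate of Lemma~\ref{lemmaMulticlassLinearERC}. The first step is to reduce the adversarial risk to the ordinary risk of the transformed classifier. Since the margin loss \(\lmargin\) is coordinate-wise decreasing in \(y \circ f(x)\), Proposition~\ref{propPsiMonotone} yields
\[
\risk_{\robust}(\lmargin, f)
=
\expect_{\dist}\!\left[\sup_{w \in B(\epsilon)} \lmargin(f(x+w), z)\right]
\leq
\expect_{\dist}\!\left[\lmargin(\Psi f, z)\right],
\]
so it suffices to bound the last expectation. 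Note that, unlike the binary case, this is an inequality rather than the equality written in the statement, because Proposition~\ref{propPsiMonotone} only provides an upper bound in the multiclass setting.

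Next I would feed the transformed class \(\Psi \functions_{\linear}\) into Lemma~\ref{lemmaMulticlassGeneralization}, regarding each \(\Psi f\) as a \(\real^{K}\)-valued classifier. The lemma bounds \(\expect_{\dist}[\lmargin(\Psi f, z)]\) by the empirical margin loss \(\frac{1}{n}\sum_{i=1}^{n} \lmargin(\Psi f, z_{i})\), plus a complexity contribution of the form \(\frac{8K}{\rho}\,\emprad_{n}(\Pi_{1}(\Psi \functions_{\linear}))\) — the factor \(\frac{1}{\rho}\) coming from the Lipschitz constant of \(\phi_{\rho}\), the factor \(K\) from the union over the \(K\) coordinates, and the remaining numerical constant from symmetrization — plus the confidence term \(3\sqrt{\log(2/\delta)/(2n)}\).

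Finally I would substitute the Rademacher estimate. Plugging \(\emprad_{n}(\Pi_{1}(\Psi \functions_{\linear})) \leq \frac{M_{2} R}{\sqrt{n}} + \frac{\epsilon M_{q}}{2\sqrt{n}}\) from Lemma~\ref{lemmaMulticlassLinearERC} into the previous display and multiplying by \(\frac{8K}{\rho}\) distributes as
\[
\frac{8K}{\rho}\cdot\frac{M_{2} R}{\sqrt{n}}
+
\frac{4K}{\rho}\cdot\frac{\epsilon M_{q}}{\sqrt{n}},
\]
which reproduces exactly the two complexity terms in the claim; collecting everything then gives the stated bound with probability at least \(1-\delta\).

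The main subtlety lies in the middle step: one must check that Lemma~\ref{lemmaMulticlassGeneralization} applies verbatim to the transformed, \emph{label-dependent} class \(\Psi \functions_{\linear}\), since \(\Psi f\) is a function of both \(x\) and \(y\). By Proposition~\ref{propMCLinearSupEqn} the transform shifts the true-label coordinate by \(-\epsilon\|\theta_{\tilde{y}}\|_{q}\) and each competing coordinate by \(+\epsilon\|\theta_{i}\|_{q}\), so the margin \(m_{\Psi f}\) and the projected class \(\Pi_{1}(\Psi \functions_{\linear})\) remain well-defined and the abstract argument goes through unchanged; this is precisely why the Rademacher bound was stated for \(\Pi_{1}(\Psi \functions_{\linear})\) rather than for \(\Psi \functions_{\linear}\) directly.
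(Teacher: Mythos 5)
Your proposal is correct and follows essentially the same route as the paper: the paper likewise obtains this corollary by feeding the Rademacher bound of Lemma~\ref{lemmaMulticlassLinearERC} on \(\emprad_{n}(\Pi_{1}(\Psi \functions_{\linear}))\) directly into the multiclass margin bound of Lemma~\ref{lemmaMulticlassGeneralization}, with Proposition~\ref{propPsiMonotone} supplying the reduction from adversarial risk to the risk of \(\Psi f\). Your remark that this reduction is an inequality rather than the equality written in the statement is a valid (and careful) observation, consistent with the paper's own note following Proposition~\ref{propPsiMonotone}, and it does not affect the final bound.
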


Note that the difference between our linear binary classification result and this result is entirely due to the difference between the generalization bounds obtained in binary classification versus those obtained by multiclass classifcation. The same situation arises in our analysis of neural networks.

\begin{remark}
It is instructive to compare Corollaries~\ref{corGeneralizationLinear} and \ref{theoremMulticlassLinearGeneralization}. The main difference is that in Corollary~\ref{theoremMulticlassLinearGeneralization}, the constant multiplying the Rademacher complexity term is \(8K / \rho\) instead of \(2\). Note that the factor \(1 / \rho\) originates from the Lipschitz constant of the loss \(\lmargin\). Choosing larger values of \(\rho\) decreases the effect of the Rademacher complexity term, at the expense of possibly making the empirical risk of \(\Psi f\) with respect to $\ell_\rho$ larger.
Secondly, due to additional complexities caused by the vectorization of the margin loss, we incur a factor of \(8K\) instead of \(2\) in removing the loss.
However, this is not ideal when \(K\) is large. The linear dependence on $K$ might 
possibly be removed by alternatively considering a covering number-based bound on Rademacher complexity, although with existing tools, this would lead to additional terms depending on the depth of the network.
\end{remark}

\subsubsection{Neural networks}
\label{subsecNNMulticlass}

We now define the multiclass tree transform for neural networks. Let \(f\) be a neural network given by equation~\eqref{EqnNNDef}. We write the $k^{\text{th}}$ entry of \(f(x)\) as the neural network
\[
f_{k}(x) 
=
A^{(d + 1)}_{k} s_{d} \left( A^{(d)} s_{d - 1} \left(\ldots s_{1}\left( A^{(1)} x \right) \right) \right).
\]

\begin{definition}
The \emph{multiclass tree transform} is defined componentwise by
\[
(Tf)_{k}(x, y)
= 
T(f_{k})(x, y).
\]
\label{defMCTree}
\end{definition}

Since $\Psi$ and $T$ are defined componentwise, and $-y \Psi(f_k)(x,y) \le -y T(f_k)(x,y)$ for all $k$, we clearly have $\Psi f(x,y) \le_y Tf(x,y)$. Hence, \(\ell(\Psi f, z) \leq \ell(Tf, z)\) for an \(\ell\) that is coordinate-wise decreasing in \(y \circ f(x)\), by Lemma~\ref{lemmaMonotonic}. By Proposition~\ref{propPsiMonotone}, we then have
\begin{equation*}
\sup_{w \in B(\epsilon)} \ell(f(x+w),z) \le \ell(Tf(x,y),z).
\end{equation*}



Accordingly, it suffices to derive a bound on the Rademacher complexity of $\Pi_1(T \functions)$: 

\begin{lemma}
Let \(\functions_{\nets}\) be a class of neural networks of depth \(d\) satisfying \(\|A_{j}\|_{\infty} \leq \alpha_{j}\) and \(\|A_{j}\|_{F} \leq \alpha_{1, F}\), for each \(j = 1, \ldots, d+1\), and let \(\alpha = \prod_{j = 1}^{d + 1} \alpha_{j}\). 
Additionally, suppose \(\max_{j = 1, \ldots, J_{1}}\|a_{j}^{(1)}\|_{q} \leq \alpha_{1, q}\) and \(\|x_{i}\|_2 \le R\) for all $i$.
Then we have the bound
\begin{align*}
& \begin{aligned}
\emprad_{n}(\Pi_{1}(T \functions))
&\leq 
\alpha\left( \frac{\alpha_{1, F}}{\alpha_{1}}R + \frac{\alpha_{1, q}}{\alpha_{1}} \epsilon\right) 
\cdot \frac{\sqrt{2d \log 2} + 1}{\sqrt{n}}.
   \end{aligned}
\end{align*}
\label{lemmaMulticlassTreeRademacher}
\end{lemma}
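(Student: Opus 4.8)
The plan is to reduce the bound directly to its binary counterpart, Lemma~\ref{lemmaNNRad2}, by exploiting the componentwise definition of the multiclass tree transform. First I would unfold the definition of $\Pi_1(T\functions)$: each element is the map $(x,y)\mapsto (Tf)_{\tilde y}(x,y)$, and since the multiclass tree transform is defined componentwise (Definition~\ref{defMCTree}), we have $(Tf)_{\tilde y}(x,y) = T(f_{\tilde y})(x,y)$. The crucial observation is that $f_{\tilde y}$ is itself a scalar-valued neural network of depth $d$ with the same hidden layers $A^{(1)},\dots,A^{(d)}$ as $f$, but whose output layer is the single row $A^{(d+1)}_{\tilde y}$ of the matrix $A^{(d+1)}$. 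Thus $T(f_{\tilde y})$ has exactly the functional form~\eqref{EqnTwoStar} analyzed in the binary case, and $\Pi_1(T\functions)$ is nothing more than the binary tree-transformed class applied to these scalar sub-networks, with the extra freedom of choosing the output component $\tilde y$.

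Next I would verify that this extra freedom does not enlarge the effective function class beyond what Lemma~\ref{lemmaNNRad2} already controls. The key point is that all of the norm hypotheses transfer to the selected row uniformly: because $\|A^{(d+1)}\|_\infty\le\alpha_{d+1}$ is the $\ell_\infty$-operator norm, every row satisfies $\|A^{(d+1)}_{\tilde y}\|_1\le\alpha_{d+1}$, and likewise $\|A^{(d+1)}_{\tilde y}\|_2\le\|A^{(d+1)}\|_F\le\alpha_{1,F}$; the constraints on $A^{(1)},\dots,A^{(d)}$ and on $\max_j\|a^{(1)}_j\|_q$ are inherited unchanged. Consequently, for every choice of $\tilde y$, the scalar network $f_{\tilde y}$ lies in a binary network class satisfying precisely the hypotheses of Lemma~\ref{lemmaNNRad2}, so the supremum defining $\emprad_n(\Pi_1(T\functions))$ ranges over a class contained in the corresponding binary tree-transformed class. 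I would then invoke Lemma~\ref{lemmaNNRad2} to obtain the stated bound, with no dependence on $K$.

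I expect the main obstacle to be this second step: making precise that replacing the single fixed output of the binary case by a supremum over the output component induced by $y$ does not inflate the bound by a factor of $K$ (as a crude grouping-by-label argument would suggest). This is exactly where the $\ell_\infty$-operator norm control on $A^{(d+1)}$ is essential rather than, say, a Frobenius bound on the whole matrix: it guarantees a row-by-row $\ell_1$ budget of $\alpha_{d+1}$ that is identical to the top-layer budget used in the peeling argument underlying Lemma~\ref{lemmaNNRad2}. The remaining ingredients---the Golowich-style layer-by-layer peeling that accumulates $\prod_{j\ge 2}\alpha_j$ together with the $\sqrt{2d\log 2}+1$ factor, and the first-layer step that produces the two terms $\frac{\alpha_{1,F}}{\alpha_1}R$ and $\frac{\alpha_{1,q}}{\alpha_1}\epsilon$ from $\|x_i\|_2\le R$ and the perturbation term $w^{(j_{1:d+1})}_f$ respectively---are identical in form to the binary proof and require no modification, so I would cite them rather than reproduce them.
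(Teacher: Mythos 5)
Your overall strategy---reducing the multiclass bound to the binary bound of Lemma~\ref{lemmaNNRad2}---is the same one the paper intends (the paper omits the proof with the remark that it is the binary argument ``with an extra sup over $k$''), and your observation that the $\ell_\infty$-operator-norm hypothesis gives a uniform row-wise budget $\|A^{(d+1)}_k\|_1 \leq \alpha_{d+1}$ is exactly why no factor of $K$ appears. However, the specific mechanism you propose---a containment of $\Pi_1(T\functions)$ in a binary tree-transformed class $T\functions'$, followed by a black-box invocation of Lemma~\ref{lemmaNNRad2}---has a genuine gap. The tree transform of a component network depends on the label: $(Tf)_k(x,y) = T(f_k)(x,y^{(k)})$, and the perturbation offsets $w^{(j_{1:d+1})}_{f_k} = -y^{(k)}\sgn(f_k, j_{2:d+1})\,\epsilon\,\|a^{(1)}_{j_1}\|_q$ flip sign with $y^{(k)}$. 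Consequently, on the sample $\{(x_i,y_i)\}$, the vectors appearing in the supremum defining $\emprad_n(\Pi_1(T\functions))$ are $\bigl(T(f_k)(x_i,y_i^{(k)})\bigr)_{i=1}^n$, where the effective binary labels $y_i^{(k)}$ change as $k$ ranges over $\{1,\ldots,K\}$. Lemma~\ref{lemmaNNRad2} bounds the empirical Rademacher complexity with respect to one fixed labeled sample; there is no single binary sample $\{(x_i,\eta_i)\}$ for which every such vector equals $\bigl(Tg(x_i,\eta_i)\bigr)_{i=1}^n$ for some $g \in \functions'$, because a per-data-point sign flip $y_i^{(k)}\eta_i$ cannot be absorbed into the sign pattern $\sgn(g, j_{2:d+1})$, which is constant across data points. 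So the claimed containment fails, and the lemma cannot be cited as a black box.

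Where this bites quantitatively: in the proof of Lemma~\ref{lemmaNNRad2}, the labels enter only through $\left|\sum_{i}\sigma_i y_i\right|$, and the key computation $\expect\left(\sum_i \sigma_i y_i\right)^2 = n$ uses $y_i^2 = 1$ (equivalently, that $(\sigma_i y_i)_i$ has the same distribution as $(\sigma_i)_i$ for fixed labels). Once a sup over $k$ is carried inside, this term becomes $\max_k \left|\sum_i \sigma_i y_i^{(k)}\right|$, and the distributional identity no longer applies uniformly in $k$; bounding its expectation requires an additional argument, for instance writing $\sum_i \sigma_i y_i^{(k)} = 2\sum_{i:\tilde{y}_i = k}\sigma_i - \sum_i \sigma_i$ and exploiting that the classes partition the sample, which costs a constant factor relative to the stated bound. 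So your claim that the remaining ingredients ``require no modification'' is not correct: the label-handling step genuinely changes, and it is the one place where the multiclass proof is not a verbatim copy of the binary one. (The paper glosses over this same point---compare the step in the proof of Lemma~\ref{lemmaMulticlassLinearERC} where $-\sum_i \sigma_i y_i^{(k)}$ is silently replaced by $\sum_i \sigma_i$ under a supremum over $k$---so a careful write-up along the lines the paper suggests would need this extra argument as well.)
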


The proof is nearly identical to the proof of Lemma~\ref{lemmaNNRad2} for the binary classification case, except for the addition of an extra sup over $k$, so we omit it. Finally, applying Lemma~\ref{lemmaMulticlassGeneralization}, we obtain our adversarial risk bound:
\begin{corollary}
Let \(\functions_{\nets}\) be a class of neural networks of depth \(d\). 
Under the same assumptions as in Lemma~\ref{lemmaMulticlassTreeRademacher}, for any $f \in \functions_{\nets}$, we have
the upper bound
\begin{align*}
&\begin{aligned}
\risk_{\robust}(\lmargin, f) 
= 
\expect_{P} \lmargin (Tf, z) 
&\leq 
\frac{1}{n} \sum_{i = 1}^{n} \lmargin(Tf, z_{i})  +
3\sqrt{\frac{\log \frac{2}{\delta}}{2n}} 
+ 
\frac{8K}{\rho} \alpha\left( \frac{\alpha_{1, F}}{\alpha_{1}}R + \frac{\alpha_{1, q}}{\alpha_{1}} \epsilon\right) 
\frac{\sqrt{2d \log 2} + 1}{\sqrt{n}},
\end{aligned}
\end{align*}
with probability at least $1-\delta$.
\label{corMulticlassTreeGeneralization}
\end{corollary}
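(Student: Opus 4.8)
The plan is to assemble the corollary from pieces that are already in place, with essentially no new estimation required: the pointwise domination of the adversarial margin loss by the tree-transform margin loss, the multiclass margin-based generalization bound (Lemma~\ref{lemmaMulticlassGeneralization}), and the Rademacher complexity bound of Lemma~\ref{lemmaMulticlassTreeRademacher}. The only real work is chaining these together and tracking the constant $8K/\rho$; the argument is nearly identical to the binary neural-network case (Corollary~\ref{corNNGeneralization}) up to that constant.

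First I would record the reduction $\risk_{\robust}(\lmargin, f) \leq \expect_P[\lmargin(Tf, z)]$, which is in fact already derived in the discussion preceding Lemma~\ref{lemmaMulticlassTreeRademacher}. The margin loss $\lmargin(f(x), y) = \phi_\rho(m_f(z))$ is coordinate-wise decreasing in $y \circ f(x)$, since raising $f_{\tilde y}$ or lowering any $f_i$ with $i \neq \tilde y$ only increases the margin $m_f$ and hence decreases the non-increasing function $\phi_\rho$. Thus by Proposition~\ref{propPsiMonotone} we have $f(x+w) \preceq_y \Psi f(x,y)$ for every $w \in B(\epsilon)$, by the componentwise observation following Definition~\ref{defMCTree} we have $\Psi f(x,y) \preceq_y Tf(x,y)$, and two invocations of Lemma~\ref{lemmaMonotonic} give $\sup_{w \in B(\epsilon)} \lmargin(f(x+w), z) \leq \lmargin(Tf(x,y), z)$ pointwise; taking expectations yields the reduction (this is the content concealed in the ``$=$'' sign of the statement, which is really a ``$\leq$''). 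Next I would apply Lemma~\ref{lemmaMulticlassGeneralization} to the transformed class $T\functions_{\nets}$ with the margin loss, obtaining with probability at least $1-\delta$ that
\[
\expect_P[\lmargin(Tf, z)] \leq \frac{1}{n}\sum_{i=1}^n \lmargin(Tf, z_i) + \frac{8K}{\rho}\,\rademacher_n(\Pi_1(T\functions)) + 3\sqrt{\frac{\log(2/\delta)}{2n}},
\]
where the coefficient $8K/\rho$ collects the $1/\rho$-Lipschitz constant of $\phi_\rho$ together with the $K$-class handling of the maximum in the margin. Finally, since Lemma~\ref{lemmaMulticlassTreeRademacher} supplies a deterministic (sure) upper bound on $\emprad_n(\Pi_1(T\functions))$, the same bound holds for $\rademacher_n(\Pi_1(T\functions)) = \expect[\emprad_n(\Pi_1(T\functions))]$; substituting it and combining with the reduction gives exactly the stated inequality.

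The step requiring care, rather than an outright obstacle, is the constant bookkeeping in the second display: I would confirm that Lemma~\ref{lemmaMulticlassGeneralization} produces precisely the coefficient $8K/\rho$, cross-checking against the linear analogue in Corollary~\ref{theoremMulticlassLinearGeneralization}, where multiplying the empirical Rademacher bound of Lemma~\ref{lemmaMulticlassLinearERC} by $8K/\rho$ reproduces both the $\tfrac{8K}{\rho}\tfrac{M_2 R}{\sqrt{n}}$ and the $\tfrac{4K}{\rho}\tfrac{\epsilon M_q}{\sqrt{n}}$ terms. I would also note that because Lemma~\ref{lemmaMulticlassTreeRademacher} is a sure bound, it can be inserted directly whether Lemma~\ref{lemmaMulticlassGeneralization} is phrased in terms of the empirical or the population Rademacher complexity, letting us bypass any separate population-versus-empirical concentration argument. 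Given that every ingredient is already established, I expect the final write-up to be short.
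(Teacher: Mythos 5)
Your proposal is correct and follows essentially the same route as the paper, which proves this corollary simply by combining the pointwise domination $\sup_{w \in B(\epsilon)} \lmargin(f(x+w),z) \leq \lmargin(Tf(x,y),z)$ established before Lemma~\ref{lemmaMulticlassTreeRademacher} with Lemma~\ref{lemmaMulticlassGeneralization} and the Rademacher bound of Lemma~\ref{lemmaMulticlassTreeRademacher}. Your observation that the displayed ``$=$'' is really a ``$\leq$'' in the multiclass setting (since Proposition~\ref{propPsiMonotone} gives only an inequality, unlike the binary case) is a correct and worthwhile clarification of the paper's statement.
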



\section{Extension to regression}
\label{secRegression}

In this section, we present a further extension of our theory to the case of regression. 
We start by discussing modifications to the setup, and then we present our main results. The main idea is to define an appropriate monotonic, Lipschitz loss. Proofs of the results are contained in Appendix~\ref{AppRegression}.

\subsection{Setup}
\label{subsecRegressionSetup}

As in the case of multiclass classification, we need to make appropriate adjustments to \(\yspace\), the loss function, and our function classes.
For regression, we take \(\yspace = \reals\).
As before, let \(\zspace = \xspace \times \yspace\), and suppose we have \(n\) observations \(z_{1} = (x_{1}, y_{1}), \ldots, z_{n} = (x_{n}, y_{n})\), drawn i.i.d.\ according to some unknown distribution \(\dist\).
In the case of regression, we require the \(y_{i}\)'s to be integrable, 
i.e., to have finite expectation.

The key difference in regression is that since \(y\) is real-valued, we need to use a different loss function.
In particular, we consider the loss function 
\begin{equation}
\label{EqnDefLoss}
\lr(f, z) 
=
|f(x) - y|^{r}.
\end{equation}
The most common example is the squared error loss, which corresponds to the choice \(r = 2\).  However, our theory will apply to any $r > 0$. 
For technical reasons, we also need the loss to be bounded. Thus, we define the truncated loss
\[
\lrtrunc(f, z)
=
\min\{\lr(f, z), B^{r}\},
\]
for some constant $B > 0$.

Since the loss function~\eqref{EqnDefLoss} is non-monotone, we decompose the loss into two monotone components, which we will analyze separately.
Accordingly, we define the functions 
\((x)_{+} = \max\{0, x\}\) and \((x)_{-} = \max\{0, -x\}\), and define the losses
\begin{align*}
&\begin{aligned}
\lrplus(f, z) &:= \min\{(f(x) - y)_{+}^{r}, B^{r}\}, \\
\lrminus(f, z) &:= \min\{(f(x) - y)_{-}^{r}, B^{r}\}.
\end{aligned}
\end{align*}
Note that
\begin{equation}
\lrtrunc(f, z)
=
\max\left\{ 
\lrplus(f, z), 
\lrminus(f, z)
\right\}.
\label{eqnLRTRUNC}
\end{equation}
In general, we could replace the loss~\eqref{EqnDefLoss} by any bounded loss function that can be decomposed into a maximum of two monotonic, Lipschitz loss functions, such as the Huber loss.

Finally, unlike in the case of multiclass classification, our function classes \(\functions_{\linear}\) and \(\functions_{\nets}\) require no modifications from the setting of binary classification. The reason is that we needed a vector-valued output for multiclass classification, but not for binary classification or regression.
\subsection{Regression results}
\label{subsecRegressionResults}

As mentioned in the setup, the main difference between the regression and classification settings is the lack of monotonicity. However, due to the relation~\eqref{eqnLRTRUNC}, the cost is only a constant factor in the Rademacher complexity term.

First, we define two transforms:
\begin{definition}
The \emph{positive transform} \(\ptrans\) and \emph{negative transform} \(\ntrans\) are defined by
\begin{align*}
& \begin{aligned}
\ptrans f(x)
&:= \sup_{w \in B(\epsilon)} f(x + w) 
= \Psi f(x, -1) \\
\ntrans f(x) 
&:= \inf_{w \in B(\epsilon)} f(x + w)
= \Psi f(x, +1).
   \end{aligned}
\end{align*}
\label{defPtransNtrans}
\end{definition}

Note that unlike in classification, these functions do not depend on the label \(y\).
The key is that we can again bound the adversarial loss with the transformed losses:

\begin{proposition}
Define the loss function
\begin{align*}
& \begin{aligned}
\lrtruncpn(f, g, z) 
&:= 
\max\left\{ 
\min\left\{(f(x) - y)_{+}^{r}, B^{r}\right\}, 
\min\left\{(g(x) - y)_{-}^{r}, B^{r}\right\}
\right\}.
   \end{aligned}
\end{align*}
Then we have the equality
\begin{align*}
& \begin{aligned}
\sup_{w \in B(\epsilon)} \lrtrunc(f, z + w)
&=
\lrtruncpn(\ptrans f, \ntrans f, z).
   \end{aligned}
\end{align*}
\label{propRegressionLossMinMax}
\end{proposition}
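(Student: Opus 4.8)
The plan is to reduce the statement to two one-sided problems by exploiting the decomposition~\eqref{eqnLRTRUNC} of the truncated loss as a pointwise maximum of two monotone pieces, and then to commute the supremum over perturbations with each monotone piece. The ingredients I need are all already available: the definitions $\ptrans f(x) = \sup_{w \in B(\epsilon)} f(x+w)$ and $\ntrans f(x) = \inf_{w \in B(\epsilon)} f(x+w)$ from Definition~\ref{defPtransNtrans}; the identity~\eqref{eqnLRTRUNC}; the elementary fact that $\sup_w \max\{g_1(w), g_2(w)\} = \max\{\sup_w g_1(w), \sup_w g_2(w)\}$; and continuity of $f$ (which holds for both linear maps and neural networks with Lipschitz activations) together with compactness of $B(\epsilon)$.

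First I would rewrite $\lrtrunc(f, z+w) = \max\{\lrplus(f, z+w), \lrminus(f, z+w)\}$ using~\eqref{eqnLRTRUNC}, and push the supremum through the maximum:
\[
\sup_{w \in B(\epsilon)} \lrtrunc(f, z+w)
=
\max\left\{
\sup_{w \in B(\epsilon)} \lrplus(f, z+w),\;
\sup_{w \in B(\epsilon)} \lrminus(f, z+w)
\right\}.
\]
This splits the task into evaluating the two truncated one-sided suprema independently.

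Next I would handle each one-sided term by monotonicity. Since the label $y$ is held fixed under $z + w = (x+w, y)$, the map $u \mapsto \min\{(u - y)_+^r, B^r\}$ is nondecreasing and continuous, so the supremum over $w$ is achieved by driving $f(x+w)$ to its largest value, giving
\[
\sup_{w \in B(\epsilon)} \lrplus(f, z + w)
=
\min\left\{\left(\sup_{w \in B(\epsilon)} f(x+w) - y\right)_+^r, B^r\right\}
=
\min\left\{(\ptrans f(x) - y)_+^r, B^r\right\};
\]
symmetrically, $u \mapsto \min\{(u - y)_-^r, B^r\}$ is nonincreasing and continuous, so its supremum is attained at the infimum of $f(x+w)$, yielding $\min\{(\ntrans f(x) - y)_-^r, B^r\}$. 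Substituting both expressions into the maximum reproduces exactly $\lrtruncpn(\ptrans f, \ntrans f, z)$. The only point requiring care — and the main, albeit routine, obstacle — is the interchange $\sup_w \psi(f(x+w)) = \psi(\sup_w f(x+w))$ for monotone continuous $\psi$: the inequality $\geq$ is immediate from monotonicity applied to the definition of $\ptrans$, and the reverse follows by choosing $w_k$ with $f(x+w_k) \to \ptrans f(x)$ and invoking continuity of $\psi$ (compactness of $B(\epsilon)$ and continuity of $f$ guarantee the extremum is in fact attained). Everything else is bookkeeping around the definitions of $\ptrans$, $\ntrans$, and the one-sided truncated losses.
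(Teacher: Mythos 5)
Your proof is correct and follows essentially the same route as the paper's: decompose $\lrtrunc$ via equation~\eqref{eqnLRTRUNC}, exchange the supremum with the maximum, and then use monotonicity of the truncated one-sided losses to pull the supremum (resp.\ infimum) of $f(x+w)$ inside, recovering $\ptrans f$ and $\ntrans f$. The only difference is that you spell out the continuity/compactness justification for commuting the supremum with the monotone maps, a step the paper performs implicitly in its chain of equalities.
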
 

The proof simply amounts to using equation~\eqref{eqnLRTRUNC}, rearranging, and using monotonicity.
Additionally, this form is useful for simplifying via Rademacher complexities. We also need to be a bit more careful because the function $\ell$ is no longer 1-Lipschitz.
We define the composition of the loss and a function class \(\functions\) as 
\[
\ell \circ \functions
:=
\left\{(x, y) \mapsto \ell(f(x), z): f \in \functions\right\},
\]
and similarly define the function class
\begin{equation*}
\lrtruncpn \circ (\functions, \mathcal{G}) := \left\{(x,y) \mapsto \ell_{r,B}^{\pm}(f, g, z): f \in \functions, g \in \mathcal{G}\right\}.
\end{equation*}

We now state a useful bound on the Rademacher complexity:

\begin{lemma}
Let \(\functions\) and \(\mathcal{G}\) be function classes.
Then we have the Rademacher complexity bound
\begin{align*}
&\begin{aligned}
\emprad_{n}(\lrtruncpn \circ (\functions, \mathcal{G}))
&\leq 
\emprad_{n}(\lrplus \circ \functions)
+ 
\emprad_{n}(\lrminus \circ \mathcal{G})
\leq 
rB^{r - 1} \emprad_{n}(\functions)
+
rB^{r - 1} \emprad_{n}(\mathcal{G}).
\end{aligned}
\end{align*}
\label{lemmaRademacherRegressionLoss}
\end{lemma}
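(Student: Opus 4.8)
The plan is to establish the two inequalities separately, both resting on the Ledoux--Talagrand contraction principle together with the subadditivity and sign-symmetry of the empirical Rademacher complexity \(\emprad_n\) as defined in~\eqref{eqnEmpRad}.

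For the first inequality, I would exploit the pointwise identity \(\max\{a, b\} = \tfrac{1}{2}(a + b) + \tfrac{1}{2}|a - b|\). Writing \(\mathcal{F}_1 = \lrplus \circ \functions\) and \(\mathcal{F}_2 = \lrminus \circ \mathcal{G}\), each element of \(\lrtruncpn \circ (\functions, \mathcal{G})\) is the pointwise maximum of some \(h_1 \in \mathcal{F}_1\) and some \(h_2 \in \mathcal{F}_2\). Expanding the defining supremum of \(\emprad_n(\lrtruncpn \circ (\functions, \mathcal{G}))\) with this identity, I would use subadditivity (the supremum of a sum is at most the sum of suprema, where the two linear half-sum terms depend on \(f\) and \(g\) separately). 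The two linear terms contribute \(\tfrac12 \emprad_n(\mathcal{F}_1) + \tfrac12 \emprad_n(\mathcal{F}_2)\). For the absolute-value term, I would apply the contraction lemma to \(u \mapsto |u|\), which is \(1\)-Lipschitz, reducing \(\sum_i \sigma_i |h_1(z_i) - h_2(z_i)|\) to \(\sum_i \sigma_i (h_1(z_i) - h_2(z_i))\); splitting this sum and using that \(\sigma_i\) and \(-\sigma_i\) are equal in distribution to flip the sign on the \(\mathcal{F}_2\) piece yields another \(\tfrac12 \emprad_n(\mathcal{F}_1) + \tfrac12 \emprad_n(\mathcal{F}_2)\). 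Summing the two contributions gives the first inequality.

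For the second inequality, the crux is a Lipschitz computation. Viewing \(\lrplus\) as a function of the scalar argument \(u = f(x)\), i.e.\ \(\psi_y(u) = \min\{(u - y)_{+}^{r}, B^{r}\}\), I would bound its slope: it is \(0\) for \(u \le y\), equals \(r(u - y)^{r - 1}\) on the active region \(0 \le u - y \le B\), and is again \(0\) once the truncation at \(B^r\) is in force. The supremum of this derivative over the active region is \(r B^{r - 1}\), so \(\psi_y\) is \(rB^{r-1}\)-Lipschitz, and the analogous bound holds for \(\lrminus\) with the roles of \(u\) and \(y\) exchanged. Applying the contraction lemma with the per-index Lipschitz maps \(\psi_{y_i}\) then gives \(\emprad_n(\lrplus \circ \functions) \le rB^{r-1}\emprad_n(\functions)\) and likewise \(\emprad_n(\lrminus \circ \mathcal{G}) \le rB^{r-1}\emprad_n(\mathcal{G})\).

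The main obstacle, requiring the most care, is the absolute-value term in the first inequality: one must apply the contraction principle to the \emph{pair}-indexed class \(\{(f,g) \mapsto h_1(z_i) - h_2(z_i)\}\) and then invoke Rademacher symmetry to absorb the minus sign, rather than naively bounding the maximum by a sum pointwise, which would not interact correctly with the supremum inside the Rademacher average. A secondary subtlety is that the claimed Lipschitz constant \(rB^{r-1}\) is clean only for \(r \ge 1\); when \(0 < r < 1\) the map \((u - y)_{+}^{r}\) has unbounded slope as \(u \to y^{+}\), so one must either restrict to \(r \ge 1\) or justify the contraction step on the effective truncated range.
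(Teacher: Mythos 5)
Your proof is correct (for \(r \ge 1\)) and is in substance the paper's argument, just more self-contained. For the first inequality, the paper simply invokes a standard result (Lemma~\ref{lemmaRadMax}, i.e., Lemma~8.1 of \cite{mohri2012}) bounding the Rademacher complexity of a class of pointwise maxima by the sum of the complexities; your derivation via \(\max\{a,b\} = \tfrac12(a+b) + \tfrac12|a-b|\), contraction applied to the absolute value over the pair-indexed class, and sign-symmetry of the \(\sigma_i\)'s is precisely the textbook proof of that cited lemma, so you have traded a citation for a first-principles argument. For the second inequality, both you and the paper compute the Lipschitz constant \(rB^{r-1}\) and apply Talagrand's contraction (Lemma~\ref{lemmaTalagrandsLemma1991}); the only difference is bookkeeping. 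The paper views the loss as the single map \(u \mapsto \min\{(u)_{+}^{r}, B^{r}\}\) of \(u = f(x_i) - y_i\), which vanishes at \(0\) as the contraction lemma requires, and then discards the \(\sum_i \sigma_i y_i\) term since it is independent of \(f\) and has zero mean; your per-index maps \(\psi_{y_i}(u) = \min\{(u - y_i)_{+}^{r}, B^{r}\}\) of \(u = f(x_i)\) do not satisfy \(\psi_{y_i}(0) = 0\), which the stated contraction lemma demands, though this is harmless here because the Rademacher average contains no absolute value, so each \(\psi_{y_i}\) can be centered by the constant \(\psi_{y_i}(0)\) without changing anything --- a step you should make explicit. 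Finally, your closing caveat is a genuine catch: the constant \(rB^{r-1}\) is a valid Lipschitz constant only for \(r \ge 1\), since for \(0 < r < 1\) the slope of \((u - y)_{+}^{r}\) blows up as \(u \to y^{+}\); the paper claims its regression theory applies to any \(r > 0\), but its own proof of this lemma silently carries the same restriction.
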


In the settings we consider, we have \(\emprad_{n}(\functions) = \emprad_{n}(\mathcal{G})\).
Thus, we can think about the cost of non-monotonicity as being a factor of \(2\) in the Rademacher complexity.

\subsubsection{Linear regression}
\label{subsecLinearRegression}

We now consider the case of linear regression, where we can use the positive and negative transform directly.

\begin{lemma}
Let \(\functions_{\linear}\) be a class of functions \(f\) such that \(f(x) = \theta^{\trans} x + b\).
Then we have the identities
\begin{align*}
 & \begin{aligned}
 \ptrans f(x) &= \theta^{\trans}x + b + \epsilon \|\theta\|_{q}, \\
 \ntrans f(x) &= \theta^{\trans}x + b - \epsilon \|\theta\|_{q}.
    \end{aligned}
 \end{align*} 
\label{lemmaLinearPosNeg}
\end{lemma}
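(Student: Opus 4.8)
The plan is to compute the positive and negative transforms directly from their definitions in Definition~\ref{defPtransNtrans}, specializing to a linear function $f(x) = \theta^{\trans} x + b$. Since $\ptrans f(x) = \sup_{w \in B(\epsilon)} f(x+w)$ and $\ntrans f(x) = \inf_{w \in B(\epsilon)} f(x+w)$, the entire problem reduces to evaluating a supremum (resp.\ infimum) of an affine function over an $\ell_p$-ball of radius $\epsilon$.

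First I would substitute the linear form to obtain
\[
\ptrans f(x) = \sup_{w \in B(\epsilon)} \left(\theta^{\trans}(x + w) + b\right) = \theta^{\trans} x + b + \sup_{w \in B(\epsilon)} \theta^{\trans} w,
\]
since the terms $\theta^{\trans} x$ and $b$ do not depend on $w$ and can be pulled out of the supremum. The only nontrivial quantity is then $\sup_{\|w\|_p \le \epsilon} \theta^{\trans} w$. By the definition of the dual norm (equivalently, H\"older's inequality together with its tightness), this supremum equals $\epsilon \|\theta\|_q$, where $q$ is the H\"older conjugate of $p$, i.e.\ $\frac{1}{p} + \frac{1}{q} = 1$. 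This yields $\ptrans f(x) = \theta^{\trans} x + b + \epsilon \|\theta\|_q$. The negative transform follows by the identical argument with an infimum: $\inf_{\|w\|_p \le \epsilon} \theta^{\trans} w = -\epsilon \|\theta\|_q$, giving $\ntrans f(x) = \theta^{\trans} x + b - \epsilon \|\theta\|_q$.

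Alternatively, and even more directly, I would invoke Proposition~\ref{propSupLinear} together with the identities recorded in Definition~\ref{defPtransNtrans} itself, namely $\ptrans f(x) = \Psi f(x, -1)$ and $\ntrans f(x) = \Psi f(x, +1)$. Proposition~\ref{propSupLinear} already gives the explicit formula $\Psi f(x, y) = \theta^{\trans} x + b - y \epsilon \|\theta\|_q$; substituting $y = -1$ recovers the positive transform and $y = +1$ recovers the negative transform, immediately matching the claimed identities. This route makes the lemma an essentially notational corollary of the binary classification result.

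There is no real obstacle here: the result is a straightforward specialization. The only point requiring any care is the evaluation of the dual-norm supremum and, in particular, correctly tracking the H\"older conjugacy between $p$ (the norm defining the perturbation ball) and $q$ (the norm appearing in the transform), so that the $+\epsilon\|\theta\|_q$ and $-\epsilon\|\theta\|_q$ corrections carry the right norm. Since this computation is exactly the one already performed in the proof of Proposition~\ref{propSupLinear}, I would simply reference that argument rather than repeat it.
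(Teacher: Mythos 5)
Your proposal is correct and, in its second route, coincides exactly with the paper's proof: the paper also observes that $\ptrans f(x) = \Psi f(x,-1)$ and $\ntrans f(x) = \Psi f(x,+1)$ by Definition~\ref{defPtransNtrans} and then substitutes the explicit formula from Proposition~\ref{propSupLinear}. Your first route (direct evaluation of the dual-norm supremum) is also valid, but it merely unfolds the proof of Proposition~\ref{propSupLinear}, so referencing it, as you and the paper both do, is the right call.
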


The proof is immediate from the definitions of \(\Psi_+ f\) and $\Psi_- f$ and Proposition~\ref{propSupLinear}, which gives the form of $\Psi f$. Our next step is to prove a bound on the Rademacher complexity.

\begin{corollary}
Let \(\functions_{\linear}\) be a class of functions \(f\) such that \(f(x) = \theta^{\trans}x + b\), 
\(\|\theta\|_{2} \leq M_{2}\),
and \(\|\theta\|_{q} \leq M_{q}\).
Suppose that \(\|x_{i}\|_{2} \leq R\) for each \(i = 1, \ldots, n\).
Then both \(\emprad_{n}(\ptrans \functions_{\linear})\) and \(\emprad_{n}(\ntrans \functions_{\linear})\) are upper-bounded by 
\[
\emprad_{n}(\functions_{\linear})
+ 
\frac{\epsilon}{n}
\expect_{\sigma} \sup_{f \in \functions_{\linear}}  
\sum_{i = 1}^{n} \sigma_{i} \|\theta\|_{q} 
\leq 
\frac{M_{2}R}{\sqrt{n}} + \frac{\epsilon M_{q}}{2\sqrt{n}}.
\]
\label{corRademacherLinearPostNeg}
\end{corollary}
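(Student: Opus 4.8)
The plan is to expand the empirical Rademacher complexities of $\ptrans \functions_{\linear}$ and $\ntrans \functions_{\linear}$ directly, using the explicit formulas from Lemma~\ref{lemmaLinearPosNeg}, and to peel off the perturbation term by subadditivity of the supremum. Substituting $\ptrans f(x_i) = \theta^{\trans} x_i + b + \epsilon \|\theta\|_q$ into the definition~\eqref{eqnEmpRad} and applying $\sup_f (A_f + B_f) \le \sup_f A_f + \sup_f B_f$, I would obtain
\begin{equation*}
\emprad_{n}(\ptrans \functions_{\linear})
\le
\frac{1}{n} \expect_{\sigma} \sup_{f} \sum_{i=1}^{n} \sigma_i (\theta^{\trans} x_i + b)
+
\frac{\epsilon}{n} \expect_{\sigma} \sup_{f} \sum_{i=1}^{n} \sigma_i \|\theta\|_q ,
\end{equation*}
where the first term is exactly $\emprad_{n}(\functions_{\linear})$. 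This establishes the first inequality in the statement. For $\ntrans$, the perturbation term appears with a minus sign, i.e.\ $-\epsilon \|\theta\|_q \sum_i \sigma_i$; since $-\sigma_i$ has the same distribution as $\sigma_i$, taking $\expect_\sigma$ reduces it to the identical quantity $\frac{\epsilon}{n} \expect_\sigma \sup_f \sum_i \sigma_i \|\theta\|_q$, so both transforms yield the same intermediate bound.

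It then remains to control the two terms. The first, $\emprad_{n}(\functions_{\linear})$, is handled by the standard linear Rademacher argument: pull $\theta$ out of the sum, apply Cauchy--Schwarz together with $\|\theta\|_2 \le M_2$, and then use Jensen's inequality and $\|x_i\|_2 \le R$ to get $\emprad_{n}(\functions_{\linear}) \le M_2 R / \sqrt{n}$, exactly as in the proof of Lemma~\ref{lemmaLinearERC} (the $\epsilon = 0$ case). For the perturbation term, I would observe that for fixed $\sigma$ the quantity $\sum_i \sigma_i \|\theta\|_q = \|\theta\|_q \sum_i \sigma_i$ is maximized over $f$ by taking $\|\theta\|_q = M_q$ when $\sum_i \sigma_i \ge 0$ and $\|\theta\|_q = 0$ otherwise (both choices admissible in the class), so that $\sup_f \sum_i \sigma_i \|\theta\|_q = M_q \left(\sum_i \sigma_i\right)_{+}$, and hence the term equals $\frac{\epsilon M_q}{n} \expect_\sigma \left(\sum_i \sigma_i\right)_{+}$.

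The final step is the elementary estimate $\expect_\sigma \left(\sum_i \sigma_i\right)_{+} = \tfrac{1}{2} \expect_\sigma \left|\sum_i \sigma_i\right| \le \tfrac{1}{2} \sqrt{\expect_\sigma (\sum_i \sigma_i)^2} = \tfrac{1}{2}\sqrt{n}$, where the equality uses the symmetry of $\sum_i \sigma_i$ about zero and the inequality is Jensen's. This produces the $\frac{\epsilon M_q}{2\sqrt{n}}$ contribution and completes the bound. I expect the only nonroutine point to be the evaluation of $\sup_f \sum_i \sigma_i \|\theta\|_q$: recognizing that, because $\|\theta\|_q \ge 0$ and $0$ lies in the class, the supremum yields the positive part $\left(\sum_i \sigma_i\right)_{+}$ rather than $\left|\sum_i \sigma_i\right|$. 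This is precisely what supplies the factor $\tfrac{1}{2}$, and thus the coefficient $M_q/2$, in the stated bound.
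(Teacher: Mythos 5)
Your proof is correct and is in substance the same as the paper's: the paper proves this corollary by observing that \(\ptrans f(x) = \Psi f(x, -1)\) and \(\ntrans f(x) = \Psi f(x, +1)\), so that both complexities equal \(\emprad_n(\Psi \functions_{\linear})\) computed with constant labels, and then cites Lemma~\ref{lemmaLinearERC} -- whose proof is precisely the computation you inline via the explicit formulas of Lemma~\ref{lemmaLinearPosNeg}. Every key step you use (the subadditivity split, the sign flip via the symmetry \(\sigma \mapsto -\sigma\) for the \(\ntrans\) case, the bound \(\sup_f \|\theta\|_q \sum_i \sigma_i \le M_q \left(\sum_i \sigma_i\right)_+\), and the estimate \(\expect_\sigma \left|\sum_i \sigma_i\right| \le \sqrt{n}\)) coincides with the corresponding step in the paper's argument.
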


The proof leverages the bound on Rademacher complexities derived for binary classification in Lemma~\ref{lemmaLinearERC}. This leads to the following risk bound:

\begin{corollary}
Let \(\functions_{\linear}\) be as in Corollary~\ref{corRademacherLinearPostNeg}.
We have the risk bound
\begin{align*}
& \begin{aligned}
\risk_{\robust}(\lrtrunc, f)
&=
\expect \left[\lrtruncpn(\ptrans f, \ntrans f, z)\right]
\leq 
\frac{1}{n} \sum_{i = 1}^{n} \lrtruncpn(f, z_{i})
+ 
4rB^{r - 1} \left(\frac{M_{2}R}{\sqrt{n}} + \frac{\epsilon M_{q}}{2\sqrt{n}}\right)
+
3 B^{r}\sqrt{\frac{\log\frac{2}{\delta}}{2n}},
   \end{aligned}
\end{align*}
with probability at least $1-\delta$.
\label{corLinRegRiskBound}
\end{corollary}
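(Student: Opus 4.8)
The plan is to reduce the adversarial risk to an ordinary risk via Proposition~\ref{propRegressionLossMinMax}, apply a standard Rademacher-based generalization bound to the resulting (bounded) composed loss class, and then control the Rademacher complexity using Lemma~\ref{lemmaRademacherRegressionLoss} together with Corollary~\ref{corRademacherLinearPostNeg}. I first observe that Proposition~\ref{propRegressionLossMinMax} gives the pointwise identity $\sup_{w \in B(\epsilon)} \lrtrunc(f, z+w) = \lrtruncpn(\ptrans f, \ntrans f, z)$, so taking expectations yields $\risk_{\robust}(\lrtrunc, f) = \expect\left[\lrtruncpn(\ptrans f, \ntrans f, z)\right]$. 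This turns the problem into a garden-variety generalization statement for the function class $\lrtruncpn \circ (\ptrans \functions_{\linear}, \ntrans \functions_{\linear})$, whose members are all bounded above by $B^{r}$ by construction of the truncation. Here the empirical term $\frac{1}{n}\sum_i \lrtruncpn(f, z_i)$ in the statement is to be read as the transformed empirical risk $\frac{1}{n}\sum_i \lrtruncpn(\ptrans f, \ntrans f, z_i)$, exactly as $\ell(\Psi f, z_i)$ plays this role in Corollary~\ref{corGeneralizationLinear}; this reading is forced, since the clean empirical loss $\lrtrunc(f,z_i)$ can be strictly smaller and would break the inequality.

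Second, I would invoke the standard symmetrization/McDiarmid generalization bound for a loss class taking values in $[0, B^{r}]$: with probability at least $1-\delta$,
\[
\expect\left[\lrtruncpn(\ptrans f, \ntrans f, z)\right]
\le
\frac{1}{n}\sum_{i=1}^n \lrtruncpn(\ptrans f, \ntrans f, z_i)
+ 2\,\emprad_{n}\!\left(\lrtruncpn \circ (\ptrans \functions_{\linear}, \ntrans \functions_{\linear})\right)
+ 3 B^{r}\sqrt{\frac{\log \frac{2}{\delta}}{2n}},
\]
uniformly over $f \in \functions_{\linear}$. The first two terms match the empirical term and the $3 B^{r}\sqrt{\cdots}$ concentration term of the statement. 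One point to be careful about is that Lemma~\ref{lemmaRademacherRegressionLoss} bounds the Rademacher complexity of the class in which $f$ and $g$ vary \emph{independently} over $\ptrans \functions_{\linear}$ and $\ntrans \functions_{\linear}$; since the diagonal family $\{(\ptrans f, \ntrans f): f \in \functions_{\linear}\}$ that we actually care about is contained in this larger class, both the uniform generalization bound and the Rademacher estimate legitimately apply to it, at the cost of being conservative.

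Third, I would apply Lemma~\ref{lemmaRademacherRegressionLoss} to peel off the loss, giving $\emprad_{n}(\lrtruncpn \circ (\ptrans \functions_{\linear}, \ntrans \functions_{\linear})) \le r B^{r-1}\,[\emprad_{n}(\ptrans \functions_{\linear}) + \emprad_{n}(\ntrans \functions_{\linear})]$, and then use Corollary~\ref{corRademacherLinearPostNeg} to bound each of the two complexities by $\frac{M_2 R}{\sqrt{n}} + \frac{\epsilon M_q}{2\sqrt{n}}$. Multiplying through by the factor $2$ coming from the generalization bound then produces the coefficient $4 r B^{r-1}$ in front of $\left(\frac{M_2 R}{\sqrt{n}} + \frac{\epsilon M_q}{2\sqrt{n}}\right)$, which completes the derivation of the stated inequality.

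The main obstacle is really bookkeeping rather than a deep difficulty: one must track the $B^{r}$-boundedness carefully in the concentration term, since in regression the loss is no longer bounded by $1$ or $1$-Lipschitz as in the classification corollaries, and one must confirm that the independent-variation form of Lemma~\ref{lemmaRademacherRegressionLoss} correctly dominates the coupled transform $(\ptrans f, \ntrans f)$ that genuinely appears. Once these two points are handled, the three ingredients assemble directly into the claimed bound.
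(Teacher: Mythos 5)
Your proposal is correct and follows essentially the same route as the paper's proof: the paper rescales the loss by $1/B^{r}$ to invoke Lemma~\ref{thmMainGeneralization}, then applies Lemma~\ref{lemmaRademacherRegressionLoss} and Corollary~\ref{corRademacherLinearPostNeg} exactly as you do, yielding the factor $4rB^{r-1}$ the same way. Your two side remarks (reading the empirical term as $\lrtruncpn(\ptrans f, \ntrans f, z_i)$, and noting that the diagonal family embeds into the independent-variation class of Lemma~\ref{lemmaRademacherRegressionLoss}) are points the paper leaves implicit, but they do not alter the argument.
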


Since the Rademacher complexity bounds for \(\emprad_{n}(\ptrans \functions_{\linear})\) and \(\emprad_{n}(\ntrans \functions_{\linear})\) are the same as that for \(\emprad_{n}(\Psi \functions_{\linear})\), the differences in the generalization bound of Corollary~\ref{corLinRegRiskBound} compared with the binary classification bound (Corollary~\ref{corGeneralizationLinear}) are due entirely to the loss function, which can be seen in Lemma~\ref{lemmaRademacherRegressionLoss}.
First, because the loss is \(rB^{r - 1}\)-Lipschitz, this term appears outside the Rademacher complexity.
Second, since we decomposed the Rademacher complexity into two separate Rademacher complexities, we gained a factor of \(2\).
Thus, compared with the binary classification result, we have  an extra \(2rB^{r - 1}\) in the Rademacher complexity term.
Finally, because the use of the bounded differences inequality requires a loss bounded by \(1\), we have a factor of \(B^{r}\) in the final term.

\subsubsection{Neural networks}
\label{subsecNNRegression}

For neural networks, we again need to push the supremum through the layers of the network.
Thus, we define the positive and negative tree transforms.

\begin{definition}
Let \(f\) be a neural network. The positive and negative tree transforms \(\treepos\) and \(\treeneg\) are defined by
\begin{align*}
& \begin{aligned}
\treepos f(x) &:= Tf(x, -1), \\ 
\treeneg f(x) &:= Tf(x, +1).
   \end{aligned}
\end{align*}
\label{defTreePosNeg}
\end{definition}

We next provide a bound on the adversarial loss using these tree transforms:

\begin{proposition}
Let \(f\) be a neural network.
Then we have the inequality
\[
\sup_{w \in B(\epsilon)} \lrtrunc(f, z+w)
=
\lrtruncpn(\ptrans f, \ntrans f, z)
\leq 
\lrtruncpn(T_{+}f, T_{-}f, z).
\]
\label{propTreePosNegLoss}
\end{proposition}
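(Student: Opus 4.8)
The first equality in the statement is nothing new: it is exactly Proposition~\ref{propRegressionLossMinMax} applied to the neural network $f$. So the only real content is the inequality
\[
\lrtruncpn(\ptrans f, \ntrans f, z)
\leq
\lrtruncpn(\treepos f, \treeneg f, z).
\]
The plan is to reduce this to two ingredients: (i) a pointwise ordering between the sup-transform functions $\ptrans f, \ntrans f$ and the tree-transform functions $\treepos f, \treeneg f$, and (ii) the appropriate monotonicity of each of the two branches appearing inside the outer maximum defining $\lrtruncpn$. Because $\lrtruncpn$ is built as a maximum of two separate truncated powers, I can treat the two branches independently and then recombine via the maximum.

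For ingredient (i), I would invoke the pointwise version of the binary tree bound. Proposition~\ref{propTreeLoss} shows $\ell(\Psi f, z) \le \ell(Tf, z)$ for every monotone loss, and the multiclass discussion already records the underlying pointwise fact $-y\, \Psi f(x,y) \le -y\, Tf(x,y)$. Specializing $y = -1$ gives $\Psi f(x,-1) \le Tf(x,-1)$, i.e.\ $\ptrans f(x) \le \treepos f(x)$; specializing $y = +1$ gives $\Psi f(x,+1) \ge Tf(x,+1)$, i.e.\ $\ntrans f(x) \ge \treeneg f(x)$. Thus the positive transform is dominated by the positive tree transform, while the negative transform dominates the negative tree transform — note the opposite directions, which is the crux of the bookkeeping.

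For ingredient (ii), I would observe that the map $t \mapsto \min\{(t - y)_+^{r}, B^{r}\}$ is non-decreasing in $t$ (since $(\cdot)_+$ and $(\cdot)^r$ and truncation are all monotone non-decreasing on the relevant range), whereas $t \mapsto \min\{(t - y)_-^{r}, B^{r}\}$ is non-increasing in $t$. Now the first branch of $\lrtruncpn$ takes its argument from the positive transform, where $\ptrans f(x) \le \treepos f(x)$ feeds the non-decreasing map, so $\min\{(\ptrans f(x) - y)_+^{r}, B^r\} \le \min\{(\treepos f(x) - y)_+^{r}, B^r\}$; the second branch takes its argument from the negative transform, where $\ntrans f(x) \ge \treeneg f(x)$ feeds the non-increasing map, so $\min\{(\ntrans f(x) - y)_-^{r}, B^r\} \le \min\{(\treeneg f(x) - y)_-^{r}, B^r\}$. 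Taking the maximum of the two left-hand sides against the maximum of the two right-hand sides preserves the inequality and yields the claim.

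There is no deep obstacle here; the entire difficulty is directional sign-tracking. The step I would be most careful about is confirming that the two opposing orderings $\ptrans f \le \treepos f$ and $\ntrans f \ge \treeneg f$ are each matched to the branch with the compatible monotonicity (the increasing $(\cdot)_+$ branch with the larger tree positive transform, the decreasing $(\cdot)_-$ branch with the smaller tree negative transform), since mismatching them would reverse the inequality. Once that pairing is verified, the combination through the outer maximum is immediate.
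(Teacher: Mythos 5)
Your proof is correct and follows essentially the same route as the paper: the paper's proof also reduces the claim to the two pointwise orderings \(\ptrans f(x) = \Psi f(x,-1) \leq Tf(x,-1) = \treepos f(x)\) and \(\ntrans f(x) = \Psi f(x,+1) \geq Tf(x,+1) = \treeneg f(x)\), obtained by specializing the sup-vs-tree inequality to \(y = \mp 1\). The only difference is that you spell out the branch-by-branch monotonicity bookkeeping (non-decreasing \((\cdot)_+\) branch paired with the larger tree transform, non-increasing \((\cdot)_-\) branch paired with the smaller one) which the paper leaves implicit behind ``it suffices to show.''
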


Next, we state our usual Rademacher complexity bound:

\begin{corollary}
Let \(\functions_{\nets}\) be a class of neural networks of depth \(d\) satisfying \(\|A_{j}\|_{\infty} \leq \alpha_{j}\) and \(\|A_{j}\|_{F} \leq \alpha_{1, F}\), for each \(j = 1, \ldots, d+1\), and let \(\alpha = \prod_{j = 1}^{d + 1} \alpha_{j}\). 
Additionally, suppose \(\max_{j = 1, \ldots, J_{1}}\|a_{j}^{(1)}\|_{q} \leq \alpha_{1, q}\) and \(\|x_{i}\|_2 \le R\) for all $i$.
Then the Rademacher complexities \(\emprad_{n}(\treepos \functions_{\nets})\) and \(\emprad_{n}(\treeneg \functions_{\nets})\) are both upper-bounded by
\begin{align*}
& \begin{aligned}
\alpha\left( \frac{\alpha_{1, F}}{\alpha_{1}}R + \frac{\alpha_{1, q}}{\alpha_{1}} \epsilon\right) 
\cdot \frac{\sqrt{2d \log 2} + 1}{\sqrt{n}}.
   \end{aligned}
\end{align*}
\label{corNNTreePosNegRadBound}
\end{corollary}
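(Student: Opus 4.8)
The plan is to reduce this corollary directly to Lemma~\ref{lemmaNNRad2}. By Definition~\ref{defTreePosNeg}, the positive and negative tree transforms are nothing more than the tree transform $Tf(x, y)$ with the label fixed: $\treepos f(x) = Tf(x, -1)$ and $\treeneg f(x) = Tf(x, +1)$. Consequently,
\[
\emprad_{n}(\treepos \functions_{\nets}) = \frac{1}{n}\expect_{\sigma} \sup_{f \in \functions_{\nets}} \sum_{i=1}^{n} \sigma_{i}\, Tf(x_{i}, -1),
\]
and similarly for $\treeneg$ with $Tf(x_{i}, +1)$. These are exactly the empirical Rademacher complexity of $T\functions_{\nets}$ appearing in Lemma~\ref{lemmaNNRad2}, except that the varying labels $y_{i}$ are replaced by the single constant $-1$ (respectively $+1$).

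First I would observe that the label $y$ enters the tree transform~\eqref{EqnTwoStar} only through the innermost perturbation terms $w^{(j_{1:d+1})}_{f} = -y\,\sgn(f, j_{2:d+1})\,\epsilon\|a^{(1)}_{j_{1}}\|_{q}$ defined in~\eqref{EqnThreeStar}. Since $y \in \{+1, -1\}$, the sign of $y$ only flips the direction of each perturbation, while its magnitude $\epsilon\|a^{(1)}_{j_{1}}\|_{q} \le \epsilon\,\alpha_{1, q}$ is independent of $y$. The peeling argument underlying Lemma~\ref{lemmaNNRad2} controls the Rademacher complexity using only (i) the operator- and Frobenius-norm bounds on the weight matrices $A^{(j)}$, (ii) the bound $R$ on $\|x_{i}\|_{2}$, and (iii) the magnitude bound $\epsilon\,\alpha_{1, q}$ on the perturbation entering at the first layer. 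None of these quantities depends on the sign of any $y_{i}$, so the identical bound holds when every $y_{i}$ is set to $-1$ or every $y_{i}$ is set to $+1$.

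Thus I would simply rerun the proof of Lemma~\ref{lemmaNNRad2} with the labels held fixed, obtaining the stated bound for both $\treepos \functions_{\nets}$ and $\treeneg \functions_{\nets}$. The only point requiring care --- and the main obstacle --- is to confirm that the peeling argument genuinely never exploits the variation of the $y_{i}$ across data points, for instance by never pairing a Rademacher sign $\sigma_{i}$ against a label-dependent quantity in a way that would be disrupted by holding the labels constant. Because the perturbation enters purely as an additive shift at the first layer with $y$-independent magnitude, and each subsequent layer is handled by a contraction (the $1$-Lipschitz activation $s_{k}$) together with a norm bound on the corresponding weight matrix, the argument is insensitive to the labels and the bound transfers verbatim.
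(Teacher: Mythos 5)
Your proposal is correct and matches the paper's own proof: the paper likewise identifies \(\emprad_{n}(\treepos \functions_{\nets})\) and \(\emprad_{n}(\treeneg \functions_{\nets})\) with \(\emprad_{n}(T \functions_{\nets})\) computed on the sample whose labels are all \(-1\) (respectively all \(+1\)), and then invokes Lemma~\ref{lemmaNNRad2}. The only cosmetic difference is that you propose to rerun the proof of Lemma~\ref{lemmaNNRad2} with the labels held constant, whereas the paper simply applies the lemma's statement, which already holds for an arbitrary choice of labels \(y_{i} \in \{+1, -1\}\); your observation that the peeling argument never exploits label variation is precisely what makes that direct application valid.
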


The proof is a brief application of the tree transform \(T\).
Finally, we arrive at a generalization bound:

\begin{corollary}
Let \(\functions_{\nets}\) be a class of neural networks of depth \(d\). 
Let \(g_{i}(a) = \ellbar_{\text{xe}}(\delta(a), y_{i})\).
Under the same assumptions as in Corollary~\ref{corNNTreePosNegRadBound}, for any $f \in \functions_{\nets}$, we have
the upper bounds
\begin{align*}
&\begin{aligned}
\risk_{\robust}(\lrtrunc, f) = \expect_{P} \lrtruncpn (f, z) 
&\leq 
\frac{1}{n} \sum_{i = 1}^{n} \lrtruncpn (f, z_{i})  +
3B^{r}\sqrt{\frac{\log \frac{2}{\delta}}{2n}} 
\\ &\qquad 
+ 
4 r B^{r - 1} \alpha\left( \frac{\alpha_{1, F}}{\alpha_{1}}R + \frac{\alpha_{1, q}}{\alpha_{1}} \epsilon\right) 
\frac{\sqrt{2d \log 2} + 1}{\sqrt{n}},
\end{aligned}
\end{align*}
with probability at least $1-\delta$.
\label{corNNRegressionRiskBound}
\end{corollary}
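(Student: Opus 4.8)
The plan is to chain together the regression-specific results exactly as was done in the linear case (Corollary~\ref{corLinRegRiskBound}), substituting the tree transforms $\treepos, \treeneg$ for the positive/negative transforms and the neural network Rademacher bound for the linear one. First I would invoke Proposition~\ref{propTreePosNegLoss} to pass from the adversarial risk to the risk of a composed, transformed loss:
\[
\risk_{\robust}(\lrtrunc, f)
=
\expect_{P}\left[\sup_{w \in B(\epsilon)} \lrtrunc(f, z+w)\right]
=
\expect_{P}\left[\lrtruncpn(\ptrans f, \ntrans f, z)\right]
\leq
\expect_{P}\left[\lrtruncpn(\treepos f, \treeneg f, z)\right].
\]
Thus it suffices to prove a uniform generalization bound for the ordinary risk of the composed class $\lrtruncpn \circ (\treepos \functions_{\nets}, \treeneg \functions_{\nets})$ and then take an infimum-free bound valid for every $f \in \functions_{\nets}$.

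Second, I would apply the same symmetrization and bounded-differences (McDiarmid) generalization bound used throughout the paper to this composed class. The only point of care relative to the classification corollaries is that $\lrtruncpn$ is bounded by $B^{r}$ rather than by $1$; consequently the concentration term scales by $B^{r}$, yielding the contribution $3 B^{r}\sqrt{\log(2/\delta)/(2n)}$, and the uniform deviation bound reads
\[
\expect_{P}\left[\lrtruncpn(\treepos f, \treeneg f, z)\right]
\leq
\frac{1}{n}\sum_{i=1}^{n}\lrtruncpn(\treepos f, \treeneg f, z_{i})
+
2\,\emprad_{n}\!\left(\lrtruncpn \circ (\treepos \functions_{\nets}, \treeneg \functions_{\nets})\right)
+
3 B^{r}\sqrt{\frac{\log\frac{2}{\delta}}{2n}},
\]
uniformly over $f \in \functions_{\nets}$ with probability at least $1-\delta$, where the empirical average is the term abbreviated as $\frac{1}{n}\sum_i \lrtruncpn(f, z_i)$ in the statement.

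Third, I would control the Rademacher term. Lemma~\ref{lemmaRademacherRegressionLoss} bounds the empirical Rademacher complexity of the composed class by $r B^{r-1}\left(\emprad_{n}(\treepos \functions_{\nets}) + \emprad_{n}(\treeneg \functions_{\nets})\right)$, and Corollary~\ref{corNNTreePosNegRadBound} bounds each of the two tree-transform complexities by $\alpha\left(\frac{\alpha_{1,F}}{\alpha_{1}}R + \frac{\alpha_{1,q}}{\alpha_{1}}\epsilon\right)\frac{\sqrt{2d\log 2}+1}{\sqrt{n}}$. Combining the two identical tree bounds with the leading factor $2$ from the symmetrization step produces exactly the coefficient $4 r B^{r-1}$ on the network-complexity term. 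Substituting this into the display above gives the claimed inequality.

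The main ``obstacle'' is purely bookkeeping: there is no conceptual difficulty, since the heavy lifting—the loss decomposition~\eqref{eqnLRTRUNC}, the monotonicity-to-tree inequality, and the hard Rademacher computation—is already carried out in Proposition~\ref{propTreePosNegLoss}, Lemma~\ref{lemmaRademacherRegressionLoss}, and Corollary~\ref{corNNTreePosNegRadBound}. What must be tracked carefully are the two independent sources of the factor $2$ (one from the standard symmetrization bound, one from splitting the truncated loss into its $\lrplus$ and $\lrminus$ pieces) together with the $B^{r}$ and $r B^{r-1}$ factors arising from the boundedness and Lipschitz constant of $\lrtrunc$, precisely the modifications already highlighted in the discussion following Corollary~\ref{corLinRegRiskBound}.
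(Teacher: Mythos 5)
Your proposal is correct and takes essentially the same route as the paper's own proof: pass to the tree-transformed loss via Proposition~\ref{propTreePosNegLoss}, apply the standard generalization bound (Lemma~\ref{thmMainGeneralization}) to the loss rescaled to lie in $[0,1]$ (which produces the $3B^{r}\sqrt{\log(2/\delta)/(2n)}$ term), and then bound the composed Rademacher complexity by $rB^{r-1}\left(\emprad_{n}(\treepos \functions_{\nets}) + \emprad_{n}(\treeneg \functions_{\nets})\right)$ via Lemma~\ref{lemmaRademacherRegressionLoss} and Corollary~\ref{corNNTreePosNegRadBound}, giving the coefficient $4rB^{r-1}$. Your accounting of the two factors of $2$ and the $B^{r}$, $rB^{r-1}$ constants is exactly the paper's bookkeeping, so there is no gap.
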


Again, the generalization bound for neural networks in the case of regression differs from the bound in binary classification in exactly the same way as for linear classifiers.
In the Rademacher complexity, we obtain an extra factor of \(2\) from non-monotonicity and a factor of \(rB^{r - 1}\) from the Lipschitz constant.
The bounded differences term, which is the second term in the above generalization bound, has an extra \(B^{r}\) factor since the loss is bounded on \([0, B^{r}]\).

\section{Optimization of risk bounds}
\label{subsecAlgorithms}

Our sample-based upper bounds on adversarial risk suggest the strategy of optimizing the bounds in the corollaries, rather than simply the standard empirical risk, to achieve robustness of the trained networks against adversarial perturbations.
Accordingly, we provide two algorithms for optimizing the upper bounds appearing in Corollary~\ref{corGeneralizationLinear} for linear classifiers. 
Additionally, we briefly discuss algorithms suggested by Corollary~\ref{corNNGeneralization} and comment on the computational difficulties.

\subsection{Optimization for linear classifiers}

One idea is to optimize the first bound~\eqref{EqnLin1} directly.
Recalling the form of $\Psi$, this leads to the following optimization problem:
\begin{align}
&\begin{aligned}
\min_{\theta, b} \hspace{25pt}
& 
\sum_{i = 1}^{n} 
\max\{0, 1 - y_{i} (\theta^{\trans} x_{i} + b) + \epsilon \|\theta\|_{q}\}.
\label{eqnLinearObj1}
\end{aligned}
\end{align}
Note that the optimization problem of equation~\eqref{eqnLinearObj1} is convex in \(\theta\) and \(b\); therefore, this is a computationally tractable problem.
We summarize this approach in Algorithm~\ref{algConvex}.

\begin{algorithm}[!h]
\SetKwInOut{Input}{Input}
\Input{Data \(z_{1}, \ldots, z_{n}\), function class \(\functions_{\linear}\).}

Solve equation~\eqref{eqnLinearObj1} to obtain $(\hat{\theta}, \hat{b})$.

Return the resulting classifier $\sgn(\hat{f})$, where $\hat{f}(x) = \hat{\theta} x + \hat{b}$.

\caption{Convex risk}
\label{algConvex}
\end{algorithm}

The second approach involves optimizing the second adversarial risk bound~\eqref{EqnLin2}.
Although this bound is generally looser than the bound~\eqref{EqnLin1}, we remark on optimization due to the fact that regularization is a popular mechanism for encouraging generalization. However, note that the regularization coefficient in the bound~\eqref{EqnLin2} depends on $f$. Thus, we propose to perform a grid search over the value of the regularization parameter.

Specifically, define
\begin{equation}
\label{eqnGamma}
\gamma_{\linear}(f)
:=
\sum_{i = 1}^{n} \lhzo(\Psi f, z_{i}).
\end{equation}
We then have the optimization problem
\begin{align}
&\begin{aligned}
\min_{\theta, b} \hspace{14pt}
& 
\sum_{i = 1}^{n} 
\max\{0, 1 - y_{i} (\theta^{\trans} x_{i} + b) \}  
+ \epsilon \|\theta\|_{q} \gamma_{\linear}(f).
\label{eqnLinearObj2}
\end{aligned}
\end{align}
Note, however, that \(\gamma_{\linear}(f)\) is nonconvex, and the form as a function of $\theta$ and $b$ is complicated. We propose to take \(\gamma_{i} = i/n\) for \(i = 0, \ldots, n\) and solve
\begin{align}
&\begin{aligned}
\min_{\theta, b} \hspace{14pt}
& 
\sum_{j = 1}^{n} 
\max\{0, 1 - y_{j} (\theta^{\trans} x_{j} + b) \}  
+ \epsilon \|\theta\|_{q} \gamma_{i}.
\label{eqnLinearObj3}
\end{aligned}
\end{align}
At the end, we simply pick the solution minimizing the objective function in equation~\eqref{eqnLinearObj2} over all $i$. Note that this involves evaluating equation~\eqref{eqnGamma}, but this is easy to do in the linear case.
Note that, when \(q = 2\), equation~\eqref{eqnLinearObj3} is essentially a support vector machine. 
This method is summarized in Algorithm~\ref{algExhaustive}.

\begin{algorithm}[!h]
\SetKwInOut{Input}{Input}
\Input{Data \(z_{1}, \ldots, z_{n}\), function class \(\functions_{\linear}\).}
\For{$i = 0, \ldots, n$}{
Set \(\gamma_{i} = i /n\).

Calculate the \(f_{i}\) minimizing equation~\eqref{eqnLinearObj3}.

Save the robust empirical risk, the objective of equation~\eqref{eqnLinearObj2}, of \(f_{i}\) as \(\risk_{i}\). 
}

Return the \(f_{i}\) with the minimum \(\risk_{i}\).

\caption{Regularized risk}
\label{algExhaustive}
\end{algorithm}

Both of these algorithms can also be adapted to the cases of multiclass classification and regression.
The main difference is simply the loss function; equation~\eqref{eqnLinearObj1} would be modified to the multiclass hinge loss or the squared-error loss, instead.

\subsection{Optimization for neural networks}

In the case of neural networks, we will confine our discussion to minimizing the empirical risk of \(Tf\), since it is less clear how to obtain a useful algorithmic problem from the perspective of regularization.

In principle, there is nothing wrong with fixing a level of robustness \(\epsilon\) and then attempting to find an \(f\) minimizing the empirical risk of \(Tf\).
In practice, this problem becomes computationally difficult for relatively small neural networks.
A major problem is computing the perturbations \(w_{f}^{(j_{1 : d + 1})}\). 
In particular, we need to compute 
\(\prod_{k = 2}^{d + 1} J_{k}\) different perturbations for a fully-connected neural network each time we wish to evaluate \(Tf\).
For smaller networks, this can be done, and 
we formalize this as Algorithm~\ref{algNNdirect}.

\begin{algorithm}[!h]
\SetKwInOut{Input}{Input}
\Input{Data \(z_{1}, \ldots, z_{n}\), function class \(\functions_{\nets}\).}

Find an \(\hat{f}\) minimizing the empirical risk of \(Tf\).

Return the resulting neural network \(\hat{f}\).
\caption{Empirical risk of \(Tf\)}
\label{algNNdirect}
\end{algorithm}

For large networks, this computational cost is prohibitive. As a result, we would like to restrict our analysis to a class of neural networks \(\functions_{\nets}^{\star}\) and a transformation \(T^{\star}\) such that these perturbations can be computed faster.

To this end, we suggest a few additional strategies to improve the speed of computing \(w_{f}^{(j_{1: d + 1})}\). The first idea is to fix the signs of the matrix entries in a layer. The second idea is to bound the \(q\)-norm of the rows in the first-layer matrix \(A^{(1)}\). Suppose all of the \(q\)-norms are bounded by \(c_{q}\). Then we could replace \(\|a_{j_{2}}^{(1)}\|_q\) by \(c_{q}\) in the computation of \(w_{f}^{(j_{1: d + 1})}\) to obtain a new perturbation \(w_{q}^{(j_{1: d + 1})}\) and a new transform \(T^{\star} f\). Note that with the above two modifications, the perturbation still depends on the path through the neural network, but it no longer depends on the exact network \(f\). Thus, a third idea is to reduce the number of paths through the network that lead to different perturbations. The drawback to these approaches is a possible decrease in prediction accuracy, since the restricted class of neural networks may not be able to approximate the requisite classification functions as accurately.

As an example, suppose \(f: \xspace \to \reals\) is a neural network of depth \(d\) for binary classification. Suppose all the matrices \(A^{(1)}, \ldots, A^{(d)}\) are constrained to be nonnegative. 
Furthermore, suppose \(J_{d} = 2\), and that \(a^{(d + 1)}_{1, 1}\) is nonnegative and \(a^{(d + 1)}_{1, 2}\) is nonpositive. 
Additionally, let \(\|a_{j}^{(1)}\|_{q} \leq c_{q}\) for all \(j\).
Denote the depth \(d - 1\) sub-network of \(f\) by \(f^{(d - 1)}: \xspace \to \reals^{2}\). Then we define
\begin{align*}
& \begin{aligned}
T^{\star} f(x, y)
&=
a^{(d + 1)}_{1, 1} 
s_{d}(T^{\star}f^{(d - 1)}_{1}(x, y))
+
a^{(d + 1)}_{1, 2} 
s_{d}(T^{\star}f^{(d - 1)}_{2}(x, -y)),
   \end{aligned}
\end{align*}
where for \(k = 1, 2\), we define
\begin{align*}
& \begin{aligned}
T^{\star} f^{(d - 1)}_{k}(x, y) 
&=
A^{(d)} s_{d - 1}\left(\ldots s_{1}\left(A^{(1)}x + w_{q}^{(k)} \ind \right)\right),
   \end{aligned}
\end{align*}
where \(\ind\) denotes the all-ones vector and \(w_{q}^{(k)} = - y \epsilon c_{q}\). 
Crucially, we can compute \(T^{\star} f_{k}^{(d - 1)}\) by straightforward matrix multiplications once \(w_{q}^{(k)}\) has been computed.
On the downside, since such a setup would lead to a constrained optimization problem where \(A^{(j)} \geq 0\) element-wise for all \(j \leq d\), \(a_{1, 1}^{(d + 1)} \geq 0\), and \(a_{1, 2}^{(d + 1)} \leq 0\), an optimization algorithm that can handle constraints would be necessary instead of the usual unconstrained algorithms employed in optimizing neural networks.

Finally, we note that the preceding discussion applies to the regression case with little modification. 
The main difference is again that different loss functions should be used, and for multiclass classification \(f\) should map to \(\reals^{K}\) instead of \(\reals\).




\section{Binary classification proofs}
\label{secProofs}

In this section, we provide the proofs of the main results in the case of binary classification using linear or neural networks.

\subsection{Sup and tree transforms}
\label{SecSupTree}

We first present the proofs of our core theoretical results regarding the transform functions $\Psi$ and $T$.

\begin{proof}[Proof of Proposition~\ref{propSupTransformEquality}]
We break our analysis into two cases.
If \(y = +1\), then \(\ellbar(f(x), +1)\) is decreasing in \(f(x)\).
Thus, we have
\begin{align*}
\sup_{w \in B(\epsilon)} \ellbar(f(x + w), +1)  &=
\ellbar\left(\inf_{w \in B(\epsilon)} f(x+w), +1\right) =
\ellbar\left((-1) \sup_{w \in B(\epsilon)} (-1) f(x+w), +1\right) \\
& = \ell\left(\Psi f, (x, +1)\right).
\end{align*}
If instead $y = -1$, then \(\ellbar(f(x), -1)\) is increasing in \(f(x)\), so
\begin{align*}
\sup_{w \in B(\epsilon)} \ellbar(f(x+w), -1) &=
\ellbar\left(\sup_{w \in B(\epsilon)} f(x+w), -1\right) = \ellbar\left( (1) \sup_{w \in B(\epsilon)} (1) f(x+w), -1\right) \\
& = \ell\left(\Psi f, (x, -1)\right).
\end{align*}
This completes the proof.
\end{proof}


\begin{proof}[Proof of Proposition~\ref{propSupLinear}]
Using the definition of the sup transform, we have
\begin{align*}
& \begin{aligned}
\Psi f(x, y) 
&=
-y \sup_{w \in B(\epsilon)} (-y)(\theta^{\trans} x + b + \theta^{\trans} w) \\
&=
\theta^{\trans} x + b - y \sup_{w \in B(\epsilon)} (-y) \theta^{\trans} w \\ 
&=
\theta^{\trans} x + b - y \epsilon \|\theta\|_{q},
   \end{aligned}
\end{align*}
where the final equality comes from the variational definition of the \(\ell_{q}\)-norm.
This completes the proof.
\end{proof}

Before we begin the proof of Proposition~\ref{propTreeLoss}, we state, prove, and remark upon a helpful lemma.
We want to apply this iteratively to push the supremum inside the layers of the neural network.

\begin{lemma}
Let \(g: \xspace \to \reals^{J}\) be a function and define \(s: \reals \to \reals\) to be a monotonically increasing function applied elementwise to vectors.
Then we have the inequality
\begin{align*}
&\begin{aligned}
\sup _{w \in B(\epsilon)}& 
\sum_{j = 1}^{J} b_{j} s(a_{j}^{\trans} g(x + w))
\leq \sum_{j = 1}^{J} b_{j} 
s\left(\sgn(b_{j}) \sup_{w^{(j)} \in B(\epsilon)} \sgn(b_{j}) a_{j}^{\trans} g\left(x + w^{(j)}\right)\right).
\end{aligned}
\end{align*}
\label{lemmaTreeIterate}
\end{lemma}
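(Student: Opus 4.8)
The plan is to prove the inequality term-by-term, reducing the vector-valued statement to a scalar statement about a single summand and then exploiting the monotonicity of $s$ together with a sign-matching argument. The fundamental issue being addressed is that on the left-hand side a single perturbation $w$ must be shared across all $J$ summands, whereas on the right-hand side each summand gets its own perturbation $w^{(j)}$; since allowing more perturbations can only increase the supremum, the inequality should follow once I correctly handle the interaction between the sign of $b_j$ and the direction in which $s$ is monotone.

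First I would observe that for any fixed $w \in B(\epsilon)$,
\begin{align*}
\sum_{j=1}^J b_j s(a_j^\trans g(x+w)) \leq \sum_{j=1}^J \sup_{w^{(j)} \in B(\epsilon)} b_j s(a_j^\trans g(x + w^{(j)})),
\end{align*}
because each summand on the right is a supremum over an independent perturbation that includes the value at $w$. Taking the supremum over $w$ on the left then preserves the inequality, so it suffices to show that for each $j$,
\begin{align*}
\sup_{w^{(j)} \in B(\epsilon)} b_j s(a_j^\trans g(x+w^{(j)})) = b_j s\left(\sgn(b_j) \sup_{w^{(j)} \in B(\epsilon)} \sgn(b_j) a_j^\trans g(x+w^{(j)})\right).
\end{align*}

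The key step is this scalar identity, which I would verify by cases on the sign of $b_j$. If $b_j \geq 0$, then $\sgn(b_j) = 1$ and pushing the supremum through $b_j$ and through the increasing function $s$ gives $b_j s(\sup_{w^{(j)}} a_j^\trans g(x+w^{(j)}))$, matching the right-hand side. If $b_j < 0$, then $\sgn(b_j) = -1$; multiplying by the negative constant $b_j$ turns the supremum into an infimum, so $\sup_{w^{(j)}} b_j s(\cdot) = b_j \inf_{w^{(j)}} s(\cdot) = b_j s(\inf_{w^{(j)}} a_j^\trans g(x+w^{(j)}))$ using monotonicity of $s$, and rewriting $\inf_{w^{(j)}} a_j^\trans g = -\sup_{w^{(j)}} (-a_j^\trans g) = \sgn(b_j)\sup_{w^{(j)}} \sgn(b_j) a_j^\trans g$ recovers exactly the claimed form. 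Summing the per-$j$ identities and combining with the first display completes the argument.

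The main obstacle I anticipate is purely bookkeeping rather than conceptual: keeping the direction of the monotonicity of $s$ consistent with the sign flips induced by $b_j$, and confirming that the $\sgn(b_j)$ factors inside and outside $s$ cancel to reproduce the stated right-hand side in both sign cases. The boundary case $b_j = 0$ is trivial since both sides vanish, and if $s$ is only assumed monotonically increasing (not strictly), all the equalities above still hold because $s$ commutes with suprema and infima of its argument under monotonicity. No compactness or continuity beyond monotonicity of $s$ is needed, since the manipulations involve only the order-preserving (or order-reversing) behavior of $s$ and of multiplication by a constant of fixed sign.
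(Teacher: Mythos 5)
Your proposal is correct and takes essentially the same route as the paper: push the supremum inside the sum term-by-term, then absorb the sign of $b_j$ by observing that $t \mapsto b_j s(\sgn(b_j)\, t)$ is monotonically increasing. One caveat: you assert the per-summand step as an \emph{equality}, justified by the claim that a monotonically increasing $s$ ``commutes with suprema and infima''; this is false in general without continuity of $s$ or attainment of the supremum (a jump of $s$ at an unattained supremal value of $a_j^{\trans}g(x+w)$ breaks it), and one only has $\sup_w s(u(w)) \le s\bigl(\sup_w u(w)\bigr)$. Fortunately that is exactly the direction the lemma requires, so replacing your equalities by ``$\le$'' leaves the argument intact, matching the paper's proof, which likewise concludes only an inequality.
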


\begin{proof}
Denote the left hand-side of the desired inequality by \(L\).
First, we can push the supremum inside the sum to obtain
\[
L
\leq 
\sum_{j = 1}^{J} \sup_{w^{(j)} \in B(\epsilon)} 
b_{j} s\left(a_{j}^{\trans} g\left(x + w^{(j)}\right)\right).
\]
Next, note that
\begin{align}
&\begin{aligned}
\label{EqnInter}
\sup_{w^{(j)} \in B(\epsilon)} &
b_{j} s\left(a_{j}^{\trans} g\left(x + w^{(j)}\right)\right)
= 
\sup_{w^{(j)} \in B(\epsilon)} 
b_{j} s\left(\sgn(b_{j}) \sgn(b_{j}) a_{j}^{\trans} g\left(x + w^{(j)}\right)\right).
\end{aligned}
\end{align}
Since \(s\) is monotonically increasing, we see that the map
\(x \mapsto b_{j} s( \sgn(b_{j}) x)\) is monotonically increasing, as well.
Thus, the supremum in equation~\eqref{EqnInter} is obtained when $\sgn(b_j)a_j^{\trans} g(x+w^{(j)})$ is maximized. Hence, we obtain
\begin{align*}
&\begin{aligned}
L
&\leq 
\sum_{j = 1}^{J} b_{j}  s\left(\sgn(b_{j}) \sup_{w^{(j)} \in B(\epsilon)} \sgn(b_{j}) a_{j}^{\trans} g\left(x + w^{(j)}\right)\right),
\end{aligned}
\end{align*}
which completes the proof.
\end{proof}

\begin{remark}
\label{RemIterate}
Note that if \(f(x) = b^{\trans} s(A g(x))\), where \(g(x) = s'(A' h(x))\),
this lemma yields 
\begin{align*}
& \begin{aligned}
L
&\leq 
\sum_{j = 1}^{J} b_{j} s\left(\sgn(b_{j})
\sup_{w^{(j)} \in B(\epsilon)} 
\sum_{k = 1}^{K} \sgn(b_{j}) a_{j, k} 
s'\left( (a'_{k})^{\trans} h\left(x + w^{(j)}\right) 
\right)
\right).
   \end{aligned}
\end{align*}
If we apply Lemma~\ref{lemmaTreeIterate} again, we obtain
\begin{align*}
& \begin{aligned}
L 
&\leq 
\sum_{j = 1}^{J} b_{j} s\left(\sgn(b_{j})
\sum_{k = 1}^{K} \sgn(b_{j}) a_{j, k} 
s'\left( \sgn(b_{j} a_{j, k}) \sup_{w^{(j, k)} \in B(\epsilon)} \sgn(b_{j} a_{j, k}) 
(a'_{k})^{\trans} h\left(x + w^{(j, k)}\right) 
\right)
\right)
\\
&=
\sum_{j = 1}^{J} b_{j} s\left(
\sum_{k = 1}^{K}  a_{j, k} 
s'\left(\sgn(b_{j} a_{j, k}) \sup_{w^{(j, k)} \in B(\epsilon)} 
\sgn(b_{j} a_{j, k}) 
(a'_{k})^{\trans} h\left(x + w^{(j, k)}\right) 
\right)
\right).
   \end{aligned}
\end{align*}
In particular, we note  that the sign terms accumulate within the supremum, but when we take the supremum inside another layer, the sign terms $\sgn(b_j)$ remaining in the previous layers cancel out and are incorporated into the \(\sgn(b_{j} a_{j, k})\) of the next layer.
\end{remark}

\begin{proof}[Proof of Proposition~\ref{propTreeLoss}]
First note that the assumption that \(\ell\) is monotonically decreasing in \(y f(x)\) is equivalent to \(\ell\) being monotonically increasing in \(-y f(x)\). As in the proof of Proposition~\ref{propSupTransformEquality}, if $y=+1$, we want to show that $\Psi f(x,y) \ge Tf(x,y)$; if $y = -1$, we want to show that $\Psi f(x,y) \le Tf(x,y)$.
Thus, it is our goal to establish the inequality
\begin{equation}
-y \Psi f(x, y) 
\leq 
-y T f(x, y).
\end{equation}
We define \(L := -y \Psi f(x, y)\) and  
show how to take the supremum inside each layer of the neural network to yield \(-y T f(x, y)\).
To this end, we simply apply Lemma~\ref{lemmaTreeIterate} and Remark~\ref{RemIterate} iteratively until the remaining function is linear.
Thus, we see that 
\begin{align*}
& \begin{aligned}
L
&\leq 
-y \sum_{j_{d + 1} = 1}^{J_{d + 1}} a^{(d + 1)}_{1, j_{d + 1}} 
s_{d}\left( \sum_{j_{d} = 1}^{J_{d}} a^{(d)}_{j_{d + 1}, j_{d}}
s_{d - 1}\left( \sum_{j_{d - 1} = 1}^{J_{d - 1}} a^{(d - 1)}_{j_{d}, j_{d - 1}} 
s_{d - 2}\left(\ldots
s_{1}\left(
\sgn\left(-y a_{1}^{(d + 1)} a_{1, j_{d}}^{(d)} \ldots a_{j_{3}, j_{2}}^{(2)}\right)
 \right. \right. \right. \right. \\ & \qquad   \left. \left. \left. \left.
\times  
\sup _{w^{(j_{2: d + 1})} \in B(\epsilon)}
\sgn\left(-y a_{1}^{(d + 1)} a_{1, j_{d}}^{(d)} \ldots a_{j_{3}, j_{2}}^{(2)}\right)
\left(a^{(1)}_{j_{2}}\right)^{\trans} 
\left(x 
+
w^{(j_{2: d + 1})}\right)
\right)
\right)
\right)
\right), 
   \end{aligned}
\end{align*}
and simplifying gives
\begin{align*}
& \begin{aligned}
L
&\leq
-y \sum_{j_{d + 1} = 1}^{J_{d + 1}} a^{(d + 1)}_{1, j_{d + 1}}  
s_{d}\left( \sum_{j_{d} = 1}^{J_{d}} a^{(d)}_{j_{d + 1}, j_{d}}
s_{d - 1}\left( \sum_{j_{d - 1} = 1}^{J_{d - 1}} a^{(d - 1)}_{j_{d}, j_{d - 1}} 
s_{d - 2}\left(\ldots
s_{1}\left(
\left(a^{(1)}_{j_{2}}\right)^{\trans} 
x 
\right. \right. \right. \right. \\ & \qquad \left. \left. \left. \left. +
\sgn\left(-y a_{1}^{(d + 1)} a_{1, j_{d}}^{(d)} \ldots a_{j_{3}, j_{2}}^{(2)}\right)
\right. \right. \right. \right. \\
& \qquad  \left. \left. \left. \left. \left.
\times 
\sup _{w^{(j_{2: d + 1})} \in B(\epsilon)}
\sgn\left(-y a_{1}^{(d + 1)} a_{1, j_{d}}^{(d)} \ldots a_{j_{3}, j_{2}}^{(2)}\right)
\left(a^{(1)}_{j_{2}}\right)^{\trans} 
w^{(j_{2: d + 1})}
\right)
\right)
\right)
\right) 
\right).
   \end{aligned}
\end{align*}
The final supremum clearly evaluates to $\epsilon \|a_{j_2}^{(1)}\|_q$.
Recalling the definition~\eqref{EqnThreeStar} of $w_f^{(j_{2:d+1})}$, we then have
\begin{align*}
& \begin{aligned}
-y \Psi f(x, y)
&\leq 
-y \sum_{j_{d + 1} = 1}^{J_{d + 1}} a^{(d + 1)}_{1, j_{d + 1}} 
s_{d}\left( \sum_{j_{d} = 1}^{J_{d}} a^{(d)}_{j_{d + 1}, j_{d}}
s_{d - 1}\left( 
\ldots  
s_{1}\left(
\left(a^{(1)}_{j_{2}}\right)^{\trans} 
x 
+
w^{(j_{2: d + 1})}_{f}
\right)
\right)
\right) \\
&=
-y T f(x, y),
   \end{aligned}
\end{align*}
which proves the proposition.
\end{proof}

\subsection{Rademacher complexity proofs}
\label{subsecRademacher}

In this section, we prove Lemmas~\ref{lemmaLinearERC} and \ref{lemmaNNRad2}, which are the bounds on the empirical Rademacher complexities of \(\Psi \functions_{\linear}\) and \(T \functions_{\nets}\).
The proofs are largely based on preexisting proofs for bounding the empirical Rademacher complexities of \(\functions_{\linear}\) and \(\functions_{\nets}\), and this simplicity is part of what makes \(\Psi\) and \(T\) attractive.

\begin{proof}[Proof of Lemma~\ref{lemmaLinearERC}]
Using Proposition~\ref{propSupLinear}, we have
\begin{align*}
&\begin{aligned}
n \emprad_{n}(\Psi \functions_{\linear})
&=
\expect_{\sigma} \left[\sup_{f \in \functions} \sum_{i = 1}^{n} \sigma_{i} \Psi f(x_i, y_i)\right] \\
&=
\expect_{\sigma} \left[\sup_{f \in \functions} \sum_{i = 1}^{n} \sigma_{i} \left(\theta^{\trans}x_{i} + b - y_{i} \epsilon \|\theta\|_{q} \right)\right] \\
&\leq 
\expect_{\sigma} \left[\sup_{f \in \functions} \sum_{i = 1}^{n} \sigma_{i} \left(\theta^{\trans}x_{i} + b \right)\right] \\
&\qquad 
+
\expect_{\sigma} \left[\sup_{f \in \functions} \sum_{i = 1}^{n} \sigma_{i} 
\left(- y_{i} \epsilon \|\theta\|_{q} \right)\right] \\
&=
n \emprad_{n}( \functions_{\linear})
+
\epsilon\expect_{\sigma} \left[\sup_{f \in \functions}  \|\theta\|_{q} \sum_{i = 1}^{n} \sigma_{i}\right].
\end{aligned}
\end{align*}
By Lemma~\ref{lemmaLinearRademacher}, the empirical Rademacher complexity of a linear function class is given by
\[
\emprad_{n}(\functions_{\linear})
\leq 
\frac{M_{2} R}{\sqrt{n}}.
\]
Thus, it remains to analyze the second term in the upper bound.

If the sum of the \(\sigma_{i}\)'s is negative, the \(\theta\) maximizing the supremum is the zero vector. 
Alternatively, if the sum is positive, we clearly have the upper bound $M_q \sum_{i=1}^n \sigma_i$. Thus, we have
\begin{align*}
\epsilon\expect_{\sigma} \left[ \sup_{f \in \functions}  \|\theta\|_{q} \sum_{i = 1}^{n} \sigma_{i} \right] 
&\le
\epsilon \expect_{\sigma} \left[M_{q} 
\sum_{i = 1}^{n} \sigma_{i} \ind\left\{\sum_{i = 1}^{n} \sigma_{i} > 0\right\}\right] \\
&\stackrel{(a)}{=}
\frac{\epsilon M_{q}}{2} 
\expect
\left|\sum_{i = 1}^{n} \sigma_{i}\right| \\
&\le
\frac{\epsilon M_{q}}{2} 
\left(\expect\left[
\left(\sum_{i = 1}^{n} \sigma_{i}\right)^{2}
\right]\right)^{\frac{1}{2}},
\end{align*}
where $(a)$ follows because $\sigma_i$ and $-\sigma_i$ have the same distribution, and the last inequality follows by Jensen's inequality. The last term is equal to
$\frac{\epsilon M_{q}}{2} \sqrt{n}$,
using the fact that the \(\sigma_{i}\)'s are independent, zero-mean, and unit-variance random variables.
Putting everything together yields
\[
\emprad_{n}(\Psi \functions_{\linear})
\leq 
\frac{M_{2} R}{\sqrt{n}}
+
\frac{\epsilon M_{q}}{2 \sqrt{n}}, 
\]
which completes the proof.
\end{proof}

\begin{proof}[Proof of Lemma~\ref{lemmaNNRad2}]
Our broad goal is to peel off the layers of the neural network one at a time.
Most of the work is done by Lemma~\ref{lemmaRademacherPeeling}.
The proof is essentially the same as the Rademacher complexity bounds on neural networks of \cite{golowich2018} until we reach the underlying linear classifier. We then bound the action of the adversary in an analogous manner to the linear case.

We write
\begin{align*}
& \begin{aligned}
n\emprad(T \functions_{\nets}) 
&=
\frac{1}{\lambda} \log \exp\left(\lambda 
\expect\left[\sup_{f \in \functions_{\nets}} \sum_{i = 1}^{n} \sigma_{i} Tf(x_{i}, y_i)\right] 
\right)  \\
&\leq 
\frac{1}{\lambda} \log
\expect\left[\sup_{f \in \functions_{\nets}} \exp\left(\lambda\sum_{i = 1}^{n} \sigma_{i} Tf(x_{i}, y_i) 
\right)\right].
   \end{aligned}
\end{align*}

Recalling the form of $Tf$ from equation~\eqref{EqnTwoStar}, we can apply Lemma~\ref{lemmaRademacherPeeling} successively \(d\) times with 
\(G(x) = \exp((\lambda \prod_{j \in J} \alpha_{j}) x)\) for various \(J\) in order to remove the layers of the neural network. 
Specifically, we use \(J = \emptyset\), \(J = \{d + 1\}\), \(J = \{d + 1, d\}\), up to \(J = \{d + 1, \ldots, 3\}\), as we peel away the layers and retain the bounds \(\alpha_{j}\) on the matrix norms from the layers that we have removed.
This implies
\begin{align*}
& \begin{aligned}
& n \emprad(\functions_{\nets})  
\leq 
\frac{1}{\lambda} \log 2^{d} \expect \Bigg[\sup _{f \in \functions_{\nets}} \max_{j_{2}, \ldots, j_{d + 1}}
\exp\left(\frac{\alpha \lambda}{\alpha_{1}}
\left(a_{j_{2}}^{(1)}\right)^{\trans} \sum_{i = 1}^{n} \sigma_{i} x_{i} +
\frac{\alpha \lambda}{\alpha_{1}} \sum_{i = 1}^{n} \sigma_{i} w_{f}^{(j_{2: d + 1})}
\right) \Bigg] \\ 
& \qquad =
\frac{1}{\lambda} \log 2^{d} \expect \Bigg[\sup _{f \in \functions_{\nets}} \max_{j_{2}, \ldots, j_{d + 1}} 
\exp\left(\frac{\alpha \lambda}{\alpha_{1}}
\left(a_{j_{2}}^{(1)}\right)^{\trans} \sum_{i = 1}^{n} \sigma_{i} x_{i} - 
\frac{\alpha \lambda}{\alpha_{1}} \epsilon \left\|a_{j_{2}}^{(1)}\right\|_{q} \sgn(f, j_{2: d + 1})
\sum_{i = 1}^{n} \sigma_{i} y_{i} 
\right)\Bigg].
   \end{aligned}
\end{align*}
Note that the maxima over \(j_{2}, \ldots, j_{d + 1}\) are accumulated from each application of Lemma~\ref{lemmaRademacherPeeling}.
These maxima correspond to taking a worst-case path through the tree.
To bound the first term, we apply the Cauchy-Schwarz inequality. 
To bound the second term, we use the inequality
\[
-\sgn(f, j_{2: d + 1})
\sum_{i = 1}^{n} \sigma_{i} y_{i} 
\leq 
\left|\sum_{i = 1}^{n} \sigma_{i} y_{i}\right|.
\]
Thus, we have
\begin{align*}
& \begin{aligned}
n \emprad(\functions_{\nets})
&\leq 
\frac{1}{\lambda } \log 2^{d} \expect \Bigg[ \sup_{f \in \functions_{\nets}} \max_{j_{2}, \ldots, j_{d + 1}} 
\exp\left(\frac{\alpha \lambda}{\alpha_{1}} \left\|a_{j_{2}}^{(1)}\right\|_{2} \left\|\sum_{i = 1}^{n} \sigma_{i} x_{i}\right\|_{2}
+ 
\frac{\alpha \lambda}{\alpha_{1}} \epsilon \left\|a_{j_{2}}^{(1)}\right\|_{q} \left|\sum_{i = 1}^{n} \sigma_{i} y_{i}\right|
\right) \Bigg]\\
&\leq 
\frac{1}{\lambda } \log 2^{d} \expect \Bigg[
\exp\left(\frac{\alpha \alpha_{1, F} \lambda}{\alpha_{1}} \left\|\sum_{i = 1}^{n} \sigma_{i} x_{i}\right\|_{2}
+ 
\frac{\alpha \alpha _{1, q}}{\alpha _{1}} \lambda \epsilon \left|\sum_{i = 1}^{n} \sigma_{i} y_{i}\right|
\right)\Bigg].
   \end{aligned}
\end{align*}

In order to bound the final expectation, we define
\[
Z(\sigma) :=
\alpha \left(
\frac{\alpha_{1, F} }{\alpha_{1}} \left\|\sum_{i = 1}^{n} \sigma_{i} x_{i}\right\|_{2}
+ 
\frac{\alpha_{1, q}}{\alpha_{1}}  \epsilon \left|\sum_{i = 1}^{n} \sigma_{i} y_{i}\right| \right),
\]
where we view \(Z\) as a function of the \(\sigma_{i}\)'s.
Now we have
\begin{align}
& \begin{aligned}
n\emprad(\functions_{\nets}) 
&\leq 
\frac{1}{\lambda} d \log 2 
+ 
\log \expect\left[\exp(\lambda(Z - \expect\left[Z\right])) \exp(\lambda \expect\left[Z\right])\right] \\ 
&=
\frac{1}{\lambda} d \log 2 
+ 
\frac{1}{\lambda}\log \expect\left[\exp(\lambda(Z - \expect\left[Z\right]))\right] 
+
 \expect[Z].
\label{eqnLogDecomp}
   \end{aligned}
\end{align}
Thus, it remains to compute the last two terms on the right-hand side.
We start with the expectation of \(Z\).
By Jensen's inequality,
\begin{align*}
& \begin{aligned}
\expect\left[Z\right]  
&\leq 
\alpha \frac{\alpha_{1, F}}{\alpha_{1}} 
\left(\expect \left\|\sum_{i = 1}^{n} \sigma_{i} x_{i}\right\|_{2}^{2}
\right)^{\frac{1}{2}}
+ 
\alpha \frac{\alpha_{1, q}}{\alpha_{1}} \epsilon 
\left(
\expect \left(\sum_{i = 1}^{n} \sigma_{i} y_{i}\right)^{2}
\right)^{\frac{1}{2}} \\
&=
\alpha \frac{\alpha_{1, F} }{\alpha_{1}} 
\left(\expect \sum_{i, j = 1}^{n} \sigma_{i} \sigma_{j} x_{i}^{\trans} x_{j}
\right)^{\frac{1}{2}}
+ 
\alpha \frac{\alpha_{1, q}}{\alpha_{1}} \epsilon 
n^{\frac{1}{2}} \\
&=
\alpha \frac{\alpha_{1, F} }{\alpha_{1}} 
\left(\sum_{i = 1}^{n} \|x_{i}\|^{2}
\right)^{\frac{1}{2}}
+ 
\alpha \frac{\alpha_{1, q}}{\alpha_{1}} \epsilon 
n^{\frac{1}{2}} \\
&\le
\alpha 
\left(\frac{\alpha_{1, F} }{\alpha_{1}} 
R 
+ 
\frac{\alpha_{1, q}}{\alpha_{1}} \epsilon 
\right)
\sqrt{n} \\
&:=
C \sqrt{n}.
   \end{aligned}
\end{align*}

Next, we need to handle the middle term in inequality~\eqref{eqnLogDecomp}.
The idea is to use standard bounds employed in concentration inequalities.
Let \(\sigma_{i}' = \sigma_{i}\) for all \(i = 1, \ldots, n\), except for one, where \(\sigma_{j}' = - \sigma_{j}\).
Treating \(Z\) as a function of the \(\sigma_{i}\)'s, we obtain
\begin{align*}
& \begin{aligned}
Z(\sigma)
-
Z(\sigma')
&=
\frac{\alpha \alpha_{1, F}}{\alpha_{1}} 
\left(\left\|\sum_{i = 1}^{n} \sigma_{i} x_{i}\right\|_{2}  
- \left\|\sum_{i = 1}^{n} \sigma_{i}' x_{i}\right\|_{2}\right)
+
\frac{\alpha \alpha_{1, q}}{\alpha_{1}} \epsilon 
\left(\left| \sum_{i = 1}^{n} \sigma_{i} y_{i}\right| - \left|\sum_{i = 1}^{n} \sigma_{i}'y_{i}\right| \right) \\
&\leq 
2\alpha\left(\frac{ \alpha_{1, F}}{\alpha_{1}} \left\|x_{j}\right\|_{2} 
+
\frac{ \alpha_{1, q}}{\alpha_{1}} \epsilon\right) \\
&\leq 
2C.
   \end{aligned}
\end{align*}
Thus, the variance factor in the bounded differences inequality (Lemma~\ref{lemmaTalagrandsLemma1991}) is
\[
v 
=
\frac{1}{4} \sum_{i = 1}^{n} (2C)^{2}
=
C^{2}n.
\]
This yields
\[
\frac{1}{\lambda} \log \expect \exp(\lambda(Z - \expect\left[Z\right]))
\leq 
\frac{1}{\lambda} \cdot \frac{\lambda^{2} C^{2} n}{2} \\
=
\frac{\lambda C^{2} n}{2}.
\]

Finally, putting everything together, we have
\begin{align*}
& \begin{aligned}
n \emprad(T \functions_{\nets})
&\leq 
\frac{1}{\lambda} d\log 2 + \frac{\lambda C^{2} n}{2} + C \sqrt{n} 
&=
C \left(\sqrt{2 d \log 2} + 1\right) \sqrt{n}, 
   \end{aligned}
\end{align*}
where in the last equality, we set the free parameter to be  \(\lambda = (2 d \log(2) / (C^{2} n))^{1/2}\) to minimize the bound. This completes the proof.
\end{proof}

\subsection{Proofs of corollaries}
\label{subsecCorProofs}

\begin{proof}[Proof of Corollary~\ref{corGeneralizationLinear}]
Applying Lemma~\ref{thmMainGeneralization} in Appendix~\ref{appAuxLemmas} and the Rademacher complexity bound of Lemma~\ref{lemmaLinearERC} immediately gives
\begin{align*}
& \begin{aligned}
\expect _{\dist} \ell(\Psi f, z) 
&\leq 
\frac{1}{n} \sum_{i = 1}^{n} \ell(\Psi f, z_{i})
+ 2 \frac{M_{2} R}{\sqrt{n}} 
+ \frac{\epsilon M_{q}}{\sqrt{n}} + 3 \sqrt{\frac{\log\frac{2}{\delta}}{2n}}. 
   \end{aligned}
\end{align*}
This is the first of the two bounds that we wished to prove.
To prove the second bound, we simply apply this result to the bounded hinge loss \(\lhinge'(f, z) = \min\{1, \lhinge(f, z)\}\) and then use Lemma~\ref{lemmaLinearEmpiricalBound}.
\end{proof}

\begin{proof}[Proof of Corollary~\ref{corNNGeneralization}]
Applying Lemma~\ref{thmMainGeneralization} and the Rademacher complexity bound of Lemma~\ref{lemmaNNRad2} gives
\begin{align*}
& \begin{aligned}
\expect_{P} \ell (Tf, z) 
&\leq 
\frac{1}{n} \sum_{i = 1}^{n} \ell(Tf, z_{i}) 
+
3\sqrt{\frac{\log \frac{2}{\delta}}{2n}}
+
2 \alpha\left( \frac{\alpha_{1, F}}{\alpha_{1}}R + \frac{\alpha_{1, q}}{\alpha_{1}} \epsilon\right) 
 \frac{\sqrt{2d \log 2} + 1}{\sqrt{n}}
   \end{aligned}
\end{align*}
This is the first desired generalization bound.
To obtain the second bound, we simply apply this first bound using the bounded cross-entropy loss  \(\lxe'(f, z) = \min\{1, \lxe(f, z)\}\)
and then apply Lemma~\ref{lemmaNNPER2}.
\end{proof}


\section{Discussion}
\label{secDiscussion}
We have presented a new method of transforming binary classifiers to obtain upper bounds on the adversarial risk. We have shown that bounding the generalization error of the transformed classifiers may be performed using similar machinery for obtaining traditional generalization bounds in the case of linear classifiers and neural network classifiers. In particular, since the Rademacher complexity of neural networks only has a small additional term due to adversarial perturbations, generalization even in the presence of adversarial perturbations should not  be impossibly difficult for binary classification. Furthermore, we have shown how to extend the results for binary classification to multiclass classification and regression.

We now mention several future directions for research. First, one might be interested in extending the supremum transformation to other types of classifiers. The most interesting avenues would include calculating explicit representations as in the case of linear classifiers, suitable alternative transformations as in the case of neural networks, and bounds on the resulting Rademacher complexities. A second direction is to explore the tree transformation better and develop algorithms for optimizing the resulting adversarial risk bounds. Much of this would be experimental, and we expect that minor tweaks could greatly improve performance in terms of training time, memory usage, and accuracy. One remaining theoretical problem is to develop generalization bounds for more sophisticated networks. Here, we have only studied feed-forward neural networks with the requisite activation functions. Notably, this is the broadest class of networks for which provable bounds for the adversarial risk currently exist.

\bibliographystyle{abbrvnat}
\bibliography{refs}

\newpage
\appendix


\section{Additional lemmas}
\label{appAddLemmas}

In this Appendix, we provide additional lemmas used in the proofs of our main results.

\subsection{Linear classification lemmas}

\begin{lemma}
Consider a linear classifier \(f(x) = \theta^{\trans} x + b\).
We have the upper bound
\begin{align*}
& \begin{aligned}
\frac{1}{n} \sum_{i = 1}^{n} & 
\left(\lhinge(\Psi f, z_{i}) - \lhinge(f, z_{i})\right)
\leq 
\epsilon \|\theta\|_{q} 
\frac{1}{n} \sum_{i = 1}^{n}  \lhzo(\Psi f, z_{i})
\end{aligned}
\end{align*}
and the lower bound
\begin{align*}
& \begin{aligned}
\frac{1}{n} \sum_{i = 1}^{n} &
\left(\lhinge(\Psi f, z_{i}) - \lhinge(f, z_{i})\right) 
\geq 
\epsilon \|\theta\|_{q} 
\frac{1}{n} \sum_{i = 1}^{n}  \lhzo(f, z_{i}).
\end{aligned}
\end{align*}
\label{lemmaLinearEmpiricalBound}
\end{lemma}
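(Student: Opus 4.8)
The plan is to reduce both inequalities to a pointwise (per-sample) statement and then average. First I would invoke Proposition~\ref{propSupLinear} to write $\Psi f(x_i, y_i) = \theta^{\trans} x_i + b - y_i \epsilon \|\theta\|_q$, so that, using $y_i^2 = 1$,
\[
y_i \Psi f(x_i, y_i) = y_i f(x_i) - \epsilon \|\theta\|_q.
\]
Setting $a_i := 1 - y_i f(x_i)$ and $c := \epsilon \|\theta\|_q \geq 0$, the two hinge losses become $\lhinge(f, z_i) = \max\{0, a_i\}$ and $\lhinge(\Psi f, z_i) = \max\{0, a_i + c\}$, so their difference is the single-variable function $g(a_i) := \max\{0, a_i + c\} - \max\{0, a_i\}$. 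Likewise $\lhzo(f, z_i) = \ind\{a_i > 0\}$ and $\lhzo(\Psi f, z_i) = \ind\{a_i + c > 0\}$.

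The core of the argument is the elementary sandwich
\[
c \, \ind\{a > 0\} \leq g(a) \leq c \, \ind\{a > -c\},
\]
valid for every real $a$ and every $c \geq 0$, which I would verify by splitting into the three regimes $a \geq 0$, $-c \leq a \leq 0$, and $a \leq -c$. On the first regime both maxima sit on their positive branches and $g(a) = c$; on the last both vanish and $g(a) = 0$; on the middle regime only the shifted term is active, giving $g(a) = a + c \in [0, c]$. In each case one checks directly that $g(a)$ lies between $c\,\ind\{a > 0\}$ and $c\,\ind\{a > -c\}$, with equality occurring exactly at the regime boundaries.

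Translating back through the identifications above, the sandwich reads
\[
c \, \lhzo(f, z_i) \leq \lhinge(\Psi f, z_i) - \lhinge(f, z_i) \leq c \, \lhzo(\Psi f, z_i),
\]
with $c = \epsilon\|\theta\|_q$. Summing over $i = 1, \ldots, n$ and dividing by $n$ yields both displayed inequalities simultaneously. There is no real obstacle beyond the bookkeeping of the piecewise-linear $g$; the only point requiring a moment's care is the treatment of the boundary values $a_i = 0$ and $a_i = -c$, where the strict inequalities defining $\lhzo$ must be matched against the value of $g$ — but in both cases the inequalities still hold, since $g(0) = c \geq 0 = c\,\ind\{0 > 0\}$ and $g(-c) = 0$ matches both indicators there.
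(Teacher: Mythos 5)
Your proof is correct and takes essentially the same route as the paper's: both invoke Proposition~\ref{propSupLinear} to make $\Psi f$ explicit, reduce the claim to a per-sample case analysis on where $1 - y_i f(x_i)$ falls (the paper splits on $\lhzo(f, z_i) \in \{0,1\}$, you on the three regimes of your piecewise-linear $g$), and then average over $i$. Your sandwich $c\,\ind\{a > 0\} \leq g(a) \leq c\,\ind\{a > -c\}$ merely packages the paper's two separate case analyses into a single pass, which is a presentational streamlining rather than a different argument.
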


\begin{proof}[Proof of Lemma~\ref{lemmaLinearEmpiricalBound}]
Using Proposition~\ref{propSupLinear} and the fact that $\ell_h(f,z) = f(x)\ell_{h, 01}(f,z)$, we write the difference in losses as
\begin{align}
& \begin{aligned}
\lhinge(\Psi f, z)  - \lhinge(f, z) 
=
\left(1 - y (\theta^{\trans}x + b) + \epsilon \|\theta\|_{q} \right) 
\lhzo(\Psi f, z)
-
\left(1 - y (\theta^{\trans}x + b)\right)
\lhzo(f, z).
\label{eqnLinearDiffRep}
\end{aligned}
\end{align}
We start by proving the upper bound.
Suppose \(\lhzo(f, z) = 1\).
Then $\ell_{h, 01} (\Psi f, z) \ge \ell_{h, 01}(f,z)$, so \(\lhzo(\Psi f, z) = 1\), as well, which means that
\begin{equation}
\label{EqnFirst}
\lhinge(\Psi f, z) - \lhinge(f, z) 
= 
\epsilon \|\theta\|_{q} \lhzo(\Psi f, z).
\end{equation}
If instead \(\lhzo(f, z) = 0\), we have
\[
\left(1 - y (\theta^{\trans}x + b) + \epsilon \|\theta\|_{q} \right) 
\leq 
0,
\]
so by equation~\eqref{eqnLinearDiffRep}, we have
\begin{align}
\label{EqnSecond}
&\begin{aligned}
\lhinge(\Psi f, z)  - \lhinge(f, z) 
&\leq 
\left(1 - y (\theta^{\trans}x + b) + \epsilon \|\theta\|_{q} \right) 
\lhzo(\Psi f, z) 
-
\left(1 - y (\theta^{\trans}x + b)\right)
\lhzo(\Psi f, z)
\\ 
&=
\epsilon \|\theta\|_{q} \lhzo(\Psi f, z).
\end{aligned}
\end{align}
Averaging over all $i$ completes the upper bound.

The lower bound is very similar.
In detail, consider the case \(\lhzo(f, z) = 1\).
Once again, we have \(\lhzo(\Psi f, z) = 1\),
so
\[
\lhinge(\Psi f, z) - \lhinge(f, z) 
= 
\epsilon \|\theta\|_{q} \lhzo(f, z) .
\]
Next, suppose $\ell_{h, 01}(f,z) = 0$. Clearly, we then have
\[
\lhinge(\Psi f, z) - \lhinge(f, z) 
\geq 
0
=
\epsilon \|\theta\|_{q} \lhzo(f, z).
\]
Averaging over all $i$ completes the lower bound and the proof.
\end{proof}

\subsection{Neural network lemmas}

Here, we collect lemmas for neural networks.
We start with a bound on the difference between the empirical risks of \(Tf\) and \(f\).

\begin{lemma}
Let \(f(x) = A^{(d + 1)}s_{d}(\ldots s_{1}(A^{(1)} x))\) be a neural network with \(1\)-Lipschitz activation functions \(s_{j}\), applied elementwise. 
Let $g_i(a) = \ellbar_{\text{xe}}(a,y_i)$.
Then
\begin{align*}
&\begin{aligned}
\frac{1}{n}\sum_{i = 1}^{n} & \left( \ell_{\text{xe}}(T f, z_{i}) - \ell_{\text{xe}}(f, z_{i})\right)
&\leq
\epsilon 
\max_{j_2= 1, \ldots J_{2}} \left\|a_{j_2}^{(1)}\right\|_q
\prod_{j = 2}^{d + 1} \left\|A^{(j)}\right\|_{\infty}
\frac{1}{n} \sum_{i = 1}^{n} |g'_{i}(Tf(x_{i}, y_i))|.
\end{aligned}
\end{align*}
\label{lemmaNNPER2}
\end{lemma}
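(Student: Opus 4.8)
The plan is to reduce the claim to two ingredients: a pointwise upper bound on each summand in terms of $|g_i'(Tf(x_i,y_i))|$, obtained from convexity, and a Lipschitz propagation bound on $|Tf(x_i,y_i)-f(x_i)|$.

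First I would observe that, since $\lxe(h,z_i)=g_i(h(x_i))$ for any predictor $h$ (with $g_i(a)=\ellbar_{\text{xe}}(a,y_i)$), the $i$th summand equals $g_i(Tf(x_i,y_i))-g_i(f(x_i))$. A direct computation reduces $g_i$ to the softplus form $\frac{1}{\ln 2}\log(1+e^{-y_i a})$, so $g_i$ is convex, differentiable, and monotonically decreasing in $y_i a$. Chaining the inequality $-y_i\Psi f(x_i,y_i)\le -y_i Tf(x_i,y_i)$ established in the proof of Proposition~\ref{propTreeLoss} with the elementary bound $-y_i\Psi f(x_i,y_i)\ge -y_i f(x_i)$ (valid because $0\in B(\epsilon)$) shows that $Tf(x_i,y_i)$ lies on the loss-increasing side of $f(x_i)$: that is, $Tf(x_i,y_i)\le f(x_i)$ when $y_i=+1$ and $Tf(x_i,y_i)\ge f(x_i)$ when $y_i=-1$. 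Applying the tangent-line inequality for the convex function $g_i$ at $b=Tf(x_i,y_i)$, namely $g_i(b)-g_i(a)\le g_i'(b)(b-a)$ with $a=f(x_i)$, and noting that $g_i'(Tf(x_i,y_i))$ and $Tf(x_i,y_i)-f(x_i)$ always share the same sign (both nonpositive when $y_i=+1$, both nonnegative when $y_i=-1$), the right-hand side equals $|g_i'(Tf(x_i,y_i))|\cdot|Tf(x_i,y_i)-f(x_i)|$. This produces exactly the derivative factor appearing in the statement.

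Second, I would bound $|Tf(x_i,y_i)-f(x_i)|$ by $\epsilon\max_{j}\|a_j^{(1)}\|_q\prod_{k=2}^{d+1}\|A^{(k)}\|_\infty$. Comparing the tree transform~\eqref{EqnTwoStar} with $f$, the sole difference is that each first-layer preactivation $(a_{j_1}^{(1)})^\trans x$ is shifted by $w_f^{(j_{1:d+1})}$, whose magnitude is $\epsilon\|a_{j_1}^{(1)}\|_q\le \epsilon\max_j\|a_j^{(1)}\|_q=:\tau$ by~\eqref{EqnThreeStar}. The idea is to propagate this perturbation upward by induction on depth: since each activation $s_k$ is $1$-Lipschitz, a deviation of size $\rho$ entering a unit survives of size at most $\rho$ after the activation, and the subsequent linear combination by a row of $A^{(k)}$ inflates the absolute deviation by at most that row's $\ell_1$-norm, hence by at most $\|A^{(k)}\|_\infty$. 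Carried through all layers, the output deviation is at most $\tau\prod_{k=2}^{d+1}\|A^{(k)}\|_\infty$.

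The main obstacle is precisely this propagation step, because the tree transform assigns a \emph{different} perturbation to each path $(j_1,\dots,j_{d+1})$, so $Tf$ cannot be written as $f$ evaluated at a single perturbed preactivation vector; the induction must bound the absolute deviation contributed at each node uniformly over all admissible path-perturbations. The two observations that make this go through are that only the perturbation \emph{magnitudes} matter for an absolute-value bound (the signs $\sgn(f,j_{2:d+1})$ are irrelevant), and that the $\ell_\infty$-operator norm is exactly the quantity governing how summing a bounded per-child deviation over the children at a node amplifies it. Combining the pointwise bound of the first paragraph with this Lipschitz bound and averaging over $i=1,\dots,n$ yields the claimed inequality.
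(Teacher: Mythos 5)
Your proposal is correct and takes essentially the same approach as the paper: your pointwise bound $g_i(Tf(x_i,y_i))-g_i(f(x_i)) \le |g_i'(Tf(x_i,y_i))|\,|Tf(x_i,y_i)-f(x_i)|$ is exactly the paper's Lemma~\ref{lemmaCrossEntropyLipschitz} (your convexity tangent-line derivation at $b=Tf(x_i,y_i)$, together with the sign check, is if anything a cleaner justification than the paper's monotonicity argument), and your path-uniform induction propagating the perturbation magnitudes $\epsilon\|a_{j_1}^{(1)}\|_q$ through the layers with factors $\|A^{(k)}\|_\infty$ is precisely the content of the paper's peeling Lemma~\ref{lemmaDiffPeeling} applied $d$ times. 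Averaging over $i$ then concludes the proof in both cases.
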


\begin{proof}[Proof of Lemma~\ref{lemmaNNPER2}]
We only need to prove the bound for a single summand, since we sum and then divide by \(n\).
By Lemma~\ref{lemmaCrossEntropyLipschitz}, we have the inequality
\[
g_{i}(b) - g_{i}(a) \leq |g'_{i}(b)||b - a|.
\]
It follows that
\begin{align*}
& \begin{aligned}
L_i 
&:=
\ell_{\text{xe}}(T f, z_i) - \ell_{\text{xe}}(f, z_i) = g_i(Tf(x_i, y_i)) - g_i(f(x_i)) \\ 
&\leq  
|g'_{i}(Tf(x_{i}, y_i))|
\biggr| \sum_{j_{d + 1} = 1}^{J_{d + 1}}
a^{(d + 1)}_{1, j_{d + 1}} s_{d}\left(\sum_{j_{d} = 1}^{J_{d}} a_{1, j_{d}}^{(d)} s_{d - 1}\left(
\ldots s_{1}\left(\left(a_{j_{2}}^{(1)}\right)^{\trans}x_{i} + w_{f}^{(j_{2: d + 1})} 
\right) \right) \right) 
\\ &\qquad 
- 
\sum_{j_{d + 1} = 1}^{J_{d + 1}}
a^{(d + 1)}_{1, j_{d + 1}}
s_{d}\left(\sum_{j_{d} = 1}^{J_{d}} a_{1, j_{d}}^{(d)} s_{d - 1}\left(\ldots 
s_{1}\left(\left(a_{j_{2}}^{(1)}\right)^{\trans}x_{i}  
\right) \right) \right)
\biggr|.
   \end{aligned}
\end{align*}

Now we need to peel off the layers of our neural networks. Applying Lemma~\ref{lemmaDiffPeeling} a total of \(d\) times, we have
\begin{align*}
& \begin{aligned}
L_i 
&\leq 
|g'_{i}(Tf(x_{i}, y_i))|
\left(\prod_{j = 2}^{d + 1} \left\|A^{(j)}\right\|_{\infty}\right) 
\max_{j_{2}, \ldots, j_{d + 1}}
\biggr|
\left(a_{j_{2}}^{(1)}\right)^{\trans} x_{i} + w_{f}^{(j_{2 : d + 1})} - \left(a_{j_{2}}^{(1)}\right)^{\trans} x_{i}
\biggr| \\
&=
\left(\prod_{j = 2}^{d + 1} \left\|A^{(j)}\right\|_{\infty}\right)
\max_{j_{2}, \ldots, j_{d + 1}}
\biggr| 
w_{f}^{(j_{2 : d + 1})} 
\biggr| 
|g'_{i}(Tf(x_{i}))| \\ 
&=
\epsilon 
\max_{j_{2} = 1, \ldots J_{2}} \left\|a_{j_2}^{(1)}\right\|_q
\prod_{j = 2}^{d + 1} \left\|A^{(j)}\right\|_{\infty}
|g'_{i}(Tf(x_{i}))|,
   \end{aligned}
\end{align*}
where the last equality follows by the definition of the \(w_{f}^{(j_{2 : d + 1})}\).
Summing over $i$ and averaging proves the lemma.
\end{proof}

Next, we have two lemmas for peeling back the layers of a neural network.

\begin{lemma}
Let \(s: \reals \to \reals\) be a \(1\)-Lipschitz function applied elementwise to vectors.
Let \(a_{j'}^{\trans}\) denote the \(j'\)th row of \(A\),
and let \(b_{j}^{\trans}\) denote the \(j\)th row of \(B\). 
Let \(f_{j, j'}\) and \(f_{j, j'}'\) be functions from \(\reals^{m}\) to \(\reals^{K}\), for \(j = 1, \ldots, J\) and \(j' = 1, \ldots, J'\).
Then we have
\begin{align*}
&\begin{aligned}
\max_{j = 1, \ldots, J}  &
\biggr|\sum_{j' = 1}^{J'} b_{j, j'} 
s\left( a_{j'}^{\trans} f_{j, j'}(x)\right) 
- \sum_{j' = 1}^{J'} b_{j, j'} 
s\left( a_{j'}^{\trans} f_{j, j'}'(x)\right)\biggr| 
\\& 
\leq 
\|B\|_{\infty} \max_{j = 1, \ldots, J}  \max_{j' = 1, \ldots, J'} 
|a_{j'}^{\trans} f_{j, j'}(x) 
- a_{j'}^{\trans} f_{j, j'}'(x)|.
\end{aligned}
\end{align*}
\label{lemmaDiffPeeling}
\end{lemma}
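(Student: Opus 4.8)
The plan is to reduce the left-hand side to a single outer index and then contract through $s$ term by term. First I would fix an index $j^\star$ achieving the outer maximum on the left-hand side, so that it suffices to bound the single quantity
\[
\left|\sum_{j' = 1}^{J'} b_{j^\star, j'}\left(s\left(a_{j'}^{\trans} f_{j^\star, j'}(x)\right) - s\left(a_{j'}^{\trans} f_{j^\star, j'}'(x)\right)\right)\right|.
\]
Next I would apply the triangle inequality to pass the absolute value inside the sum over $j'$, obtaining $\sum_{j'} |b_{j^\star, j'}| \cdot |s(a_{j'}^{\trans} f_{j^\star, j'}(x)) - s(a_{j'}^{\trans} f_{j^\star, j'}'(x))|$.

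The key analytic step is the $1$-Lipschitz property of $s$: each factor $|s(a_{j'}^{\trans} f_{j^\star, j'}(x)) - s(a_{j'}^{\trans} f_{j^\star, j'}'(x))|$ is bounded by $|a_{j'}^{\trans} f_{j^\star, j'}(x) - a_{j'}^{\trans} f_{j^\star, j'}'(x)|$. I would then replace each such inner difference by its maximum over $j'$ (and hence over all $j, j'$), pulling that maximum out of the sum. This leaves the weight $\sum_{j'} |b_{j^\star, j'}|$ as a prefactor, which is exactly the $\ell_1$-norm of the $j^\star$th row of $B$.

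Finally, I would invoke the fact that the $\ell_\infty$-operator norm $\|B\|_\infty$ equals the maximum absolute row sum of $B$; hence the $\ell_1$-norm of any single row is at most $\|B\|_\infty$. Combining these bounds gives $\|B\|_\infty \max_{j, j'} |a_{j'}^{\trans} f_{j, j'}(x) - a_{j'}^{\trans} f_{j, j'}'(x)|$, which is the claimed inequality. The argument is essentially routine contraction-plus-triangle-inequality bookkeeping; the only point requiring any care is correctly identifying $\sum_{j'} |b_{j^\star, j'}|$ with a row sum and matching it to the operator-norm characterization, together with tracking that the functions $f_{j, j'}$ carry both indices (which is precisely what lets the lemma be applied iteratively in the tree transform, where each path admits its own perturbation). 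I expect no genuine obstacle beyond this notational care.
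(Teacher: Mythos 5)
Your proposal is correct and follows essentially the same route as the paper: the paper's one-line application of H\"older's inequality (in its $\ell_1$--$\ell_\infty$ form) is exactly your triangle-inequality-plus-max step, followed by the $1$-Lipschitz contraction of $s$ and the identification of the row $\ell_1$-sum with the $\ell_\infty$-operator norm of $B$.
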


\begin{proof}
Let \(L\) denote the left-hand side of the inequality.
Applying H\"{o}lder's inequality and using the fact that \(s\) is \(1\)-Lipschitz, we obtain
\begin{align*}
& \begin{aligned}
L 
&\leq 
\max_{j = 1, \ldots, J}
\max_{j' = 1, \ldots, J'}
\left(\sum_{j' = 1}^{J'} |b_{j, j'}|\right) 
\left| s\left(a_{j'}^{\trans} f_{j, j'}(x)\right)
- s\left(a_{j'}^{\trans} f_{j, j'}(x) \right)\right| \\ 
&\leq 
\|B\|_{\infty}
\max_{j = 1, \ldots, J}
\max_{j' = 1, \ldots, J'} 
\left|
a_{j'}^{\trans} f_{j, j'}(x)
-
a_{j'}^{\trans} f_{j, j'}(x)
\right|.
   \end{aligned}
\end{align*}
This establishes the lemma.
\end{proof}
The next lemma deals with the Rademacher complexity.
This is essentially the same as the lemmas of \cite{golowich2018}.

\begin{lemma}
Let $\{b_{j}\}$ be vectors such that \(\|b_{j}\|_{1} \leq \beta\), and let $\{a_j\}$ denote the rows of $A$. Let \(s\) be a \(1\)-Lipschitz activation function applied elementwise to vectors, such that \(s(0) = 0\). Let \(G\) be a convex, increasing, positive function.
Finally, let the \(f_{j, j'}: \reals^{m} \to \reals^{K}\) be functions. 
Then we have
\begin{align*}
& \begin{aligned}
\expect & \left[ \sup_{f \in \functions'} \max_{j = 1, \ldots, J} 
G\left(
\sum_{i = 1}^{n} \sigma_{i} 
\sum_{j' = 1}^{J'} 
b_{j, j'} s\left(a_{j'}^{\trans} f_{j, j'}(x_{i})\right)
\right) \right]
 \\
&\leq 
2 \expect \left[\sup_{f \in \functions'} \max_{j = 1, \ldots, J}  \max_{j' = 1,\ldots, J} 
G\left(\beta
\sum_{i = 1}^{n} \sigma_{i} a_{j'}^{\trans} f_{j, j'}(x_{i}) 
\right)\right].
   \end{aligned}
\end{align*}
\label{lemmaRademacherPeeling}
\end{lemma}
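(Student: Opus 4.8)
The plan is to peel off a single layer by reducing the weighted sum over $j'$ to a maximum, and then to strip the activation $s$ via a contraction argument wrapped inside $G$. I would begin by fixing $f$ and $j$ and abbreviating $T_{j,j'} := \sum_{i=1}^{n} \sigma_i s(a_{j'}^\trans f_{j,j'}(x_i))$, so that the argument of $G$ on the left-hand side is $\sum_{j'} b_{j,j'} T_{j,j'}$. Applying H\"older's inequality together with the hypothesis $\|b_j\|_1 \le \beta$ gives $\sum_{j'} b_{j,j'} T_{j,j'} \le \beta \max_{j'} |T_{j,j'}|$, and since $G$ is increasing this yields $G(\sum_{j'} b_{j,j'} T_{j,j'}) \le \max_{j'} G(\beta |T_{j,j'}|)$. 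Taking $\max_j$, then $\sup_{f}$, then $\expect_\sigma$ reduces the goal to bounding $\expect_\sigma \sup_{f} \max_{j,j'} G(\beta |T_{j,j'}|)$.

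Next I would remove the absolute value at the cost of a factor of two. Because $G$ is positive, $G(\beta|T_{j,j'}|) \le G(\beta T_{j,j'}) + G(-\beta T_{j,j'})$, and distributing the maximum over the sum gives $\max_{j,j'} G(\beta|T_{j,j'}|) \le \max_{j,j'} G(\beta T_{j,j'}) + \max_{j,j'} G(-\beta T_{j,j'})$. After applying $\sup_{f}$ and $\expect_\sigma$, the symmetry of the Rademacher variables (i.e. $\sigma \stackrel{d}{=} -\sigma$) shows the two resulting expectations are equal, producing the bound $2\, \expect_\sigma \sup_{f} \max_{j,j'} G(\beta T_{j,j'})$, where $T_{j,j'}$ still contains the activation $s$.

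It then remains to replace $s(a_{j'}^\trans f_{j,j'}(x_i))$ by $a_{j'}^\trans f_{j,j'}(x_i)$ inside $T_{j,j'}$. Here I would use that $G$ is increasing and continuous (continuity being automatic for a convex $G$) to exchange the supremum with $G$, writing $\sup_{f} \max_{j,j'} G(\beta T_{j,j'}) = G\bigl(\beta \sup_{h} \sum_i \sigma_i s(h(x_i))\bigr)$, where $h$ ranges over the class of scalar functions $x \mapsto a_{j'}^\trans f_{j,j'}(x)$. Since $s$ is $1$-Lipschitz with $s(0)=0$ and $G$ is convex and increasing, the Ledoux--Talagrand contraction principle (in the form used by \cite{golowich2018}) applies and drops $s$, giving $\expect_\sigma G(\beta \sup_h \sum_i \sigma_i s(h(x_i))) \le \expect_\sigma G(\beta \sup_h \sum_i \sigma_i h(x_i))$. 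Exchanging the supremum back out of $G$ recovers exactly $\expect_\sigma \sup_{f} \max_{j,j'} G(\beta \sum_i \sigma_i a_{j'}^\trans f_{j,j'}(x_i))$, and combining with the factor of two established above completes the proof.

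The step I expect to be the main obstacle is the contraction in the third paragraph: one must invoke the correct functional version of the contraction principle valid for a convex increasing $G$ and verify its hypotheses (that $s$ is $1$-Lipschitz and fixes the origin), and one must sequence the absolute-value removal and the contraction so that the overall penalty is exactly the single factor of two rather than compounding across the two operations. The reduction to a maximum in the first paragraph and the symmetrization in the second are routine by comparison.
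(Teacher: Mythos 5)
Your proposal is correct and follows essentially the same route as the paper's own proof: H\"older's inequality plus monotonicity of \(G\) to reduce the weighted sum over \(j'\) to a maximum with the factor \(\beta\), then \(G(|x|) \le G(x) + G(-x)\) together with the symmetry \(\sigma \stackrel{d}{=} -\sigma\) to remove the absolute value at the cost of a factor of \(2\), and finally the Ledoux--Talagrand contraction principle (the paper's Lemma on contraction, stated for convex increasing \(G\) with \(1\)-Lipschitz \(\phi_i\) fixing \(0\)) to strip the activation \(s\). Your third paragraph is in fact slightly more careful than the paper, which applies the contraction lemma without spelling out the exchange of the supremum with \(G\) that you justify via continuity of the convex function \(G\).
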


\begin{proof}
Let \(L\) denote the left-hand side in the statement of the lemma.
Using H\"{o}lder's inequality and the assumption that \(G\) is increasing, we have
\begin{align*}
& \begin{aligned}
L 
& :=
\expect \left[\sup_{f \in \functions'} \max_{j = 1, \ldots, J}
G\left(
\sum_{j' = 1}^{J'} b_{j, j'} 
\left(\sum_{i = 1}^{n} \sigma_{i} s\left(a_{j'}^{\trans} f_{j, j'}(x_{i})\right) \right)
\right)\right] \\
&\leq 
\expect \left[\sup_{f \in \functions'} \max_{j = 1, \ldots, J}
G\left(
\|b_{j}\|_{1} \max_{j' = 1, \ldots, J'}
\left|
\sum_{i = 1}^{n} \sigma_{i} s\left(a_{j'} f_{j, j'}(x_{i})\right)  \right|
\right)\right] \\ 
&\leq 
\expect \left[\sup_{f \in \functions'} \max_{j = 1, \ldots, J}  \max_{j' = 1, \ldots, J'}
G\left(
\beta \left|\sum_{i = 1}^{n} \sigma_{i} s\left(a_{j'}^{\trans} f_{j, j'}(x_{i}) \right)  \right|
\right)\right].
   \end{aligned}
\end{align*}
Now we perform a symmetrization step. Since \(G\) is positive and monotone, 
we have \(G(|x|) \leq G(x) + G(-x)\).
Combining this with the fact that \(\sigma_{i}\) and \(-\sigma_{i}\) have the same distribution, we obtain
\begin{align*}
& \begin{aligned}
L 
&\leq 
\expect\left[ \sup_{f \in \functions'} \max_{j = 1, \ldots, J} \max_{j' = 1, \ldots, J'}
G\left(
-\beta \sum_{i = 1}^{n} \sigma_{i} s\left(a_{j'}^{\trans} f_{j, j'}(x_{i})\right) 
\right)
+
G\left(
\beta \sum_{i = 1}^{n} \sigma_{i} s\left(a_{j'}^{\trans} f_{j, j'}(x_{i})\right) 
\right)
\right] \\
&\leq 
\expect\left[ \sup_{f \in \functions'} \max_{j = 1, \ldots, J} \max_{j' = 1, \ldots, J'}
G\left(
-\beta \sum_{i = 1}^{n} \sigma_{i} s\left(a_{j'}^{\trans} f_{j, j'}(x_{i})\right) 
\right)\right]
\\ & \qquad 
+
\expect\left[ \sup_{f \in \functions'} \max_{j = 1, \ldots, J} \max_{j' = 1, \ldots, J'}
G\left(
\beta \sum_{i = 1}^{n} \sigma_{i} s\left(a_{j'}^{\trans} f_{j, j'}(x_{i})\right) 
\right)\right] \\
&=
2\expect \left[\sup_{f \in \functions'} \max_{j = 1, \ldots, J} \max_{j' = 1, \ldots, J'}
G\left(
\beta \sum_{i = 1}^{n} \sigma_{i} s\left(a_{j'}^{\trans} f_{j, j'}(x_{i})\right) 
\right)\right].
   \end{aligned}
\end{align*}
Finally, we apply Lemma~\ref{lemmaTalagrandsLemma1991} to obtain
\begin{align*}
& \begin{aligned}
L 
&\leq 
2\expect\left[\sup_{f \in \functions'} \max_{j = 1, \ldots, J} \max_{j' = 1, \ldots, J'}
G\left(
\beta \sum_{i = 1}^{n} \sigma_{i} a_{j'}^{\trans} f_{j, j'}(x_{i})
\right)\right].
   \end{aligned}
\end{align*}
This completes the proof.
\end{proof}


\section{Multiclass proofs}
\label{AppMulticlass}

In this Appendix, we provide proofs of the results on multiclass classification stated in Section~\ref{secMulticlass}.

\begin{proof}[Proof of Lemma~\ref{lemmaMonotonic}]
We write $a = (a(1), \dots, a(K))$ and $b = (b(1), \dots, b(K))$, and define the vectors \(a_{0}, \ldots, a_{K}\) by
\begin{align*}
& \begin{aligned}
a_{0}  &= (a(1), \ldots, a(K)), \\
a_{1}  &= (b(1), a(2), \ldots, a(K)), \\
 &\; \; \vdots  \\
a_{K}  &= (b(1), \ldots, b(K)).
   \end{aligned}
\end{align*}
Note that for each $1 \le k \le K$, the vectors $a_k$ and $a_{k-1}$ differ in only one position. Since $a_k(k) = b(k)$ and $a_{k-1}(k) = a(k)$, and $a(k) y^{(k)} \ge b(k) y^{(k)}$, we conclude from the fact that $\ell$ is coordinatewise decreasing that
\begin{align*}
& \begin{aligned}
0 &\leq \ell(a_{k}, y) - \ell(a_{k - 1}, y).
   \end{aligned}
\end{align*}
Summing over \(k\), we then obtain
\begin{align*}
& \begin{aligned}
0 
&\leq 
\sum_{k = 1}^{K} \left(\ell(a_{k}, y) - \ell(a_{k - 1}, y)\right)
=
\ell(b, y) - \ell(a, y).
   \end{aligned}
\end{align*}
Rearranging gives the desired result.
\end{proof}
\begin{proof}[Proof of Proposition~\ref{propPsiMonotone}]
We examine each coordinate individually.
Suppose \(y^{(k)} = +1\). By the definition of the sup transform, we need to show that
\begin{align*}
& \begin{aligned}
f_{k}(x + w) 
&\geq (-1) \sup_{w' \in B(\epsilon)} (-1) f_{k}(x + w').
   \end{aligned}
\end{align*}
Equivalently,
\[
(-1)f_{k}(x + w)  \leq \sup_{w' \in B(\epsilon)} (-1) f_{k}(x + w'),
\]
which is obviously true.

Similarly, for \(y^{(k)} = -1\), we need to show that
\begin{align*}
& \begin{aligned}
f_{k}(x + w) 
&\leq \sup_{w' \in B(\epsilon)} f_{k}(x + w'),
   \end{aligned}
\end{align*}
which is clearly also true. Thus, we conclude that \(f(x + w) \preceq_{y} \Psi f(x, y)\),
and the rest of the proposition follows from Lemma~\ref{lemmaMonotonic}.
\end{proof}


\begin{proof}[Proof of Lemma~\ref{lemmaMulticlassLinearERC}]
The calculation is essentially the same as in the binary classification case.
We have
\begin{align*}
& \begin{aligned}
n \emprad_{n}(\Pi_{1}(\Psi \functions_{\linear})) 
&=
\expect _{\sigma} \left[\sup_{f \in \functions_{\linear}, k \in \{1, \ldots, K\}}
\sum_{i = 1}^{n}
\sigma_{i}\left(\theta_{k}^{\trans} x_i + b_{k} - y_{i}^{(k)} \epsilon \|\theta_{k}\|_{q}\right)\right] \\
&\leq 
\expect _{\sigma} \left[\sup_{f \in \functions_{\linear}, k \in \{1, \ldots, K\}} 
\sum_{i = 1}^{n} \sigma_{i} \left(\theta_{k}^{\trans} x + b_{k}\right)\right]
+
\epsilon
\expect _{\sigma} \left[\sup _{f \in \functions_{\linear}, k \in \{1, \ldots, K\}}
\|\theta_{k}\|_{q} \sum_{i = 1}^{n} \sigma_{i}\right] \\ 
&=
n \emprad_{n}(\Pi_{1}(\functions_{\linear}))
+
\epsilon \expect _{\sigma} \left[\sup _{f \in \functions_{\linear}, k \in \{1, \ldots, K\}}
\|\theta_{k}\|_{q} \sum_{i = 1}^{n} \sigma_{i}\right].
   \end{aligned}
\end{align*}
From Lemma~\ref{lemmaMulticlassLinearRademacher}, we can bound the first term; so it only remains to bound the second.
Note that when \(\sum_{i = 1}^{n} \sigma_{i} \geq 0\), the integrand is maximized when \(\|\theta_{k}\|_{q} = M_{q}\).
On the other hand, if \(\sum_{i = 1}^{n} \sigma_{i} \leq 0\), the integrand is maximized for \(\|\theta_{k}\|_{q} = 0\).
Thus, as in the proof of Lemma~\ref{lemmaLinearERC}, we have 
\begin{align*}
& \begin{aligned}
\epsilon \expect _{\sigma} 
\left[\sup _{f \in \functions_{\linear}}
\|\theta_{k}\|_{q} \sum_{i = 1}^{n} \sigma_{i}\right]
&=
\epsilon M_{q} \expect _{\sigma}
\left[\sup_{f \in \functions_{\linear}}
 \sum_{i = 1}^{n} \sigma_{i} 
\ind \left\{\sum_{i = 1}^{n} \sigma_{i} > 0\right\}\right] \\
&=
\frac{\epsilon M_{q}}{2} 
\expect _{\sigma} 
\left[\left| \sum_{i = 1}^{n} \sigma_{i}\right|\right] \\ 
&=
\frac{\epsilon M_{q}}{2} 
\sqrt{\expect _{\sigma} 
\left[\left( \sum_{i = 1}^{n} \sigma_{i}\right)^{2}\right]
} \\ 
&=
\frac{\epsilon M_{q} \sqrt{n}}{2}.
   \end{aligned}
\end{align*}
Putting everything together completes the proof.
\end{proof}

\section{Regression proofs}
\label{AppRegression}

In this Appendix, we provide proofs of the results appearing in Section~\ref{secRegression}.

\subsection{Transform proofs}

\begin{proof}[Proof of Proposition~\ref{propRegressionLossMinMax}]
From equation~\eqref{eqnLRTRUNC}, we can write
\begin{align*}
& \begin{aligned}
\sup_{w \in B(\epsilon)} \lrtrunc(f, z + w) 
&=
\sup_{w \in B(\epsilon)} \lrtruncpn(f, f, z + w) \\
&=
\max\left\{
\sup_{w \in B(\epsilon)} 
\min\left\{(f(x + w) - y)_{+}^{r}, B^{r}\right\}, 
\right. \\ & \left. \hspace{43pt}
\sup_{w \in B(\epsilon)}
\min\left\{(f(x + w) - y)_{-}^{r}, B^{r}\right\}
\right\} \\
&=
\max\left\{
\min\left\{\sup_{w \in B(\epsilon)} (f(x + w) - y)_{+}^{r}, B^{r}\right\},
\right. \\ & \left. \hspace{43pt}
\min\left\{\sup_{w \in B(\epsilon)} (f(x + w) - y)_{-}^{r}, B^{r}\right\}
\right\} \\
&=
\max\left\{
\min\left\{(\Psi_+ f(x + w) - y)_{+}^{r}, B^{r}\right\},
\min\left\{(\Psi_- f(x + w) - y)_{-}^{r}, B^{r}\right\}
\right\} \\
& = \ell_{r,B}^{\pm}(\Psi_+ f, \Psi_- f, z),
   \end{aligned}
\end{align*}
which completes the proof.
\end{proof}

\begin{proof}[Proof of Lemma~\ref{lemmaRademacherRegressionLoss}]
To obtain the first inequality, we apply Lemma~\ref{lemmaRadMax} once:
\begin{align*}
& \begin{aligned}
\emprad_{n}(\lrtrunc \circ 
&(\functions, \mathcal{G})) \\
&=
\frac{1}{n}\expect _{\sigma} \left[\sup_{f \in \functions, g \in \mathcal{G}} 
\sum_{i = 1}^{n}
\sigma_{i}
\max\left\{ 
\min\left\{( f(x) - y)_{+}^{r}, B\right\}, 
\min\left\{( g(x) - y)_{-}^{r}, B\right\}
\right\}\right] \\
&\leq
\emprad_{n}(\lrplus \circ \functions)
+
\emprad_{n}(\lrminus \circ \mathcal{G}).
   \end{aligned}
\end{align*}
To obtain the second inequality in the statement of the lemma, we note that \(\lrplus\) and \(\lrminus\) are \(rB^{r - 1}\)-Lipschitz as a function of $(f(x) - y)$.
Thus, by Lemma~\ref{lemmaTalagrandsLemma1991}, we have
\begin{align*}
& \begin{aligned}
n\emprad_{n}(\lrplus \circ \functions) 
&\leq
rB^{r - 1} 
\expect _{\sigma} \left[\sup _{f \in \functions} 
\sum_{i = 1}^{n} \sigma_{i} (f(x_{i}) - y_{i}) \right] \\
&=
rB^{r - 1}
\expect _{\sigma} \left[\sup _{f \in \functions} 
\sum_{i = 1}^{n} \sigma_{i} f(x_{i}) \right] \\
&=
rB^{r - 1}
\emprad_{n}(\functions).
   \end{aligned}
\end{align*}
The bound for \(\emprad_{n}(\lrminus \circ \mathcal{G})\) holds analogously, completing the proof.
\end{proof}

\begin{proof}[Proof of Proposition~\ref{propTreePosNegLoss}]
It suffices to show that 
\(\ptrans f(x) \leq T_{+}f(x)\) and \(\ntrans f(x) \geq T_{-}f(x)\).
This can be shown in a straightforward manner using the supremum and tree transforms.
In particular, we have
\[
\ptrans f(x) = \Psi f(x, -1) \leq Tf(x, -1) = T_{+}f(x)
\]
and
\[
\ntrans f(x) = \Psi f(x, +1) \geq Tf(x, +1) = T_{-}f(x).
\]
This completes the proof.
\end{proof}

\subsection{Rademacher complexity proofs}

\begin{proof}[Proof of Corollary~\ref{corRademacherLinearPostNeg}]
We start with the Rademacher complexity of \(\ptrans\functions_{\linear}\).
We have
\[
\emprad_{n}(\ptrans \functions_{\linear})
=
\frac{1}{n} \expect _{\sigma}\left[ \sup_{f \in \functions}
\sum_{i = 1}^{n} \sigma_{i} \ptrans f(x_{i}) \right]
=
\frac{1}{n} \expect _{\sigma} \left[\sup_{f \in \functions}
\sum_{i = 1}^{n} \sigma_{i} \Psi f(x_{i}, -1) \right]
=
\emprad_{n}(\Psi \functions_{\linear}),
\]
where the final empirical Rademacher complexity is computed with respect to the observations $\{(x_i, -1)\}$. By Lemma~\ref{lemmaLinearERC}, this value is upper-bounded by $\frac{M_2 R}{\sqrt{n}} + \frac{\epsilon M_q}{2\sqrt{n}}$.

Similarly, we have
\[
\emprad_{n}(\ntrans \functions_{\linear})
=
\frac{1}{n} \expect _{\sigma} \left[\sup_{f \in \functions}
\sum_{i = 1}^{n} \sigma_{i} \ntrans f(x_{i}) \right]
=
\frac{1}{n} \expect _{\sigma} \left[\sup_{f \in \functions}
\sum_{i = 1}^{n} \sigma_{i} \Psi f(x_{i}, +1) \right]
=
\emprad_{n}(\Psi \functions_{\linear}),
\]
where the final empirical Rademacher complexity is computed with respect to the observations $\{(x_i, +1)\}$. This completes the proof.
\end{proof}
\begin{proof}[Proof of Corollary~\ref{corNNTreePosNegRadBound}]
Again, our goal is to instantiate the transforms that we have from classification to obtain an equivalence between the empirical Rademacher complexities of \(T_{+}\functions_{\nets}\) and \(T_{-}\functions_{\nets}\) and a suitable Rademacher complexity of \(T\functions_{\nets}\).
Here, we see that
\[
\emprad_{n}(T_{+}\functions_{\nets})
=
\frac{1}{n} \expect _{\sigma} \left[\sup_{f \in \functions}
\sum_{i = 1}^{n} \sigma_{i} T_{+}f(x_{i})\right]
=
\frac{1}{n} \expect _{\sigma} \left[\sup_{f \in \functions}
\sum_{i = 1}^{n} \sigma_{i} Tf(x_{i}, -1)\right]
=
\emprad_{n}(T \functions_{\nets}) 
\]
and 
\[
\emprad_{n}(T_{-}\functions_{\nets})
=
\frac{1}{n} \expect _{\sigma}\left[ \sup_{f \in \functions}
\sum_{i = 1}^{n} \sigma_{i} T_{-}f(x_{i})\right]
=
\frac{1}{n} \expect _{\sigma} \left[\sup_{f \in \functions}
\sum_{i = 1}^{n} \sigma_{i} Tf(x_{i}, +1)\right]
=
\emprad_{n}(T \functions_{\nets}) ,
\]
where in the empirical Rademacher complexity of \(T \functions_{\nets}\), the \(y_{i}\)'s are taken to be \(-1\) in the first equation and \(+1\) in the second.
The bound on the empirical Rademacher complexity of \(T \functions_{\nets}\) from Lemma~\ref{lemmaNNRad2} then completes the proof.

\end{proof}
\subsection{Risk bound proofs}

\begin{proof}[Proof of Corollary~\ref{corLinRegRiskBound}]
Our goal is to use the standard generalization bound of Lemma~\ref{thmMainGeneralization}.
To do this, we need to rescale the loss to take values in \([0, 1]\). Since the maximum loss is \(B^{r}\), we have
\begin{align*}
& \begin{aligned}
\frac{1}{B^{r}} \expect \lrtruncpn(\ptrans f, \ntrans f, z) 
&\leq 
\frac{1}{B^{r}} \cdot \frac{1}{n} 
\sum_{i = 1}^{n} \lrtruncpn(\ptrans f, \ntrans f, z_{i})
\\ & \qquad 
+
\frac{2}{B^{r}} 
\emprad_{n}(\lrtruncpn \circ 
(\ptrans \functions_{\linear}, \ntrans \functions_{\linear}))
+
3 \sqrt{\frac{\log \frac{2}{\delta}}{2n}},
   \end{aligned}
\end{align*}
with probabiilty at least $1-\delta$, by Lemma~\ref{thmMainGeneralization}, so
\begin{align*}
& \begin{aligned}
\expect \lrtruncpn(\ptrans f, \ntrans f, z) 
&\leq 
\frac{1}{n} 
\sum_{i = 1}^{n} \lrtruncpn(f, z_{i})
+
2
\emprad_{n}(\lrtruncpn \circ 
(\ptrans \functions_{\linear}, \ntrans  \functions_{\linear}))
+
3 B^{r} \sqrt{\frac{\log \frac{2}{\delta}}{2n}}.
   \end{aligned}
\end{align*}
Applying Lemma~\ref{lemmaRademacherRegressionLoss} and Corollary~\ref{corRademacherLinearPostNeg} then gives
\begin{align*}
& \begin{aligned}
\expect \lrtruncpn(\ptrans f, \ntrans f, z) 
&\leq 
\frac{1}{n} 
\sum_{i = 1}^{n} \lrtruncpn(f, z_{i})
+
2r B^{r - 1}
\emprad_{n}(\ptrans \functions_{\linear})
+
2r B^{r - 1}
\emprad_{n}(\ntrans \functions_{\linear})
+
3 B^{r} \sqrt{\frac{\log \frac{2}{\delta}}{2n}} \\
&\leq 
\frac{1}{n} 
\sum_{i = 1}^{n} \lrtruncpn(f, z_{i})
+
2r B^{r - 1}
\left(
\frac{M_{2}R}{\sqrt{n}} + \frac{\epsilon M_{q}}{2\sqrt{n}}
\right)
+
2r B^{r - 1}
\left(
\frac{M_{2}R}{\sqrt{n}} + \frac{\epsilon M_{q}}{2\sqrt{n}}
\right)
\\ &\qquad 
+
3 B^{r} \sqrt{\frac{\log \frac{2}{\delta}}{2n}},
   \end{aligned}
\end{align*}
and this completes the proof.
\end{proof}

\begin{proof}[Proof of Corollary~\ref{corNNRegressionRiskBound}]
This proof parallels the proof of Corollary~\ref{corLinRegRiskBound}.
Applying Lemma~\ref{thmMainGeneralization} to the rescaled loss function, we obtain
\begin{align*}
& \begin{aligned}
\frac{1}{B^{r}} \expect \lrtruncpn(T_{+}f, T_{-}f, z) 
&\leq 
\frac{1}{B^{r}} \cdot \frac{1}{n} 
\sum_{i = 1}^{n} \lrtruncpn(T_{+}f, T_{-}f, z_{i})
+
\frac{2}{B^{r}} 
\emprad_{n}(\lrtruncpn \circ 
(T_{+}\functions_{\nets}, T_{-} \functions_{\nets}))
\\&\qquad 
+
3 \sqrt{\frac{\log \frac{2}{\delta}}{2n}},
   \end{aligned}
\end{align*}
with probability at least $1-\delta$. Thus,
\begin{align*}
& \begin{aligned}
\expect \lrtruncpn(T_{+}f, T_{-}f, z) 
&\leq 
\frac{1}{n} 
\sum_{i = 1}^{n} \lrtruncpn(T_{+}f, T_{-}f, z_{i})
+
2
\emprad_{n}(\lrtruncpn \circ 
(T_{+}\functions_{\nets}, T_{-} \functions_{\nets}))
+
3 B^{r} \sqrt{\frac{\log \frac{2}{\delta}}{2n}}.
   \end{aligned}
\end{align*}
Applying Lemma~\ref{lemmaRademacherRegressionLoss} and Corollary~\ref{corNNTreePosNegRadBound} then gives
\begin{align*}
& \begin{aligned}
\expect \lrtruncpn(T_{+}f, T_{-}f, z) 
&\leq 
\frac{1}{n} 
\sum_{i = 1}^{n} \lrtruncpn(T_{+}f, T_{-}f, z_{i})
+
2 rB^{r - 1}
\emprad_{n}(T_{+}\functions_{\nets})
+
2 rB^{r - 1}
\emprad_{n}(T_{-}\functions_{\nets})
\\&\qquad 
+
3 B^{r} \sqrt{\frac{\log \frac{2}{\delta}}{2n}} \\ 
&\leq 
\frac{1}{n} 
\sum_{i = 1}^{n} \lrtruncpn(T_{+}f, T_{-}f, z_{i})
+
2 rB^{r - 1}
\left(
\alpha\left( \frac{\alpha_{1, F}}{\alpha_{1}}R + \frac{\alpha_{1, q}}{\alpha_{1}} \epsilon\right) 
\cdot \frac{\sqrt{2d \log 2} + 1}{\sqrt{n}}
\right)
\\&\qquad 
+
2 rB^{r - 1}
\left(
\alpha\left( \frac{\alpha_{1, F}}{\alpha_{1}}R + \frac{\alpha_{1, q}}{\alpha_{1}} \epsilon\right) 
\cdot \frac{\sqrt{2d \log 2} + 1}{\sqrt{n}}
\right)
+
3 B^{r} \sqrt{\frac{\log \frac{2}{\delta}}{2n}},
   \end{aligned}
\end{align*}
completing the proof.
\end{proof}

\section{Auxiliary lemmas}
\label{appAuxLemmas}
In this section, we collect auxiliary results.

\subsection{Binary classification}

We start with a standard generalization bound \citep{mohri2012}.

\begin{lemma}
Let \(\functions\) be a class of functions. Let \(\ell\) be a loss function that takes values in \([0, 1]\) and is \(1\)-Lipschitz in \(f(x)\). With probability at least $1-\delta$, we have
\begin{align*}
&\begin{aligned}
\expect_{\dist}\left[\ell(f, z)\right]
&\leq 
\frac{1}{n} \sum_{i = 1}^{n}  \ell(f, Z_{i})
+
2 \emprad_{n}(\functions)
+
3\sqrt{\frac{\log \frac{2}{\delta}}{2n}}.
\end{aligned}
\end{align*}
\label{thmMainGeneralization}
\end{lemma}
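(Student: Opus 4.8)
The plan is to follow the classical Rademacher-complexity symmetrization argument, as in \citet{mohri2012}, since this is a standard uniform-convergence bound. Define the single-sample functional
\[
\Phi(S) := \sup_{f \in \functions}\left(\expect_{\dist}[\ell(f, z)] - \frac{1}{n}\sum_{i=1}^{n}\ell(f, z_i)\right),
\]
so that the claimed inequality is exactly a high-probability upper bound on $\Phi(S)$. Because $\ell$ takes values in $[0,1]$, replacing any single observation $z_i$ by another point changes $\Phi(S)$ by at most $1/n$. Hence McDiarmid's bounded differences inequality yields, with probability at least $1 - \delta/2$,
\[
\Phi(S) \le \expect_{S}[\Phi(S)] + \sqrt{\frac{\log(2/\delta)}{2n}}.
\]

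First I would bound $\expect_S[\Phi(S)]$ by the expected Rademacher complexity of the loss-composed class. Introducing an independent ghost sample $S'$ and Rademacher signs $\sigma_i$, the standard symmetrization inequality gives $\expect_S[\Phi(S)] \le 2\,\rademacher_n(\ell \circ \functions)$, where $\ell \circ \functions$ denotes the class $\{(x,y) \mapsto \ell(f, z) : f \in \functions\}$ and $\rademacher_n(\ell \circ \functions) = \expect_S[\emprad_n(\ell \circ \functions)]$.

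Next I would pass from the expected to the empirical Rademacher complexity and strip off the loss. Viewing $\emprad_n(\ell \circ \functions)$ as a function of the sample, changing one $z_i$ again perturbs it by at most $1/n$, so a second application of McDiarmid gives, with probability at least $1 - \delta/2$,
\[
\rademacher_n(\ell \circ \functions) \le \emprad_n(\ell \circ \functions) + \sqrt{\frac{\log(2/\delta)}{2n}} \le \emprad_n(\functions) + \sqrt{\frac{\log(2/\delta)}{2n}},
\]
where the final inequality is the Ledoux--Talagrand contraction lemma, valid because $\ell$ is $1$-Lipschitz in the scalar argument $f(x)$ (recall $\ell(f,z) = \ellbar(f(x), y)$). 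A union bound over the two failure events of probability $\delta/2$ then lets me assemble the three displays: the coefficient $2$ on the Rademacher complexity multiplies the second deviation term, so combining $2\sqrt{\log(2/\delta)/(2n)}$ with the first deviation $\sqrt{\log(2/\delta)/(2n)}$ produces the additive constant $3\sqrt{\log(2/\delta)/(2n)}$ and the factor $2$ in front of $\emprad_n(\functions)$, which is precisely the claimed bound.

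There is no genuine obstacle here, as the result is textbook and is only invoked as an off-the-shelf tool in the corollaries; the sole points requiring care are the bookkeeping of constants---splitting $\delta$ evenly across the two McDiarmid applications to obtain the $\log(2/\delta)$ factor and the coefficient $3$---and verifying the hypothesis of the contraction step, namely that $\ell$ is genuinely $1$-Lipschitz as a function of $f(x)$, which holds for $\ellbar_h$ and $\ellbar_{\text{xe}}$ as noted in Section~\ref{subsecRiskLoss}.
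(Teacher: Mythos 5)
Your proposal is correct and is essentially "the paper's proof": the paper does not prove this lemma itself but invokes it as a standard bound from \cite{mohri2012}, and the argument there is exactly your symmetrization plus two applications of McDiarmid plus contraction, with the same constant bookkeeping (the factor $2$ on $\emprad_n(\functions)$ and the $3\sqrt{\log(2/\delta)/(2n)}$ deviation term arising from splitting $\delta$ across the two bounded-differences steps). One pedantic note: the contraction lemma as stated in the paper (Lemma~\ref{lemmaTalagrandsLemma1991}) assumes $\phi_i(0)=0$, which fails for, e.g., the hinge loss; this is harmless because replacing $\phi_i$ by $\phi_i - \phi_i(0)$ changes the Rademacher average by a term of mean zero, so your contraction step goes through.
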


Next, we derive a result concerning the Lipschitz continuity of the cross-entropy loss composed with a softmax activation function.

\begin{lemma}
Define the function \(g_y(a) = \ellbar_{\text{xe}}(a, y)\).
The derivative is given by
\[
g_y'(a)
=
\begin{cases}
\frac{-1}{\exp(a) + 1}, & y = +1 \\
\frac{\exp(a)}{\exp(a) + 1}, & y = -1.
\end{cases} 
\] 
In particular, the function \(g_y'(a)\) is monotonic and bounded in magnitude by \(1\), and
\begin{equation}
\label{EqnG}
g_y(a) - g_y(b) \le |g_y'(b)| \cdot |b-a|,
\end{equation}
for all $a,b \in \real$.
\label{lemmaCrossEntropyLipschitz}
\end{lemma}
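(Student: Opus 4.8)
The plan is to put $g_y$ into closed form, differentiate, and then extract the inequality from convexity with the derivative pinned at the correct endpoint. First I would simplify the softmax terms in $\ellbar_{\text{xe}}$: a one-line computation gives $\frac{1+\delta(a)}{2} = \frac{e^{a}}{e^{a}+1}$ and $\frac{1-\delta(a)}{2} = \frac{1}{e^{a}+1}$. Substituting these into the definition of the cross-entropy loss collapses the two-term expression to a single logarithm in each case, so that $g_{+1}(a) = \ln(1+e^{-a})$ and $g_{-1}(a) = \ln(1+e^{a})$, where I use the natural logarithm so that the base contributes only a harmless constant factor. Differentiating these closed forms gives exactly the stated $g_{+1}'(a) = -1/(e^{a}+1)$ and $g_{-1}'(a) = e^{a}/(e^{a}+1)$.

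The qualitative claims then follow by inspection. Writing $g_{-1}'(a) = 1 - 1/(e^{a}+1)$, each derivative is a constant minus $1/(e^{a}+1)$; since $a \mapsto 1/(e^{a}+1)$ is strictly decreasing with range $(0,1)$, both $g_{y}'$ are strictly increasing and satisfy $|g_{y}'(a)| < 1$. The monotonicity of $g_{y}'$ is equivalent to convexity of $g_{y}$, which is the structural fact I would exploit for the inequality, and the bound $|g_{y}'|\le 1$ simply records that each $g_{y}$ is $1$-Lipschitz.

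For the inequality I would work in the orientation in which it is actually invoked, namely the form $g_{i}(b) - g_{i}(a) \le |g_{i}'(b)|\,|b-a|$ appearing in the proof of Lemma~\ref{lemmaNNPER2} (that is,~\eqref{EqnG} with its two arguments named as there), where $b$ is the larger-loss endpoint. The mechanism is the secant characterization of convexity: for a convex function the secant slope between $a$ and $b$ is squeezed between the two endpoint derivatives, so the derivative at the endpoint farther along the increasing direction of $g_{y}'$ dominates the secant in magnitude. For $y=-1$, where $g_{y}'>0$ and the larger-loss endpoint satisfies $b \ge a$, this reads $\frac{g_{y}(b)-g_{y}(a)}{b-a} \le g_{y}'(b)$; for $y=+1$, where $g_{y}'<0$ and $b \le a$, it reads $g_{y}'(b) \le \frac{g_{y}(a)-g_{y}(b)}{a-b}$. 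In either case, multiplying through by $|b-a|$ and using the sign of $g_{y}'(b)$ gives $g_{y}(b) - g_{y}(a) \le |g_{y}'(b)|\,|b-a|$, with the derivative anchored at $b$ exactly as required.

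I expect the endpoint bookkeeping to be the main obstacle, and it is precisely where the earlier attempt diverged. Convexity supplies two tangent/secant bounds that differ in which endpoint carries the derivative: pinning the tangent at $a$ produces the coefficient $|g_{y}'(a)|$, whereas the statement demands $|g_{y}'(b)|$. Because $|g_{y}'|$ is monotone, these two coefficients are genuinely different and cannot simply be interchanged; obtaining $|g_{y}'(b)|$ forces the secant bound to be applied at $b$, which is legitimate precisely because $b$ is the larger-loss endpoint — the adversarially perturbed argument $Tf(x_i,y_i)$ in the application. The remaining work is purely orientational, tracking the signs of $a-b$ and of $g_{y}'$ through the rearrangement rather than any analytic estimate, so once the correct endpoint is fixed the inequality follows immediately.
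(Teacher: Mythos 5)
Your proposal is correct, and on the one step that matters it is more careful than the paper's own proof. The closed-form computation is identical to the paper's: both of you substitute $\delta(a)$ to get $g_{+1}(a)=\log(1+e^{-a})$ and $g_{-1}(a)=\log(1+e^{a})$ (modulo the paper's silent switch from $\log_2$ to natural log, which you correctly flag as a constant factor consistent with the stated derivatives) and then differentiate. Where you diverge is the inequality: the paper asserts that monotonicity of $g_y$ ``yields'' equation~\eqref{EqnG}, but monotonicity alone cannot produce a bound whose coefficient is $|g_y'(b)|$; the actual mechanism is the convexity (monotone-derivative) argument you supply. You are also right to insist on the orientation. As literally displayed, \eqref{EqnG} has its arguments transposed and is false for general $a,b$: taking $y=-1$, $b=0$, $a=10$ gives $g_{-1}(a)-g_{-1}(b)=\log\frac{1+e^{10}}{2}\approx 9.3$, while $|g_{-1}'(0)|\,|b-a|=5$. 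The version invoked in Lemma~\ref{lemmaNNPER2}, $g_i(b)-g_i(a)\le |g_i'(b)|\,|b-a|$, is the correct one, and your secant argument proves exactly it; indeed it is the first-order convexity inequality $g_y(a)\ge g_y(b)+g_y'(b)(a-b)$ rearranged. One small simplification is available: you need not restrict to $b$ being the larger-loss endpoint, since when $g_y(b)\le g_y(a)$ the left-hand side is nonpositive and the inequality is trivial. Thus the corrected orientation holds for all $a,b\in\real$, which restores the lemma's ``for all'' claim (in the application $b=Tf(x_i,y_i)$ is the larger-loss point anyway, since $\ell(f,z)\le \ell(Tf,z)$). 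In short: same computation as the paper, but your proof fills in, and in effect repairs, the step the paper elides.
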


\begin{proof}
Substituting the expression for $\delta(a)$ into the loss $\ellbar_{\text{xe}}$, we have
\[
g_y(a) 
=
\begin{cases}
- \log\left(\frac{\exp(a)}{\exp(a) + 1}\right), & y = +1 \\
- \log\left(\frac{1}{\exp(a) + 1}\right), & y = -1.
\end{cases} 
\] 
Thus, $g_y$ is monotonically decreasing when $y = +1$, and monotonically increasing when $y  = -1$, yielding equation~\eqref{EqnG}. Differentiating yields the desired expression for $g_y'$, and it is easy to see that the function is always monotonic and bounded by 1, as claimed.
\end{proof}


We also derive a bound on the empirical Rademacher complexity of a linear classifier.

\begin{lemma}
Suppose \(\|x_{i}\|_{2} \leq R\) for all \(i\).
Let \(\functions_{\linear}\) be a class of linear functions of the form \(f(x) = \theta^{\trans} x + b\).
If \(\|\theta\|_{2} \leq M_{2}\) for all \(f\) in \(\functions_{\linear}\), 
then the empirical Rademacher complexity satisfies
\[
\emprad_{n}(\functions_{\linear})
\leq 
\frac{M_{2} R}{\sqrt{n}}.
\]
\label{lemmaLinearRademacher}
\end{lemma}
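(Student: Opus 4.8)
The plan is to carry out the standard dual-norm computation for linear function classes. First I would unfold the definition~\eqref{eqnEmpRad} and substitute $f(x_i) = \theta^{\trans} x_i + b$, so that the quantity to control is
\[
n\,\emprad_{n}(\functions_{\linear})
=
\expect_{\sigma}\left[\sup_{f \in \functions_{\linear}}
\left(\theta^{\trans}\sum_{i=1}^{n}\sigma_i x_i + b\sum_{i=1}^{n}\sigma_i\right)\right].
\]
The linear functional in $\theta$ is what drives the bound: the supremum of $\theta^{\trans} v$ over the constraint set $\{\|\theta\|_2 \le M_2\}$ equals $M_2\|v\|_2$ with $v = \sum_i \sigma_i x_i$, by Cauchy--Schwarz (equivalently, the variational characterization of the $\ell_2$-norm, which is self-dual). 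The bias term contributes nothing to the bound, since the relevant supremum is governed entirely by the $\theta$ contribution.

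Next I would apply Jensen's inequality to move the expectation inside the concave square root, obtaining
\[
\expect_{\sigma}\left\|\sum_{i=1}^{n}\sigma_i x_i\right\|_2
\le
\left(\expect_{\sigma}\left\|\sum_{i=1}^{n}\sigma_i x_i\right\|_2^2\right)^{1/2}.
\]
Expanding the squared norm as $\sum_{i,j}\sigma_i\sigma_j x_i^{\trans}x_j$ and using that the $\sigma_i$ are independent, zero-mean, and unit-variance, so that $\expect_{\sigma}[\sigma_i\sigma_j] = \ind\{i=j\}$, collapses the double sum to $\sum_{i=1}^{n}\|x_i\|_2^2$. The hypothesis $\|x_i\|_2 \le R$ then bounds this by $nR^2$.

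Combining the pieces gives $n\,\emprad_{n}(\functions_{\linear}) \le M_2\sqrt{nR^2} = M_2 R\sqrt{n}$, and dividing through by $n$ yields the claimed bound $M_2 R/\sqrt{n}$. This computation is precisely the one invoked implicitly in the proof of Lemma~\ref{lemmaLinearERC}, where the $\theta^{\trans}x_i + b$ term is peeled off and bounded by $M_2 R/\sqrt{n}$.

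The main obstacle here is minimal, as the argument is routine. The only subtlety worth flagging is the bias term $b$: under the convention that the supremum is driven by the $\theta$-part (or, equivalently, that $\functions_{\linear}$ carries no unconstrained bias in the Rademacher computation), it drops out cleanly. The remaining steps are the exact evaluation of the dual norm via Cauchy--Schwarz, the linearization via Jensen, and the orthogonality of the Rademacher signs, none of which present any difficulty.
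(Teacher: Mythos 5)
Your proof is correct and takes essentially the same route as the paper's own proof: the dual-norm/Cauchy--Schwarz step to extract $\|\theta\|_{2} \leq M_{2}$, Jensen's inequality to pass to the squared norm, and orthogonality of the Rademacher signs to collapse the double sum to $\sum_{i=1}^{n}\|x_{i}\|_{2}^{2} \leq nR^{2}$. The paper's proof also silently restricts the supremum to the $\theta$-part (dropping the bias $b$ from the first line), so your treatment of the bias term matches the paper's convention exactly.
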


\begin{proof}
Using the Cauchy-Schwarz inequality and Jensen's inequality, we obtain
\begin{align*}
& \begin{aligned}
\emprad(\functions)
&=
\frac{1}{n}
\expect_{\sigma}\left[
\sup_{\theta \in \functions} \theta^{\trans} \left(\sum_{i = 1}^{n} \sigma_{i}  x_{i}\right)
\right] \leq 
\frac{1}{n} \expect_{\sigma}\left[
\sup_{\theta \in \functions} \|\theta\|_{2} \left\|\sum_{i = 1}^{n} \sigma_{i}  x_{i}\right\|_{2} 
\right] \\ 
&\leq 
\frac{M_{2}}{n} \expect_{\sigma} 
\left\|\sum_{i = 1}^{n} \sigma_{i}  x_{i}\right\|_{2} \leq 
\frac{M_{2}}{n} \left( \expect_{\sigma}
\left[
\left\|\sum_{i = 1}^{n} \sigma_{i}  x_{i}\right\|_{2}^{2}
\right]\right)^{\frac{1}{2}} .
\end{aligned}
\end{align*}
Further note that 
\begin{align*}
& \begin{aligned}
\expect_{\sigma}
\left\|\sum_{i = 1}^{n} \sigma_{i}  x_{i}\right\|_{2}^{2}
&=
\expect_{\sigma}
\left[
\sum_{i,j = 1}^{n} \sigma_{i} \sigma_{j}  x_{i}^{\trans}  x_{j}
\right]  
&=
\expect_{\sigma}\left[
\sum_{i = 1}^{n} \|x_{i}\|_{2}^{2}
\right] 
&\leq 
nR^{2}.
\end{aligned}
\end{align*}
Putting everything together gives
\[
\emprad(\functions)
\leq 
\frac{M_{2}}{n} \sqrt{nR^{2}} 
=
\frac{M_{2} R}{\sqrt{n}},
\]
as desired.
\end{proof}

We also provide a bound on the cumulant generating function of a centered random variable and the resulting bounded differences inequality, which is given as Theorem~6.2 of \cite{boucheron2013}.

\begin{lemma}
Let \(f: \xspace^{n} \to \reals\) be a function satisfying the bounded differences assumption
\[
f(x_{1}, \ldots, x_{i}, \ldots, x_{n})
-
f(x_{1}, \ldots, x_{i}', \ldots, x_{n})
\leq 
c_{i}
\]
for all \(x_{i}\) and \(x_{i}'\) in \(\xspace\).
Define the variance factor 
\[
v
:=
\frac{1}{4} \sum_{i = 1}^{n} c_{i}^{2}.
\]
Let \(Z = f(x_{1}, \ldots, x_{n})\), where the \(x_{i}\)'s are independent random variables.
Then
\[
\log \expect \exp(\lambda(Z - \expect[Z]))
\leq 
\frac{\lambda^{2} v}{2}
\]
and
\[
\prob\left\{Z - \expect Z > t\right\}
\leq 
e^{-\frac{t^{2}}{2v}}.
\]
\label{lemmaBddDiff}
\end{lemma}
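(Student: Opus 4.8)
The plan is to prove the cumulant bound by the classical Doob martingale (method of bounded differences) argument, and then obtain the tail bound as an immediate Chernoff consequence. Write $Z = f(x_1, \ldots, x_n)$ and introduce the Doob martingale associated with the filtration generated by the coordinates: let $\expect_i[\cdot] = \expect[\cdot \mid x_1, \ldots, x_i]$ and define the martingale differences
\[
V_i = \expect_i[Z] - \expect_{i-1}[Z], \qquad i = 1, \ldots, n,
\]
so that $Z - \expect[Z] = \sum_{i = 1}^n V_i$ telescopes. The independence of the $x_i$'s is what makes this decomposition tractable, since it allows us to integrate out the coordinates $x_{i+1}, \ldots, x_n$ separately.

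The key step is to control the conditional range of each $V_i$. Fixing $x_1, \ldots, x_{i-1}$, I would view $\expect_i[Z]$ as a function of $x_i$ obtained by averaging over the remaining independent coordinates, and argue that its supremum and infimum over $x_i$ differ by at most $c_i$. This follows from the bounded differences assumption: replacing $x_i$ by $x_i'$ changes $f$ by at most $c_i$ pointwise, and this bound is preserved after taking the expectation over $x_{i+1}, \ldots, x_n$. Consequently, conditionally on the past, $V_i$ has conditional mean zero and lies in an interval of length at most $c_i$, so Hoeffding's lemma gives
\[
\expect_{i-1}\left[\exp(\lambda V_i)\right] \le \exp\left(\frac{\lambda^2 c_i^2}{8}\right).
\]

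With these conditional bounds in hand, I would iterate via the tower property, peeling off one martingale difference at a time. Conditioning on $x_1, \ldots, x_{n-1}$ and applying the bound for $V_n$ produces a factor $\exp(\lambda^2 c_n^2 / 8)$ times $\expect[\exp(\lambda \sum_{i=1}^{n-1} V_i)]$; repeating down to $i = 1$ yields
\[
\expect\left[\exp(\lambda(Z - \expect[Z]))\right]
\le
\exp\left(\frac{\lambda^2}{8} \sum_{i = 1}^n c_i^2\right)
=
\exp\left(\frac{\lambda^2 v}{2}\right),
\]
using the definition $v = \tfrac14 \sum_{i=1}^n c_i^2$. Taking logarithms gives the first claimed inequality. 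For the tail bound, I would apply Markov's inequality to $\exp(\lambda(Z - \expect[Z]))$ to obtain $\prob\{Z - \expect[Z] > t\} \le \exp(-\lambda t + \lambda^2 v / 2)$ for every $\lambda > 0$, and then optimize by setting $\lambda = t / v$, which gives the sub-Gaussian tail $\exp(-t^2 / (2v))$.

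The main obstacle is the conditional range bound for $V_i$ together with the correct invocation of Hoeffding's lemma: one must check that the random location of the conditioning interval is harmless, i.e., that the lemma needs only the deterministic length $c_i$ of the conditional range and not its position, and that the conditional mean of $V_i$ vanishes. Once this is established, the telescoping and Chernoff steps are routine.
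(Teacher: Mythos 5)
Your proposal is correct, and every step is sound: the Doob martingale decomposition $Z - \expect[Z] = \sum_i V_i$ is valid, independence does ensure that $\expect_i[Z]$ is obtained by integrating out the remaining coordinates so that the bounded differences hypothesis (which, applied to the pair $(x_i, x_i')$ and to the swapped pair, gives a two-sided oscillation bound) transfers to a conditional range of length at most $c_i$ for $V_i$ with $\expect_{i-1}[V_i] = 0$, the conditional Hoeffding lemma then yields $\expect_{i-1}[\exp(\lambda V_i)] \le \exp(\lambda^2 c_i^2/8)$ regardless of the (random) location of the interval, and the tower-property iteration plus the bookkeeping $\lambda^2 v/2 = \lambda^2 \sum_i c_i^2 / 8$ and the Chernoff optimization $\lambda = t/v$ give exactly the two stated inequalities. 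The one structural difference worth noting is that the paper does not prove this lemma at all: it is quoted as Theorem~6.2 of \cite{boucheron2013}, where the result is obtained from the entropy method (sub-additivity of entropy and Herbst's argument) rather than from a martingale argument. Your route is the classical McDiarmid/Azuma--Hoeffding proof; it is self-contained and more elementary, at the cost of not exposing the machinery that the cited reference develops for sharper or more general concentration results. Either way, your argument stands on its own and proves the lemma as stated.
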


Finally, we provide Talagrand's contraction lemma. The term ``contraction'' refers to a \(1\)-Lipschitz function, although one can  easily extend the result to any \(L\)-Lipschitz function. The version stated here appears as equation~(4.20) in \cite{ledoux1991}. A similar statement appears as Proposition~4 of \cite{ledoux1989}.

\begin{lemma}
Let \(G\) be a convex, increasing function.
Let \(\phi_{i}: \reals \to \reals\) be \(1\)-Lipschitz functions such that \(\phi_{i}(0) = 0\).
Let \(T\) be a compact subset of \(\reals^{n}\).
Then
\[
\expect G \left( \sup _{t \in T} \sum_{i = 1}^{n} \sigma_{i} \phi_{i}(t_{i})\right)
\leq 
\expect G \left(\sup _{t \in T} \sum_{i = 1}^{n} \sigma_{i} t_{i}\right).
\]
\label{lemmaTalagrandsLemma1991}
\end{lemma}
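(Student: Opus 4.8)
The result is the classical Ledoux--Talagrand contraction principle, and the plan is to prove it by the standard \emph{peeling} argument that replaces one $\phi_i$ by the identity at a time. The reduction that makes this work is the observation that it suffices to prove the single-coordinate statement
\[
\expect G\!\left(\sup_{t\in T}\Big[\textstyle\sum_{i\neq k}\sigma_i\psi_i(t_i)+\sigma_k\phi_k(t_k)\Big]\right)
\le
\expect G\!\left(\sup_{t\in T}\Big[\textstyle\sum_{i\neq k}\sigma_i\psi_i(t_i)+\sigma_k t_k\Big]\right)
\]
for an arbitrary fixed collection $\{\psi_i\}_{i\neq k}$; applying it successively for $k=1,\dots,n$, with the already-processed coordinates taken to be the identity and the remaining ones equal to the original contractions, telescopes to the full inequality. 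After relabelling I would fix $k=n$ and condition on $\sigma_1,\dots,\sigma_{n-1}$, so that the bracketed quantity becomes $h(t)+\sigma_n\phi(t_n)$ for a fixed function $h\colon T\to\reals$ and a single $1$-Lipschitz $\phi=\phi_n$ with $\phi(0)=0$.

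Carrying out the expectation over the last sign yields
\[
\expect_{\sigma_n} G\!\left(\sup_{t}[h(t)+\sigma_n\phi(t_n)]\right)
=\tfrac12 G\!\left(\sup_{t}[h(t)+\phi(t_n)]\right)+\tfrac12 G\!\left(\sup_{t}[h(t)-\phi(t_n)]\right),
\]
and the goal is to bound this by the same expression with $\phi$ replaced by the identity. Using compactness of $T$ and continuity to pick maximizers $t^\ast,s^\ast$ of the two suprema on the left (or $\eta$-maximizers, letting $\eta\to0$), and writing $a=h(t^\ast)$, $b=h(s^\ast)$, $\alpha=t^\ast_n$, $\beta=s^\ast_n$, $p=\phi(\alpha)$, $q=\phi(\beta)$, the left side equals $\tfrac12\big(G(a+p)+G(b-q)\big)$. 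Since $G$ is nondecreasing, each identity-side supremum dominates its value at either $t^\ast$ or $s^\ast$, so the identity side is at least $\tfrac12\max\{G(a+\alpha)+G(b-\beta),\,G(a-\alpha)+G(b+\beta)\}$. The whole lemma thus reduces to the pointwise inequality
\[
G(a+p)+G(b-q)\ \le\ \max\big\{G(a+\alpha)+G(b-\beta),\ G(a-\alpha)+G(b+\beta)\big\}.
\]

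The elementary inequality above is the \textbf{main obstacle}, and it is where the hypotheses on $\phi$ are consumed. The $1$-Lipschitz property supplies $|p-q|\le|\alpha-\beta|$, while $\phi(0)=0$ supplies the bounds $|p|\le|\alpha|$ and $|q|\le|\beta|$; the latter is genuinely necessary, since without it the lemma is false (taking $T$ a single point reduces the claim to $\tfrac12 G(\phi(0))+\tfrac12 G(-\phi(0))\le G(0)$, which contradicts convexity when $\phi(0)\neq0$). To prove the pointwise inequality I would argue by \emph{majorization}: using that $G$ is nondecreasing, raise the smaller left-hand argument until the two sides have equal sums, and then invoke convexity of $G$, which makes the more widely spread pair on the right dominate. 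A short case split on the signs of $\alpha$ and $\beta$ is needed to identify which of the two maxima to use, and this case split is streamlined by the fact that replacing $\phi$ by $-\phi$ (still a $1$-Lipschitz contraction vanishing at the origin) merely swaps the two terms and so is without loss of generality. I expect the bookkeeping in this final convexity step---tracking the signs and verifying the spread comparison---to be the only genuinely delicate part; everything else is the routine peeling and conditioning described above. (Alternatively, one may simply cite equation~(4.20) of \citet{ledoux1991}.)
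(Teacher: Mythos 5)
Your high-level architecture (peel one coordinate at a time, condition on the remaining signs, pass to maximizers) is the standard route, and your remark on the necessity of \(\phi(0)=0\) is correct. The gap is exactly in the step you flag as the main obstacle: the pointwise inequality
\[
G(a+p)+G(b-q)\ \le\ \max\bigl\{G(a+\alpha)+G(b-\beta),\ G(a-\alpha)+G(b+\beta)\bigr\}
\]
is \emph{false} under the three constraints you say are available for it, namely \(|p-q|\le|\alpha-\beta|\), \(|p|\le|\alpha|\), \(|q|\le|\beta|\). Counterexample: take \(G(x)=e^{x}\), \(a=0\), \(b=\log 3\), \(\alpha=-2\), \(\beta=-1\), \(p=0\), \(q=-1\); these values are realized by the contraction \(\phi(x)=|x+1|-1\), which satisfies \(\phi(0)=0\), \(\phi(-2)=0\), \(\phi(-1)=-1\). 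The left side is \(1+3e\approx 9.15\), while the two right-hand sums are \(e^{-2}+3e\approx 8.29\) and \(e^{2}+3e^{-1}\approx 8.49\). So no amount of majorization bookkeeping can close this step from those three hypotheses alone; indeed \((a+p,\,b-q)\) is weakly submajorized by neither crossed pair here (against the first pair the sums fail, against the second the maxima fail).

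What you discarded is the defining property of \(t^{\ast}\) and \(s^{\ast}\): since they maximize the two contraction-side suprema, you additionally have \(a+p\ge b+q\) and \(b-q\ge a-p\), and the counterexample violates precisely this (there \(a+p=0<\log 3-1=b+q\), i.e., the true maximizer of \(h+\phi\) is the other point). With these two inequalities retained, your target inequality becomes true, and the majorization plan goes through: when \(\alpha\ge\beta\) one shows \((a+p,\,b-q)\) is weakly submajorized by \((a+\alpha,\,b-\beta)\), using the maximizer inequalities in the sub-cases \(\alpha<0<p\) and \(0\le\beta\), and symmetrically for \(\alpha\le\beta\). Equivalently --- and this is what the Ledoux--Talagrand proof actually does --- one should lower-bound the identity side by \(G\bigl(\max(a+\alpha,\,b+\beta)\bigr)+G\bigl(\max(a-\alpha,\,b-\beta)\bigr)\), allowing the \emph{same-point} pairs as well as the two crossed pairs, and prove weak submajorization by this coordinatewise-max pair via a case split on the signs of \(\alpha\) and \(\beta\). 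As written, however, your key reduction is to a false statement, which is a genuine gap. (Your fallback of simply citing equation~(4.20) of \cite{ledoux1991} is exactly what the paper does; it states this lemma without proof.)
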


\subsection{Multiclass classification}
\label{AppMultClass}

Now, we need a bound for the multiclass risk. 
The following bound is a slight adaptation of Theorem~2 of \cite{kuznetsov2015}, applied in the case of empirical Rademacher complexities. Similar adaptations are made in \cite{mohri2012}.

\begin{lemma}
Let \(\lmargin\) be the margin loss. Suppose that there are \(K\) classes.
Then, with probability \(1 - \delta\), for any \(f \in \functions\), we have
\begin{align*}
& \begin{aligned}
\expect _{\dist} \lmargin(f, z)
&\leq 
\frac{1}{n} \sum_{i = 1}^{n} \ell(f, z_{i})
+
\frac{8K}{\rho} \emprad_{n}(\Pi_{1} (\functions))
+
3 \sqrt{\frac{\log \frac{2}{\delta}}{2n}}.
   \end{aligned}
\end{align*}
\label{lemmaMulticlassGeneralization}
\end{lemma}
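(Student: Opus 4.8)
The plan is to reduce the statement to the standard Rademacher-complexity generalization bound and then control the Rademacher complexity of the composed margin-loss class in terms of $\emprad_n(\Pi_1(\functions))$. Since $\lmargin$ takes values in $[0,1]$, the usual symmetrization-plus-bounded-differences argument underlying Lemma~\ref{thmMainGeneralization} yields, with probability at least $1-\delta$ and uniformly over $f \in \functions$,
\[
\expect_{\dist}[\lmargin(f, z)]
\leq
\frac{1}{n}\sum_{i=1}^{n}\lmargin(f, z_i)
+ 2\,\emprad_n(\lmargin \circ \functions)
+ 3\sqrt{\frac{\log\frac{2}{\delta}}{2n}}.
\]
Because $\lmargin$ upper-bounds the multiclass indicator loss, the left-hand side also controls the expected misclassification error. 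It therefore suffices to show $\emprad_n(\lmargin \circ \functions) \leq \frac{4K}{\rho}\,\emprad_n(\Pi_1(\functions))$, after which the claimed constant $\frac{8K}{\rho}$ follows from the factor of $2$ above.

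Next I would peel off the loss. The map $\phi_{\rho}$ defining $\lmargin = \phi_{\rho}(m_f(z))$ is $\frac{1}{\rho}$-Lipschitz, and an additive constant leaves a Rademacher complexity unchanged, so Talagrand's contraction lemma (Lemma~\ref{lemmaTalagrandsLemma1991}, taking $G$ to be the identity) gives
\[
\emprad_n(\lmargin \circ \functions) \leq \frac{1}{\rho}\,\emprad_n(\mathcal{M}),
\qquad
\mathcal{M} := \{(x,y) \mapsto m_f(z) : f \in \functions\}.
\]
Writing $m_f(z) = f_{\tilde{y}}(x) - \max_{i \neq \tilde{y}} f_i(x)$ and using subadditivity of the empirical Rademacher complexity together with the symmetry of the $\sigma_i$ (so that the minus sign is harmless), I would split
\[
\emprad_n(\mathcal{M})
\leq
\emprad_n(\Pi_1(\functions))
+ \emprad_n\big(\{(x,y) \mapsto \textstyle\max_{i \neq \tilde{y}} f_i(x)\}\big).
\]

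The crux, and the step I expect to be the main obstacle, is bounding the Rademacher complexity of the maximum-over-competing-classes term by a $K$-dependent multiple of $\emprad_n(\Pi_1(\functions))$. The difficulty is that the index achieving $\max_{i \neq \tilde{y}} f_i(x_i)$ is selected per sample point and depends on the true label $\tilde{y}_i$, so the contraction and peeling arguments used elsewhere do not apply directly: one cannot simply pull the inner max outside the supremum over $f$ and the expectation over $\sigma$. Following the argument of \cite{kuznetsov2015}, adapted to empirical Rademacher complexities as in \cite{mohri2012}, I would dominate the max by a union over the $K$ possible maximizing labels, each of which reduces to a $\Pi_1$-type complexity after relabeling by the symmetry of the coordinate projections; this collects a factor of order $K$. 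Tracking the constants then yields $\emprad_n(\mathcal{M}) \leq 4K\,\emprad_n(\Pi_1(\functions))$, and substituting back through the contraction bound and the uniform convergence bound produces the stated $\frac{8K}{\rho}$, completing the proof.
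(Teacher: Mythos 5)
Your proposal takes essentially the same route as the paper: the paper never proves this lemma itself, but imports it as ``a slight adaptation of Theorem~2 of \cite{kuznetsov2015}, applied in the case of empirical Rademacher complexities'' --- exactly the reference you defer to for the decisive step of bounding the label-dependent max term. Your surrounding scaffolding (the \([0,1]\)-valued uniform-convergence bound yielding the \(3\sqrt{\log(2/\delta)/(2n)}\) term, the \(1/\rho\) contraction, and the decomposition of \(m_f\) into the \(\Pi_1\) part and the max over competing classes) is the correct outline of that cited argument, so the one piece you assert rather than derive --- the constant \(4K\) at the margin-class level, hence \(8K/\rho\) overall --- is precisely what the citation supplies, leaving your attempt no less complete than the paper's own treatment.
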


Next, we have a simple bound on the multiclass Rademacher complexity of linear classifiers. This reduces to the usual Rademacher complexity for linear classifiers in the binary case, and the upper bound is the same. The proof is standard and can be found in \cite{mohri2012} as Proposition~8.1.
First, let \(\functions_{\linear}\) consist of linear classifiers such that an element \(f\) of \(\functions_{\linear}\) can be written as \(f(x) = \Theta x + b\).

\begin{lemma}
Consider the class of linear functions
\[
\functions_{\linear} 
=
\{f(x) = \Theta x + b: \|\Theta\|_{2, \infty} \leq M_{2} \}.
\]
Let \(R\) be such that \(\|x_{i}\|_{2} \leq R\) for all \(i\).
Then, we have the bound
\[
\emprad_{n}(\functions_{\linear})
=
\frac{M_{2} R}{\sqrt{n}}.
\]
\label{lemmaMulticlassLinearRademacher}
\end{lemma}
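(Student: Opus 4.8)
The plan is to reduce the computation to the binary linear case already settled in Lemma~\ref{lemmaLinearRademacher}. The relevant object is the empirical Rademacher complexity of the scalar class $\Pi_1(\functions_{\linear})$, whose members are the functions $x \mapsto \theta_k^\trans x + b_k$ indexed by a classifier $f \in \functions_{\linear}$ together with a single class index $k$. Unrolling the definition of empirical Rademacher complexity, I would first write
\[
n\,\emprad_n(\functions_{\linear}) = \expect_\sigma\left[\sup_{f \in \functions_{\linear},\, k \in \{1,\ldots,K\}} \sum_{i=1}^n \sigma_i\left(\theta_k^\trans x_i + b_k\right)\right],
\]
and, following the convention adopted in the proof of Lemma~\ref{lemmaLinearRademacher}, suppress the offset $b_k$, reducing the quantity of interest to $\expect_\sigma[\sup_{f,k}\, \theta_k^\trans \sum_{i=1}^n \sigma_i x_i]$.

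The key observation is that the constraint $\|\Theta\|_{2, \infty} \leq M_2$ is precisely the statement that $\max_k \|\theta_k\|_2 \leq M_2$. Hence, for every fixed realization of $\sigma$, the inner supremum over the pair $(k, \theta_k)$ is exactly the optimization appearing in binary classification: each candidate row is permitted to be an arbitrary vector of $\ell_2$-norm at most $M_2$, so the supremum equals $M_2 \left\| \sum_{i=1}^n \sigma_i x_i \right\|_2$ regardless of which index $k$ attains it. In particular, the class index $k$ drops out entirely and no factor of $K$ is incurred.

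From this point the argument coincides verbatim with the proof of Lemma~\ref{lemmaLinearRademacher}: apply the Cauchy--Schwarz inequality inside the supremum, factor out $M_2$, and then bound $\expect_\sigma \| \sum_i \sigma_i x_i \|_2 \leq (\expect_\sigma \| \sum_i \sigma_i x_i \|_2^2)^{1/2}$ by Jensen's inequality. Expanding the square and using that the $\sigma_i$ are independent, zero-mean, and unit-variance kills the cross terms and leaves $\sum_i \|x_i\|_2^2 \leq n R^2$. Dividing by $n$ then yields the claimed bound $M_2 R/\sqrt{n}$.

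I do not anticipate a genuine obstacle: the entire content beyond the routine Cauchy--Schwarz/Jensen calculation is the reduction in the second paragraph. The one point deserving emphasis---and the reason the bound carries no dependence on the number of classes $K$---is that the $(2,\infty)$-norm bound forces every selectable row to satisfy the same $\ell_2$ constraint, so the worst-case single-class selection reproduces the binary supremum exactly. (I would also flag that the stated equality should be read as the upper bound $\leq M_2 R/\sqrt{n}$, consistent with Lemma~\ref{lemmaLinearRademacher}.)
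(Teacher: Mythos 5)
Your proof is correct and is essentially the argument the paper relies on: the paper gives no proof of this lemma at all, deferring to Proposition~8.1 of Mohri et al.\ (2012) as standard, and your reduction---observing that $\|\Theta\|_{2,\infty} \leq M_{2}$ constrains every row identically, so the joint supremum over $(k, \theta_{k})$ collapses to the binary supremum $M_{2}\bigl\|\sum_{i=1}^{n}\sigma_{i}x_{i}\bigr\|_{2}$ with no factor of $K$, followed by the same Cauchy--Schwarz/Jensen computation as in Lemma~\ref{lemmaLinearRademacher}---is exactly that standard proof, and it is consistent with how the paper itself invokes the lemma (with the class index $k$ held fixed across data points, per the definition of $\Pi_{1}$). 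Your side remarks are also right: the offset $b_{k}$ is suppressed under the same convention as the paper's binary proof, and the stated ``$=$'' should indeed be read as ``$\leq$.''
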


\subsection{Regression}

We also need a lemma on dealing with a maximum within a Rademacher complexity. This is a standard result \citep{mohri2012}.

\begin{lemma}[Lemma~8.1 of \citealt{mohri2012}]
Let \(\functions_{1}, \ldots, \functions_{l}\) be \(l\) hypothesis sets in \(\reals^{\xspace}\), \(l \geq 1\), and let 
\[
\mathcal{G} 
= 
\{\max\{h_{1}, \ldots, h_{l}\}: h_{i} \in \functions_{i}, i \in [1, l]\}.
\]
Then, for any sample \(S\) of size \(n\), the empirical Rademacher complexity of \(\mathcal{G}\) can be upper-bounded as follows:
\[
\emprad_{n}(\mathcal{G})
\leq 
\sum_{j = 1}^{l} \emprad_{n}(\functions_{j}).
\] 
\label{lemmaRadMax}
\end{lemma}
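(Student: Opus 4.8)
The plan is to first reduce to the case $l = 2$ by induction on $l$, and then to handle two classes using the pointwise identity $\max\{a, b\} = \tfrac{1}{2}\bigl(a + b + |a - b|\bigr)$. For the inductive step, observe that $\max\{h_1, \ldots, h_l\} = \max\{h_1, \max\{h_2, \ldots, h_l\}\}$; writing $\mathcal{H}$ for the class $\{\max\{h_2, \ldots, h_l\} : h_j \in \functions_j\}$, the inductive hypothesis gives $\emprad_n(\mathcal{H}) \le \sum_{j=2}^l \emprad_n(\functions_j)$, and the two-class bound applied to the independent classes $\functions_1$ and $\mathcal{H}$ then yields the claim. The base case $l = 1$ is trivial, so it suffices to treat $l = 2$.

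For $l = 2$, I would substitute the max identity into the definition of empirical Rademacher complexity to obtain
\[
n\,\emprad_n(\mathcal{G}) = \frac{1}{2}\,\expect_\sigma\left[\sup_{h_1 \in \functions_1,\, h_2 \in \functions_2} \sum_{i=1}^n \sigma_i\bigl(h_1(x_i) + h_2(x_i) + |h_1(x_i) - h_2(x_i)|\bigr)\right].
\]
Using subadditivity of the supremum to split the bracketed sum into three pieces, the first two pieces contribute $\tfrac{1}{2} n\,\emprad_n(\functions_1)$ and $\tfrac{1}{2} n\,\emprad_n(\functions_2)$ after taking expectations, leaving only the absolute-value term
\[
\frac{1}{2}\,\expect_\sigma\left[\sup_{h_1, h_2} \sum_{i=1}^n \sigma_i\,|h_1(x_i) - h_2(x_i)|\right].
\]

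The crux is bounding this last term, which I would handle via Talagrand's contraction lemma (Lemma~\ref{lemmaTalagrandsLemma1991}). Taking $G$ to be the identity (convex and increasing), $\phi_i = |\cdot|$ (which is $1$-Lipschitz and vanishes at $0$), and $T$ the set of vectors $\bigl(h_1(x_i) - h_2(x_i)\bigr)_{i=1}^n$ ranging over $h_1 \in \functions_1$ and $h_2 \in \functions_2$, the lemma removes the absolute value:
\[
\expect_\sigma\left[\sup_{h_1, h_2}\sum_{i=1}^n \sigma_i |h_1(x_i) - h_2(x_i)|\right] \le \expect_\sigma\left[\sup_{h_1, h_2}\sum_{i=1}^n \sigma_i\bigl(h_1(x_i) - h_2(x_i)\bigr)\right].
\]
Splitting this supremum once more and using that $\sigma_i$ and $-\sigma_i$ are identically distributed (so that the $-h_2$ contribution equals $n\,\emprad_n(\functions_2)$), the absolute-value term is bounded by $\tfrac{1}{2}n\,\emprad_n(\functions_1) + \tfrac{1}{2}n\,\emprad_n(\functions_2)$. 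Combining with the first two pieces gives $\emprad_n(\mathcal{G}) \le \emprad_n(\functions_1) + \emprad_n(\functions_2)$, completing the $l = 2$ case. The main technical point to verify carefully is the applicability of the contraction lemma with $G$ the identity together with the interchange of the absolute value and the Rademacher sum; a minor bookkeeping point is that $T$ should be taken compact (e.g.\ by passing to its closure), which leaves the suprema unchanged.
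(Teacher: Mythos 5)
Your proof is correct and is essentially the canonical argument: the paper itself gives no proof of this lemma, deferring to Lemma~8.1 of \citet{mohri2012}, and the textbook proof is exactly your route --- reduce to $l=2$ by induction, apply the identity $\max\{a,b\} = \tfrac{1}{2}(a+b+|a-b|)$, split the supremum, and remove the absolute value via Talagrand's contraction lemma (Lemma~\ref{lemmaTalagrandsLemma1991}) before symmetrizing away the $-h_2$ term. Your handling of the details (the identity-$G$ instantiation, the $\phi_i(0)=0$ check for $|\cdot|$, and the compactness caveat on $T$) is sound, so nothing further is needed.
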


\section{Comparison of adversarial loss bounds}
\label{appIncomparability}

In this Appendix, we examine the difference between our upper bound on the adversarial loss for multiclass classification, denoted by \(\lmargin(T f, z) = \phi_{\rho}(m_{Tf}(z))\), and the loss proposed by \cite{yin2018}:
\begin{equation}
\hat{\ell}(f, z)
:=
\phi_{\rho}\left(m_{f}(z)
- \frac{\epsilon}{2} 
\max_{k \in [K], \; z \in \{+1, -1\}} \max_{ P \succeq 0, \; \diag(P) \leq 1} 
\left\langle z Q\left(A^{(2)}_{k}, A^{(1)}\right), P \right\rangle
\right),
\label{eqnYinLoss}
\end{equation}
where 
\begin{equation}
Q(v, W)
:=
\left[
\begin{array}{ccc}
0 & 0 & \ind ^{\trans} W^{\trans} \diag(v) \\
0 & 0 & W^{\trans} \diag(v) \\
\diag(v)^{\trans} W \ind & \diag(v)^{\trans} W & 0
\end{array}
\right].
\label{eqnQmatrix}
\end{equation}
Note that the analysis of \cite{yin2018} is derived only for a single-layer neural network, which we denote by $f(x) = A^{(2)} s(A^{(1)} x)$. Furthermore, adversarial perturbations are taken over the $\ell_\infty$-ball.

For simplicity, we define the semidefinite program term 
\[
\sdp(Q) 
:=
\max_{z \in \{+1, -1\}} \max_{P \succeq 0, \; \diag(P) \leq 1}
\left\langle z Q, P \right\rangle.
\]
The following proposition shows that the losses $\ell_\rho$ and $\hat{\ell}$ are incomparable in general, meaning that one loss does not uniformly dominate the other:

\begin{proposition}
There exists a neural network \(f\) and a data point \(z\) such that 
\(\lmargin(Tf, z) < \hat{\ell}(f, z)\).
Additionally, there exists a neural network \(f'\) and a data point \(z'\) such that 
\(\hat{\ell}(f', z') < \lmargin(Tf', z')\).
\label{propIncomparable}
\end{proposition}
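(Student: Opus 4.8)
The plan is to prove both statements by exhibiting two explicit small networks and data points. Since both candidate losses are of the form \(\phi_{\rho}\) applied to a margin-type quantity — ours is \(\lmargin(Tf,z)=\phi_{\rho}(m_{Tf}(z))\) and Yin's is \(\hat{\ell}(f,z)=\phi_{\rho}\bigl(m_{f}(z)-\tfrac{\epsilon}{2}\sdp(\cdot)\bigr)\) — and \(\phi_{\rho}\) is non-increasing, it suffices to compare the two arguments. The cleanest route is to drive one of the two losses to its maximal value \(1\) (force its argument to be \(\le 0\)) while keeping the other strictly below \(1\) (keep its argument \(>0\)); this reduces each claim to a single sign comparison and avoids delicate matching of the linear regime of \(\phi_{\rho}\). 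Throughout I work with single-hidden-layer ReLU networks (the setting of Yin's bound) and the \(\ell_{\infty}\) ball, so that \(q=1\) and the tree penalty attached to the \(j\)th hidden unit is \(\epsilon\|a^{(1)}_{j}\|_{1}\). The transformed margin \(m_{Tf}(z)\) is then available in closed form from the componentwise tree transform (Definition~\ref{defMCTree} together with Proposition~\ref{propSupLinear}), so all the real work lies in bounding the SDP term \(\sdp(Q)\).

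For the inequality \(\hat{\ell}(f',z')<\lmargin(Tf',z')\), I would exploit the fact that the tree transform grants the adversary an \emph{independent} perturbation for each output coordinate, whereas the true (and hence the SDP-relaxed) penalty is forced to use a single shared perturbation. Already a single active hidden unit feeding two classes exhibits this: take \(f'_{k}(x)=v_{k}\,\mathrm{ReLU}(a^{\trans}x)\) with output weights \(v_{1}>v_{2}>0\), true label the first class, and an input with \(a^{\trans}x\) in the window \(\bigl(\epsilon\|a\|_{1},\ \tfrac{v_{1}+v_{2}}{v_{1}-v_{2}}\,\epsilon\|a\|_{1}\bigr]\). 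Because \((Tf')_{1}\) and \((Tf')_{2}\) use opposite-sign perturbations, a direct computation gives \(m_{Tf'}(z')=(v_{1}-v_{2})a^{\trans}x-(v_{1}+v_{2})\epsilon\|a\|_{1}\le 0\), so \(\lmargin(Tf',z')=1\); meanwhile the shared-perturbation penalty is only \((v_{1}-v_{2})\epsilon\|a\|_{1}<m_{f'}(z')=(v_{1}-v_{2})a^{\trans}x\). The remaining step is to show that the SDP does not inflate this shared penalty past \(m_{f'}(z')\), i.e.\ \(\tfrac{\epsilon}{2}\sdp(Q)<(v_{1}-v_{2})a^{\trans}x\), which forces \(\hat{\ell}(f',z')<1\).

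For the reverse inequality \(\lmargin(Tf,z)<\hat{\ell}(f,z)\), I would exploit that the quadratic form \(Q\) is blind to ReLU inactivity, while the tree transform is a genuine evaluation of the network on shifted inputs and therefore keeps an inactive unit inactive. I would use two hidden units: an ``active'' configuration engineered so that the tree-transformed margin is robustly positive (\(m_{Tf}(z)>0\), hence \(\lmargin(Tf,z)<1\)), together with a ``decoy'' unit whose preactivation is so negative that it stays off under any \(\ell_{\infty}\) perturbation of size \(\epsilon\). The decoy contributes nothing to \(m_{f}\), to the true penalty, or to \(m_{Tf}\), but it does enlarge the matrix \(Q\), and by choosing its first-layer weights appropriately the SDP optimum can be made large enough that \(m_{f}(z)-\tfrac{\epsilon}{2}\sdp(Q)\le 0\), forcing \(\hat{\ell}(f,z)=1>\lmargin(Tf,z)\). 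Here I only need a \emph{lower} bound on \(\sdp(Q)\), which is obtained by plugging in any feasible \(P\) (for instance the rank-one matrix built from a well-chosen sign vector).

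The hard part will be the SDP evaluations. The lower bound needed in the second construction is immediate from a feasible primal point, but the second construction's real content — and the crux of the first construction — is the \emph{upper} bound \(\tfrac{\epsilon}{2}\sdp(Q)<m_{f'}(z')\), which requires certifying that the semidefinite relaxation is not too loose. I would handle this by exhibiting an explicit dual-feasible certificate (a diagonal \(D\) with \(zQ\preceq D\), giving \(\sdp(zQ)\le\operatorname{tr}(D)\)); this is tractable because the coupling graph of \(Q\) is bipartite, with all nonzero blocks lying between the perturbation/bias coordinates and the hidden-unit coordinates, so the relevant instances are low-dimensional and structured. An alternative, since the networks are tiny, is to reduce the SDP to an explicitly solvable low-dimensional program and read off the exact optimal value. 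Either way, verifying the two SDP inequalities for the chosen numerical instances is where essentially all the effort goes; everything else reduces to the closed-form margin computations described above.
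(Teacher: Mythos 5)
Your plan takes a genuinely different route from the paper's, and half of it is correct. The paper uses a \emph{single} three-class, two-hidden-unit network \eqref{eqnfExample}, computes all three SDP values exactly ($264$, $264a$, $264b$; Lemma~\ref{lemmaSDPSolution}), then gets $\lmargin(Tf,z)<\hat{\ell}(f,z)$ by taking $b$ large (the SDP penalty grows with $b$, while the tree margin never involves class $3$, which is never the argmax), and gets $\hat{\ell}(f',z')<\lmargin(Tf',z')$ by taking $a=0.5$, $b=0.6$, so that the tree penalty $66(1+a)\epsilon$ exceeds the SDP penalty $(1.2)66\epsilon$. Your decoy construction for the first inequality is sound and rests on a different mechanism: where the paper exploits the SDP's blindness to which classes are relevant to the margin, you exploit its blindness to which ReLU units are active at the given input. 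Scaling the decoy's first-layer weights drives $\hat{\ell}$ to $1$ (a primal-feasible $P$ suffices, as you say), while $m_{Tf}$, being a genuine evaluation of the network at shifted inputs, never sees the decoy. That half goes through.

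The gap is in your construction for $\hat{\ell}(f',z')<\lmargin(Tf',z')$. The SDP for your single-hidden-unit network is exactly computable (your dual-certificate plan does work, mirroring Lemma~\ref{lemmaSDPSolution}): the coupling in $Q(v_k,a^\trans)$ is between one hidden coordinate and the other $m+1$ coordinates, and
\[
\sdp\left(Q\left(v_k, a^\trans\right)\right) = 2 v_k \left(\|a\|_1 + \left|\ind^\trans a\right|\right),
\]
with the lower bound from $P=qq^\trans$, $q=(\sgn(\ind^\trans a),\sgn(a_1),\dots,\sgn(a_m),1)$, and the upper bound from a diagonal, diagonally dominant dual certificate. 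Two consequences. First, the SDP penalty in $\hat{\ell}$ is at least $\epsilon v_k\|a\|_1$ for whichever class $k$ attains the max, which is nowhere near the shared-perturbation penalty $(v_1-v_2)\epsilon\|a\|_1$; so your target inequality $\tfrac{\epsilon}{2}\sdp(Q)<m_{f'}(z')=(v_1-v_2)a^\trans x$ necessarily fails near the bottom of your window, where $m_{f'}(z')$ is barely above $(v_1-v_2)\epsilon\|a\|_1$. Second, and worse: under the formula \eqref{eqnYinLoss} as written (max over all $k\in[K]$, hence including $k=\tilde y$ with weight $v_1$), any one-signed $a$ gives $\tfrac{\epsilon}{2}\max_k\sdp = 2\epsilon v_1\|a\|_1 \ge (v_1+v_2)\epsilon\|a\|_1 \ge m_{f'}(z')$ on your \emph{entire} window, so $\hat{\ell}(f',z')=1=\lmargin(Tf',z')$ and no strict inequality is possible with $v_1>v_2$.

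The construction is repairable, but only with ingredients you never state: take $a^\trans x$ at the top of the window, so that it suffices to show $\tfrac{\epsilon}{2}\sdp(Q)<(v_1+v_2)\epsilon\|a\|_1$, and choose $a$ with cancelling entries, e.g.\ $\ind^\trans a=0$, so that the SDP value drops from $4v_k\|a\|_1$ to $2v_k\|a\|_1$. (If one instead reads the max in \eqref{eqnYinLoss} as running over $k\ne\tilde y$ only --- the reading the paper's own calculation implicitly uses, since it takes the maximal SDP value to be $264b$ rather than $264$ --- then the sign condition on $a$ can be dropped, but restricting to the top of the window, or taking $v_1$ large relative to $v_2$, is still necessary.) As written, the step you defer as ``where essentially all the effort goes'' is not merely unfinished: for the instances you describe, it is false.
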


We prove this proposition by considering the following network:
\begin{equation}
f(x)
=
\left[ 
\begin{array}{cc}
1 & 1 \\
-a & -a \\
-b & -b
\end{array}
\right]
s\left(
\left[ 
\begin{array}{ccc}
1  &  2 & 3 \\
10 & 20 & 30  
\end{array}
\right]
x
\right),
\label{eqnfExample}
\end{equation}
where \(s\) is the ReLU activation function, and $0 < a < b$ are constants that will be defined later.

To calculate the loss with respect to \(\hat{\ell}\), we use the following computational lemma:
\begin{lemma}
Let \(f(x) = A^{(2)} s( A^{(1)} x)\) be as in equation~\eqref{eqnfExample}.
Then the SDP solutions are given by
\begin{align*}
\sdp\left(Q\left(A^{(2)}_{1}, A^{(1)} \right) \right) 
&=
264, \\
\sdp\left(Q\left(A^{(2)}_{2}, A^{(1)} \right) \right) & = 264a, \\
\sdp\left(Q\left(A^{(2)}_{3}, A^{(1)} \right) \right) & = 264b.
\end{align*}
\label{lemmaSDPSolution}
\end{lemma}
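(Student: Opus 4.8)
The plan is to reduce all three semidefinite programs to a single scalar computation by exploiting two structural features of the network in~\eqref{eqnfExample}. First, every row of \(A^{(2)}\) is a scalar multiple of the all-ones vector: \(A^{(2)}_1 = (1,1)\), \(A^{(2)}_2 = -a(1,1)\), and \(A^{(2)}_3 = -b(1,1)\). Since \(Q(v, W)\) depends on \(v\) only through \(\diag(v)\), which is linear in \(v\), we have \(Q(A^{(2)}_k, A^{(1)}) = c_k Q_0\), where \(c_1 = 1\), \(c_2 = -a\), \(c_3 = -b\), and \(Q_0 := Q((1,1), A^{(1)})\). Second, because the definition of \(\sdp\) maximizes over \(z \in \{+1,-1\}\), scaling the argument by a constant \(c\) simply rescales the optimum: \(\sdp(cQ_0) = |c|\,\sdp(Q_0)\), as flipping the sign of \(c\) is absorbed by flipping \(z\). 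With \(a, b > 0\), this immediately yields the three claimed values \(\sdp(Q_0)\), \(a\,\sdp(Q_0)\), and \(b\,\sdp(Q_0)\), so it remains only to show \(\sdp(Q_0) = 264\).

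To compute \(\sdp(Q_0)\), I would first write \(Q_0\) in block form. Partitioning the six coordinates into the set \(U = \{1,2,3,4\}\) (the scalar block together with the \(\real^m\) block) and \(V = \{5,6\}\) (the \(\real^{J_1}\) block), one finds
\[
Q_0 = \begin{bmatrix} 0 & B \\ B^\trans & 0 \end{bmatrix},
\qquad
B = \begin{bmatrix} 6 & 60 \\ 1 & 10 \\ 2 & 20 \\ 3 & 30 \end{bmatrix}.
\]
The crucial observation is that \(B\) is rank one: \(B = u w^\trans\) with \(u = (6,1,2,3)^\trans\) and \(w = (1,10)^\trans\). This is inherited from the fact that \(A^{(1)}\) itself is rank one, each row being a multiple of \((1,2,3)\), together with the choice \(v = (1,1)\) so that \(\diag(v)\) is the identity.

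The main step is then the exact evaluation of the SDP, which I would carry out via the Gram-matrix representation: \(P \succeq 0\) with \(\diag(P) \le 1\) if and only if \(P_{ij} = \langle p_i, p_j\rangle\) for vectors \(p_i\) with \(\|p_i\| \le 1\). Using the block structure and the rank-one factorization,
\[
\langle z Q_0, P\rangle
=
2z \sum_{i \in U} \sum_{j \in V} B_{ij} \langle p_i, p_j\rangle
=
2z \left\langle \sum_{i \in U} u_i p_i, \; \sum_{j \in V} w_j p_j \right\rangle.
\]
By Cauchy--Schwarz and the triangle inequality, this is at most \(2 \left(\sum_{i\in U}|u_i|\right)\left(\sum_{j\in V}|w_j|\right) = 2 \cdot 12 \cdot 11 = 264\), giving the upper bound \(\sdp(Q_0) \le 264\). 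For the matching lower bound, I would exhibit the feasible point \(P = \ind\ind^\trans\) (the all-ones matrix, which is positive semidefinite with unit diagonal) together with \(z = 1\); here all \(p_i\) coincide with a common unit vector, and \(\langle Q_0, P\rangle = 2\sum_{i,j} B_{ij} = 264\).

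The step I expect to be the main obstacle is establishing that the upper bound \(264\) is actually attained, i.e., that the rank-one relaxation is tight for this instance; this hinges precisely on the observation that \(u\) and \(w\) have entries of a single sign, so that aligning all the Gram vectors achieves equality in the triangle inequality. The remaining work (verifying the block form of \(Q_0\), the rank-one factorization \(B = u w^\trans\), and the arithmetic \(12 \cdot 11 = 132\)) is routine.
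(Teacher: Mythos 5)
Your proposal is correct, and it reaches the paper's conclusion by a genuinely different route at the key step. The reduction is the same as the paper's: both you and the paper observe that $Q(A^{(2)}_2, A^{(1)}) = -a\,Q(A^{(2)}_1, A^{(1)})$ and $Q(A^{(2)}_3, A^{(1)}) = -b\,Q(A^{(2)}_1, A^{(1)})$, so that the maximization over $z \in \{+1,-1\}$ absorbs the sign and only $\sdp(Q_0)$ with $Q_0 := Q(A^{(2)}_1, A^{(1)})$ must be evaluated; the primal lower bound $P = \ind\ind^\trans$ with $z = +1$, giving $264$, is also identical. Where you diverge is the matching upper bound. The paper treats $z=+1$ and $z=-1$ as separate SDPs and, for each, exhibits an explicit dual feasible pair $(y,L)$ of value $264$, certifying $L \succeq 0$ by diagonal dominance and invoking weak duality. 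You instead parametrize the feasible set by Gram vectors ($P_{ij} = \langle p_i, p_j\rangle$ with $\|p_i\| \le 1$), exploit the rank-one factorization $B = uw^\trans$ of the off-diagonal block of $Q_0$ (inherited from the rank-one $A^{(1)}$), and apply Cauchy--Schwarz plus the triangle inequality to get $|\langle zQ_0, P\rangle| \le 2\|u\|_1\|w\|_1 = 2\cdot 12\cdot 11 = 264$ uniformly in $z$. Both arguments are complete. Yours requires no guessed dual certificate and no case split on $z$, and it in fact proves the more general identity that the SDP value of any matrix of this bipartite rank-one form equals $2\|u\|_1\|w\|_1$; the paper's duality route, by contrast, does not rely on the rank-one structure and would adapt to perturbations of this instance where $B$ is no longer rank one, provided a dual certificate can be produced. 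One small remark: the ``main obstacle'' you flag (attainment of the upper bound) is already disposed of by your own lower bound, since $P = \ind\ind^\trans$ achieves $264$ exactly; single-signedness of $u$ and $w$ is what lets the all-ones matrix do the job, but even with mixed signs the value $2\|u\|_1\|w\|_1$ would be attained by $P = \xi\xi^\trans$ for a suitable sign vector $\xi$.
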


\begin{proof}
First, we compute the matrix \(Q(A^{(2)}_{k}, A^{(1)})\) for \(k = 1, 2, 3\).
We have 
\[
Q\left(A^{(2)}_{1}, A^{(1)}\right)
=
\left[
\begin{array}{cccccc}
0 & 0 & 0 & 0 & 6 & 60 \\
0 & 0 & 0 & 0 & 1 & 10 \\
0 & 0 & 0 & 0 & 2 & 20 \\
0 & 0 & 0 & 0 & 3 & 30 \\
6 & 1 & 2 & 3 & 0 & 0 \\
60 & 10 & 20 & 30 & 0 & 0
\end{array}
\right].
\] 
Furthermore, it is easy to see that 
\begin{align*}
& \begin{aligned}
Q\left(A^{(2)}_{2}, A^{(1)}\right) 
&= 
-a Q\left(A^{(2)}_{1}, A^{(1)}\right), \\
Q\left(A^{(2)}_{3}, A^{(1)}\right) 
&= 
-b Q\left(A^{(2)}_{1}, A^{(1)}\right),
   \end{aligned}
\end{align*}
so it clearly suffices to analyze
\(\sdp(Q\left(A^{(2)}_{1}, A^{(1)}\right))\).

We analyze the cases \(z = +1\) and \(z = -1\) separately.
For each case, we first find a lower bound on the SDP by finding a feasible solution. By duality \citep{boyd2004}, we can compute an upper bound on the SDP by finding a dual feasible solution.

We start with \(z = +1\). The primal problem is
\begin{align}
& \begin{aligned}
\max_{P \in S^{6}} 
&\; \; \; 
\left\langle Q\left(A^{(2)}_{1}, A^{(1)}\right), P \right\rangle \\
\text{s.t.} & \; \; \; P \succeq 0 \\
& \; \; \; \left\langle W^{(i)}, P\right\rangle \leq 1 \text{ for } i = 1, \ldots, 6,
   \end{aligned}
   \label{eqnSDPPrimalPlus}
\tag{P+}
\end{align}
where the matrices \(W^{(i)}\) each have the single nonzero entry \(W^{(i)}_{ii} = 1\) and \(S^{6}\) is the space of \(6 \times 6\) symmetric matrices.
Taking the feasible solution \(P = J = \ind \ind ^{\trans}\) gives the lower bound
\[
\left\langle Q\left(A^{(2)}_{1}, A^{(1)}\right), J \right\rangle
=
264.
\]

The dual program is given by
\begin{align}
& \begin{aligned}
\min_{y \in \reals^{6}_+, L \in S^{6}} 
&\; \; \; 
\left\langle \ind, y \right\rangle \\
\text{s.t.} & \; \; \; L \succeq 0 \\
& \; \; \; L = \sum_{i = 1}^{6} y_{i} W^{(i)} - Q. 
   \end{aligned}
   \label{eqnSDPDualPlus}
\tag{D+}
\end{align}
Consider the feasible pair \((y, L)\) defined by
\[
L
=
\left[
\begin{array}{cccccc}
66 & 0 & 0 & 0 & -6 & -60 \\
0 & 11 & 0 & 0 & -1 & -10 \\
0 & 0 & 22 & 0 & -2 & -20 \\
0 & 0 & 0 & 33 & -3 & -30 \\
-6 & -1 & -2 & -3 & 12 & 0 \\
-60 & -10 & -20 & -30 & 0 & 120 
\end{array}
\right]
\]
and \(y = (66, 11, 22, 33, 12, 120)^\trans\), which leads to a dual program value of \(264\). Note that $L \succeq 0$ because it is a diagonally dominant matrix with nonnegative diagonal entries.
Thus, we may conclude that the semidefinite program \eqref{eqnSDPPrimalPlus} must be precisely equal to \(264\).

Next, we consider the case \(z = -1\).
This leads to the primal problem
\begin{align}
& \begin{aligned}
\max_{P \in S^{6}} 
&\; \; \; 
\left\langle -Q\left(A^{(2)}_{1}, A^{(1)}\right), P \right\rangle \\
\text{s.t.} & \; \; \; P \succeq 0 \\
& \; \; \; \left\langle W^{(i)}, P\right\rangle \leq 1 \text{ for } i = 1, \ldots, 6.
   \end{aligned}
   \label{eqnSDPPrimalMinus}
\tag{P-}
\end{align}
In this case, picking \(P = vv^{\trans}\), where 
\(v = (-1, -1, -1, -1, 1, 1)^\trans\), shows that the problem \eqref{eqnSDPPrimalMinus} has a value of at least \(264\).
The dual program is 
\begin{align}
& \begin{aligned}
\min_{y \in \reals^{6}_+, L \in S^{6}} 
&\; \; \; 
\left\langle \ind, y \right\rangle \\
\text{s.t.} & \; \; \; L \succeq 0 \\
& \; \; \; L = \sum_{i = 1}^{6} y_{i} W^{(i)} + Q. 
   \end{aligned}
   \label{eqnSDPDualMinus}
\tag{D-}
\end{align}
The matrix
\[
L
=
\left[
\begin{array}{cccccc}
66 & 0 & 0 & 0 & 6 & 60 \\
0 & 11 & 0 & 0 & 1 & 10 \\
0 & 0 & 22 & 0 & 2 & 20 \\
0 & 0 & 0 & 33 & 3 & 30 \\
6 & 1 & 2 & 3 & 12 & 0 \\
60 & 10 & 20 & 30 & 0 & 120 
\end{array}
\right]
\]
is again a diagonally dominant matrix with nonnegative diagonal entries, so $L \succeq 0$. Furthermore, we can again take \(y = (66, 11, 22, 33, 12, 120)^\trans\) to make $(y, L)$ a feasible solution, implying that the value of \eqref{eqnSDPDualMinus} is upper-bounded by \(264\).
Thus, the value of the semidefinite program \eqref{eqnSDPPrimalMinus} is also equal to \(264\), establishing the lemma.
\end{proof}


\begin{proof}[Proof of Proposition~\ref{propIncomparable}]
We start with the inequality \(\lmargin(Tf, z) < \hat{\ell}(f, z)\).
Consider the input vector \(x = (c, c, c)\), for some $c > 0$, and suppose \(\tilde{y} = 1\). Then
\[
f(x) 
= 66c 
\left[ 
\begin{array}{c}
1  \\
-a \\
-b
\end{array}
\right],
\] 
and \(m_{f}(z) = 66c(1 + a)\).
Thus, the loss is
\begin{equation}
\label{EqnEllHat}
\hat{\ell}(f, z) = \phi_{\rho}\left(66c(1 + a) - 132b \epsilon\right).
\end{equation}
For a sufficiently large choice of \(b\), the argument of $\phi_\rho$ becomes negative, so the loss is equal to \(1\).

Next, we calculate \(\lmargin(Tf, z)\).
Assuming that \(\epsilon < c\), we may obtain
\[
Tf(x, y)
=
66
\left[ 
\begin{array}{c}
c - \epsilon  \\
-a(c - \epsilon) \\
-b(c - \epsilon)
\end{array}
\right].
\]
Thus, the loss becomes
\begin{equation}
\label{EqnEllMargin}
\lmargin(Tf, z)
= \phi_\rho \left(m_{Tf}(z)\right) =
\phi_{\rho} \left(66(1 + a)( c - \epsilon)\right).
\end{equation}
Note that we can make the argument larger than $\rho$ by choosing $c$ sufficiently large, in which case $\lmargin(Tf,z) = 0$. This shows the existence of a pair $(f, z)$ such that \(\lmargin(Tf, z) \leq \hat{\ell}(f, z)\).

Turning to the inequality
\(\hat{\ell}(f', z') < \lmargin(Tf', z')\), suppose
we take \(a = 0.5\) and \(b = 0.6\) in the network~\eqref{eqnfExample} to define $f'$.
By equations~\eqref{EqnEllHat} and~\eqref{EqnEllMargin}, we then have
\[
\hat{\ell}(f', z')
=
\phi_{\rho}\left(66c(1 + a) - (1.2)66 \epsilon\right)
\]
and
\[
\lmargin(Tf', z')
=
\phi_{\rho}\left(66c(1 + a) - (1.5)66 \epsilon\right).
\]
Since \(\phi_{\rho}\) is monotonically decreasing, we have \(\hat{\ell}(f', z') \leq \lmargin(Tf', z)\), and we could make the inequality strict by choosing \(c\) such that
\begin{equation*}
0 \le 66c (1+a) - (1.2) 66\epsilon \le \rho.
\end{equation*}
This completes the proof.
\end{proof}

\section{Adversarial versus distributional robustness}
\label{appAdvDist}

In this appendix, we discuss the relationship between adversarial and distributional robustness.
Specifically, we see that Wasserstein distributional robustness of the kind usually considered is a stronger notion than adversarial robustness.

\subsection{Definitions}
\label{subsecDefs}

Let \(P\) and \(Q\) be probability measures over \(\reals^{d}\), and let \(\couplings(P, Q)\) denote the set of all couplings of \(P\) and \(Q\). In more detail, if \(P\) and \(Q\) are probability measures defined over the $\sigma$-field $\sfield$,
a probability measure \(\mu: \sfield \times \sfield \to [0, 1]\) is an element of \(\couplings(P, Q)\) if 
for any event \(A\) in \(\sfield\), we have \(\mu(A, \reals^{d}) = P(A)\) and \(\mu(\reals^{d}, A) = Q(A)\).
Given a metric \(\metric(\cdot, \cdot)\) on \(\reals^{d}\) and \(1 \le s \le \infty\), the Wasserstein distance is defined as
\begin{align*}
&\begin{aligned}
\wdist_{s}(P, Q)  
&=
\begin{cases}
\inf_{\mu \in \couplings(P, Q)} \expect_{(z, z') \sim \mu} \left[\metric(z, z')^{s}\right]^{\frac{1}{s}}, & s < \infty \\
\essup \metric(z, z'), & s = \infty,
\end{cases}
\end{aligned}
\end{align*}
where \(\essup f\) denotes the essential supremum of \(f\). We denote the set of distributions within an \(s\)-Wasserstein distance of \(P\) by
\begin{equation*}
\distset(\dist, \epsilon, s)
=
\left\{Q: W_{s}(\dist, Q) \leq \epsilon\right\}.
\end{equation*}
The goal in distributionally robust learning
is to control a worst-case risk of the form
\begin{equation}
\label{EqnDRO}
\sup_{Q \in \distset(\dist, \epsilon, s)} \expect_{z \sim Q}\left[\ell(f, z)\right],
\end{equation}
where we take \(\dist\) to be the true distribution in our discussion.

\subsection{Two simple relations}
\label{subsecDistRobust}


We now rigorously derive the fact that the distributionally robust risk upper-bounds the adversarial risk studied in this paper. Thus, adversarial robustness is a less stringent condition than Wasserstein distributional robustness, which is also reflected in the regularization terms appearing in our bounds. We start by showing an equivalence between adversarial robustness and distributional robustness in the case $s = \infty$.

\begin{lemma}
Let \(P\) be a distribution.
Suppose \(\ell\) is continuous or takes finitely many values.
Then
\[
\expect_{P} \left[\sup_{w \in B(\epsilon)} \ell(f, z + w)\right]
=
\sup_{Q \in \distset(P, \epsilon, \infty)} \expect_{z' \sim Q} \, \ell(f, z').
\]
\label{lemmaSkolemize}
\end{lemma}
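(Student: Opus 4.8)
The plan is to prove the two inequalities separately, after fixing the metric on $\zspace$ that makes the two sides comparable. Since perturbations act only on the covariate and never change the label (recall $z + w = (x+w,y)$), I take $\metric\big((x,y),(x',y')\big) = \|x - x'\|_{p}$ when $y = y'$ and $\metric = +\infty$ otherwise; with this choice, $\metric(z,z') \le \epsilon$ holds exactly when $z' = z + w$ for some $w \in B(\epsilon)$, so the $W_{\infty}$-ball around $\dist$ and the pointwise perturbation ball $B(\epsilon)$ describe the same moves. Write $g(z) := \sup_{w \in B(\epsilon)} \ell(f, z+w)$, so the left-hand side is $\expect_{\dist}[g(z)]$.

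For the direction $\sup_{Q} \expect_{Q}\ell \le \expect_{\dist}[g]$, fix any $Q$ with $\wdist_{\infty}(\dist, Q) \le \epsilon$ and choose a coupling $\mu \in \couplings(\dist, Q)$ with $\metric(z,z') \le \epsilon$ for $\mu$-almost every $(z,z')$. Such a $\mu$ exists because $\couplings(\dist,Q)$ is weakly compact and $\mu \mapsto \essup_{\mu}\metric$ is weakly lower semicontinuous (the lsc cost $\metric$ makes $\{\metric > \epsilon\}$ open, so portmanteau applies), hence the $W_{\infty}$ infimum is attained. Under $\mu$, almost surely $z' = z + w$ with $w \in B(\epsilon)$, so $\ell(f,z') \le g(z)$ pointwise; integrating and using that the first marginal of $\mu$ is $\dist$ gives $\expect_{z'\sim Q}\ell(f,z') = \expect_{\mu}\ell(f,z') \le \expect_{\mu} g(z) = \expect_{\dist} g$, and taking the supremum over $Q$ finishes this half.

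For the reverse direction $\expect_{\dist}[g] \le \sup_{Q}\expect_{Q}\ell$, I \emph{Skolemize} the supremum defining $g$: for fixed $\eta > 0$, I produce a \emph{measurable} map $w^{\star}: \zspace \to B(\epsilon)$ with $\ell(f, z + w^{\star}(z)) \ge g(z) - \eta$ for all $z$. Setting $\Phi(z) := z + w^{\star}(z)$ and $Q := \Phi_{\sharp}\dist$, the graph coupling $(\mathrm{id}, \Phi)_{\sharp}\dist$ satisfies $\metric(z, \Phi(z)) = \|w^{\star}(z)\|_{p} \le \epsilon$ everywhere, so $\wdist_{\infty}(\dist, Q) \le \epsilon$ and $Q$ is feasible; moreover $\expect_{z'\sim Q}\ell(f,z') = \expect_{\dist}\ell(f, z+w^{\star}(z)) \ge \expect_{\dist}[g] - \eta$, and letting $\eta \downarrow 0$ completes the argument.

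The crux — and the reason for the hypothesis that $\ell$ is continuous or takes finitely many values — is the existence of the measurable near-maximizer $w^{\star}$. When $w \mapsto \ell(f, z+w)$ is continuous and $B(\epsilon)$ is compact, $(z,w)\mapsto \ell(f,z+w)$ is a Carath\'eodory function and the argmax correspondence admits a measurable selection by the measurable maximum theorem, so one may even take $\eta = 0$. When $\ell$ takes finitely many values, the level sets $\{z : g(z) = v\}$ over the finitely many attained values $v$ partition $\zspace$ into measurable pieces, and on each piece one selects a $w$ realizing (within $\eta$) the value $v$; patching yields a measurable $w^{\star}$. I expect this measurable-selection step to be the only genuine obstacle: the two inequalities are routine once $w^{\star}$ and the optimal coupling are in hand.
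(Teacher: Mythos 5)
Your proposal is correct, and it follows the same two-inequality skeleton as the paper's proof, but the technical execution differs in both halves — generally to your advantage. For the inequality $\sup_{Q} \expect_{Q}\,\ell \le \expect_{P}\bigl[\sup_{w}\ell\bigr]$, the paper takes near-optimal couplings $\mu_{k}$ with $\essup \metric \le \epsilon + 1/k$ and extracts an ``almost surely convergent'' subsequence of the perturbations $w_{k}$ to form a limiting coupling $\mu_{\infty}$; as written this is delicate, since the $w_{k}$ live under different couplings and genuine almost-sure convergence would require a Skorokhod-type representation. You instead prove that the $\wdist_{\infty}$ infimum is attained (weak compactness of $\couplings(P,Q)$ via Prokhorov, plus weak lower semicontinuity of $\mu \mapsto \essup_{\mu}\metric$ from portmanteau applied to the open set $\{\metric > \epsilon\}$) and then integrate the pointwise bound $\ell(f,z') \le \sup_{w \in B(\epsilon)}\ell(f,z+w)$ under the optimal coupling; this is cleaner, avoids the limit argument entirely, and makes visible that this direction needs no continuity or finite-values hypothesis at all. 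For the reverse inequality, both you and the paper Skolemize by pushing $P$ forward along $z \mapsto z + w^{\star}(z)$: the paper merely asserts that a maximizing random variable exists, whereas you supply the justification it is silently relying on — the measurable maximum theorem for Carath\'eodory functions in the continuous case (so $\eta = 0$ works), and a partition-and-patch selection with $\eta$ slack in the finite-valued case (where, as a fine point, the level sets of the sup and the selections are a priori only analytic/universally measurable, which is enough for the pushforward and the integration). Your explicit choice of metric on $\zspace$, equal to $\|x-x'\|_{p}$ for matching labels and $+\infty$ otherwise, also pins down a detail the paper leaves implicit, namely that $\wdist_{\infty}$-feasible transports cannot flip labels.
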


\begin{proof}
Let \(\ell\) and \(f\) be given.
We start by proving that
\[
\expect_{P} \left[\sup_{w \in B(\epsilon)} \ell(f, z + w)\right]
\leq
\sup_{Q \in \distset(P, \epsilon)} \expect_{Q} \, \ell(f, z').
\]
Let \(w^{*}\) be a random variable maximizing the supremum on the left-hand side. Since \(\ell(f, \cdot)\) is either continuous or takes finitely many values and \(B(\epsilon)\) is compact, such a random variable exists.

Define \(Q\) such that when \(z' \sim Q\), we have \(z' = z + w^{*}\). Since
\[
\expect_{P} \left[\sup_{w \in B(\epsilon)} \ell(f, z + w)\right]
=
\expect_{Q} \, \ell(f, z'),
\]
we only need to prove that \(Q\) is in \(\distset(P, \epsilon)\). Since we have
\begin{align*}
& \begin{aligned}
\essup \metric(z, z')
&=
\essup \|x - x'\|_{p} 
=
\essup \|w^{*}\|_{p} 
\leq 
\epsilon,
\end{aligned}
\end{align*}
this completes the first direction.

We now prove the reverse inequality:
\[
\expect_{P} \left[\sup_{w \in B(\epsilon)} \ell(f, z + w)\right]
\geq
\sup_{Q \in \distset(P, \epsilon)} \expect_{Q} \, \ell(f, z').
\]
Let \(Q\) be an element of \(\distset(P, \epsilon)\).
Then we can find a sequence of couplings \(\{\mu_{k}\}_{k = 1}^{\infty}\) such that 
when \((z_{k}, z'_{k}) \sim \mu_{k}\), we have
\[
\max \|x'_{k} - x_{k}\|_p
=
\max \metric(z_{k}, z'_{k}) 
\leq 
\epsilon + \frac{1}{k}.
\]
Define \(w_{k} = x'_{k} - x_{k}\). Since all the \(w_{k}\)'s are elements of the compact ball \(B(\epsilon + 1)\), there is a subsequence  \(w_{k_{j}}\) that converges almost surely to some \(w_{\infty}\). Moreover, we see that \(\|w_{\infty}\|_{p} \leq \epsilon\), so \(w_{\infty}\) is always in \(B(\epsilon)\). Denote the limiting measure by \(\mu_{\infty}\). We then have
\begin{align*}
&\begin{aligned}
\expect_{Q} \ell(f, z')
&=
\expect_{\mu_{\infty}} \ell(f, z + w_{\infty}) 
\leq 
\expect_{\dist} \left[\sup_{w \in B(\epsilon)} \ell(f, z + w)\right].
\end{aligned}
\end{align*}
In particular, taking a supremum over $Q \in \distset(P,\epsilon)$ on the left-hand side proves the desired inequality.
\end{proof}


The second lemma simply states that the robust risk under the \(W_{\infty}\) distance is bounded by the robust risk under the \(W_{s}\) distance for \(s < \infty\).

\begin{lemma}
Let \(P\) be a distribution.
Then for any \(s\) in \([1, \infty]\), we have 
\[
\sup_{Q \in \distset(\dist, \epsilon, \infty)} \expect_{Q} \, \ell(f, z')
\leq 
\sup_{Q \in \distset(\dist, \epsilon, s)} \expect_{Q} \, \ell(f, z').
\]
\label{lemmaMCDistWass}
\end{lemma}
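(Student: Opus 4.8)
The plan is to reduce the claimed inequality between worst-case risks to a set inclusion between the two Wasserstein balls, namely $\distset(\dist, \epsilon, \infty) \subseteq \distset(\dist, \epsilon, s)$ for every $s \in [1, \infty]$. Once this inclusion is established, the lemma follows immediately: the supremum of the fixed objective $Q \mapsto \expect_{Q}\,\ell(f, z')$ over a smaller feasible set can only be smaller than the supremum over a larger set. So the entire content of the proof is the inclusion, which in turn rests on the monotonicity of the Wasserstein distance in the order parameter, i.e. $\wdist_{s}(\dist, Q) \leq \wdist_{\infty}(\dist, Q)$.

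The key step is therefore to prove $\wdist_{s}(\dist, Q) \leq \wdist_{\infty}(\dist, Q)$. First I would fix an arbitrary coupling $\mu \in \couplings(\dist, Q)$ and compare the two functionals evaluated at this single $\mu$. On the probability space underlying $\mu$, the nesting of $L^{p}$-norms (equivalently, Jensen's inequality applied to the concave map $t \mapsto t^{1/s}$ after bounding $\metric(z, z')^{s} \leq (\essup \metric)^{s}$ pointwise $\mu$-a.e.) gives
\[
\left(\expect_{(z, z') \sim \mu} \left[\metric(z, z')^{s}\right]\right)^{\frac{1}{s}}
\leq
\essup_{(z, z') \sim \mu} \metric(z, z').
\]
Taking the infimum over all couplings $\mu$ on both sides then yields $\wdist_{s}(\dist, Q) \leq \wdist_{\infty}(\dist, Q)$; concretely, the left-hand infimum is bounded above by the right-hand expression for each individual $\mu$, hence by the infimum of the right-hand side.

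With the distance comparison in hand, any $Q$ satisfying $\wdist_{\infty}(\dist, Q) \leq \epsilon$ automatically satisfies $\wdist_{s}(\dist, Q) \leq \epsilon$, establishing $\distset(\dist, \epsilon, \infty) \subseteq \distset(\dist, \epsilon, s)$ and thus the lemma. I do not anticipate a genuine obstacle here: the only point requiring care is the $s = \infty$ endpoint and the interpretation of $\wdist_{\infty}$ as an infimum over couplings of the essential supremum (so that the $L^{p}$-nesting argument is applied coupling-by-coupling before taking infima), together with a degenerate case check when $\wdist_{\infty}(\dist, Q) = \infty$, for which the inequality is vacuous.
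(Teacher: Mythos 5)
Your proposal is correct and follows essentially the same route as the paper: both reduce the lemma to the ball inclusion $\distset(\dist,\epsilon,\infty) \subseteq \distset(\dist,\epsilon,s)$ via the pointwise bound $\left(\expect_{\mu}\left[\metric(z,z')^{s}\right]\right)^{1/s} \leq \essup \metric(z,z')$ under a coupling. The only difference is organizational: you prove the clean monotonicity statement $\wdist_{s} \leq \wdist_{\infty}$ by taking infima over all couplings on both sides, whereas the paper proves the implication $\wdist_{\infty}(\dist,Q) \leq \epsilon \Rightarrow \wdist_{s}(\dist,Q) \leq \epsilon$ using a sequence of near-optimal couplings with $\essup \metric \leq \epsilon + \frac{1}{k}$ and a limiting argument — your version is marginally more direct.
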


\begin{proof}
It suffices to show that \(\wdist_{\infty}(P, Q) \leq \epsilon\) implies \(\wdist_{s}(P, Q) \leq \epsilon\). If \(s = \infty\), the inequality is trivial, so assume \(s < \infty\). If \(P\) and \(Q\) are distributions such that \(\wdist_{\infty}(P, Q) \leq \epsilon\), we can find a sequence of couplings \(\{\mu_{k}\}_{k = 1}^{\infty}\) such that,
for \((z_{k}, z'_{k}) \sim \mu_{k}\), we have
\[
\essup \metric(z_{k}, z'_{k})
\leq 
\epsilon + \frac{1}{k}.
\]
As a result, we have
\begin{align*}
& \begin{aligned}
\wdist_{s}(P, Q)
&\leq 
\expect_{\mu_{k}} \left[\metric(z_{k}, z'_{k})^{s}\right]^{\frac{1}{s}} 
\leq 
\essup \left[\metric(z_{k}, z'_{k})^{s} \right]^{\frac{1}{s}} 
\leq 
\epsilon + \frac{1}{k}.
\end{aligned}
\end{align*}
Taking the limit as \(k \to \infty\) shows that \(\wdist_s(P, Q) \leq \epsilon\), proving the lemma.
\end{proof}

\end{document}